\documentclass[nohyperref]{article}

\usepackage{microtype}
\usepackage{graphicx}
\usepackage{booktabs} 

\usepackage[accepted]{icml2022}
\usepackage{amsfonts}
\usepackage{amsmath}
\usepackage{amssymb}
\usepackage{amsthm}
\usepackage{mathtools}
\usepackage{enumerate}
\usepackage[font=small,labelfont=small]{caption}
\usepackage[font=footnotesize,labelfont=footnotesize]{subcaption}

\usepackage[utf8]{inputenc} 
\usepackage[T1]{fontenc}    
\usepackage{hyperref}       
\usepackage{url}            

\usepackage{nicefrac}       
\usepackage{bbm}
\usepackage{bm}
\usepackage{comment}

\usepackage{microtype}      
\usepackage{xcolor}         
\usepackage{amsmath,amssymb,framed, amsthm}
\usepackage{sidecap}

\DeclareMathOperator*{\argmax}{arg\,max}
\DeclareMathOperator*{\argmin}{arg\,min}
\DeclareMathOperator*{\bary}{Bary}
\DeclareMathOperator*{\E}{\mathbb{E}}
\DeclareMathOperator*{\V}{Var}

\DeclareMathOperator*{\id}{Id}
\DeclareMathOperator{\Tr}{Tr}
\DeclareMathOperator{\Pac}{\mathcal{P}_{2,ac}(\mathbb{R}^d)}
\DeclareMathOperator*{\hatlambda}{\hat{\lambda}}

\newtheorem{theorem}{Theorem}
\newtheorem{corollary}{Corollary}

\newtheorem{proposition}{Proposition}
\theoremstyle{definition}

\newtheorem{definition}{Definition}

\newif\ifcomments
\commentstrue

\ifcomments
    \usepackage{ulem}
    \def\mwedit#1{{$\!$\color{magenta} [MW: #1]}}
    
\else 
    \def\mwedit#1{}
    
\fi

\ifcomments
    \usepackage{ulem}
    \def\saedit#1{{$\!$\color{blue} [SA: #1]}}
    
\else 
    \def\saedit#1{}
    
\fi

\ifcomments
    \usepackage{ulem}
    \def\jmedit#1{{$\!$\color{brown} [JM: #1]}}
    
\else 
    \def\jmedit#1{}
    
\fi

\normalem 

%
%





\begin{document}

%

%

\twocolumn[

\icmltitle{Measure Estimation in the Barycentric Coding Model}

\begin{icmlauthorlist}

\icmlauthor{Matthew Werenski}{CS}
\icmlauthor{Ruijie Jiang}{ECE}
\icmlauthor{Abiy Tasissa}{math}
\icmlauthor{Shuchin Aeron}{ECE}
\icmlauthor{James M. Murphy}{math}
\end{icmlauthorlist}

\icmlaffiliation{CS}{Department of Computer Science, Tufts University } 
\icmlaffiliation{math}{Department of Mathematics, Tufts University } 
\icmlaffiliation{ECE}{Department of Electrical and Computer Engineering, Tufts University }

\icmlcorrespondingauthor{Matthew Werenski}{matthew.werenski@tufts.edu}

\icmlkeywords{...}

\vskip 0.3in
]

\printAffiliationsAndNotice

\begin{abstract}  This paper considers the problem of measure estimation under the \emph{barycentric coding model (BCM)}, in which an unknown measure is assumed to belong to the set of Wasserstein-2 barycenters of a finite set of known measures. Estimating a measure under this model is equivalent to estimating the unknown barycentric coordinates.  We provide novel geometrical, statistical, and computational insights for measure estimation under the BCM, consisting of three main results.  Our first main result leverages the Riemannian geometry of Wasserstein-2 space to provide a \emph{procedure for recovering the barycentric coordinates as the solution to a quadratic optimization problem} assuming access to the true reference measures.  The essential geometric insight is that the parameters of this quadratic problem are determined by inner products between the optimal displacement maps from the given measure to the reference measures defining the BCM.  Our second main result then establishes an \emph{algorithm for solving for the coordinates in the BCM when all the measures are observed empirically via i.i.d. samples}.  We prove precise \emph{rates of convergence} for this algorithm---determined by the smoothness of the underlying measures and their dimensionality---thereby guaranteeing its statistical consistency.  Finally, we demonstrate the utility of the BCM and associated estimation procedures in three application areas: (i) \emph{covariance estimation} for Gaussian measures; (ii) \emph{image processing}; and (iii) \emph{natural language processing}.
\end{abstract}

\section{Introduction}

A number of recent machine learning applications including computer vision \cite{schmitz2018wasserstein, bonneel2016coordinates}, domain adaptation \cite{montesuma2021wasserstein, redko2019optimal}, natural language processing (NLP) \cite{singh2020context, colombo2021automatic, xu2018distilled}, and unsupervised segmentation of multivariate time series data \cite{cheng2021dynamical} have shown the utility of representing and modeling high-dimensional data as probability distributions.  The essential insight in these applications is to utilize the Riemannian geometry of the space of probability distributions induced by the Wasserstein-2 metric and the Wasserstein-2 barycenters as alternatives to the Euclidean distance and linear combinations, respectively.  

In this context, the methods that use Wasserstein barycenters for data modeling and inference involve solving two core problems which we term the \emph{synthesis problem} and \emph{analysis problem}. To be precise, let $\Pac$ denote the space of absolutely continuous distributions on $\mathbb{R}^d$ with finite second moment.  For $\mu, \nu \in \Pac$,
\begin{equation} \label{eq:wass_dist}
    W_2^2(\mu,\nu) \triangleq \min_{T\#\mu = \nu} \int_{\mathbb{R}^{d}} ||T(x) - x||_2^2 d\mu(x)
\end{equation}
defines the \emph{Wasserstein-2 metric} where the minimum is over all measurable maps $T:\mathbb{R}^d \rightarrow \mathbb{R}^d$ and the \emph{pushforward} $T\#\mu = \nu$ is such that for all Borel sets $B$ we have $\nu[B] = \mu[T^{-1}(B)]$  \citep{villani2003topics, santambrogio2015optimal}. Under the assumption $\mu,\nu\in\Pac$, unique minimizing $T$ are known to exist and the resulting $W_{2}$ distance is finite \citep{villani2003topics}.  We refer to the space $\Pac$ equipped with the metric $W_{2}$ as the \emph{Wasserstein-2 space}, denoted $\mathcal{W}_{2}$.

Let $\Delta^p = \left \{\lambda=(\lambda_{1},\dots,\lambda_{p}) \in \mathbb{R}^p : \lambda_{i} \geq 0, \sum_{i=1}^p \lambda_i = 1 \right \}$ and let $\{\mu_{i}\}_{i=1}^{p}$ be known \emph{reference measures}. The \emph{$W_{2}$ barycenter} for coordinates $\lambda\in\Delta^{p}$ with respect to $\{\mu_{i}\}_{i=1}^{p}$ is defined as
\begin{equation} \label{eq:barycenter}
    \nu_{\lambda} \triangleq \argmin_{\nu\in \Pac} \frac{1}{2}\sum_{i=1}^p \lambda_i W_2^2(\nu, \mu_i).
\end{equation}
The measure $\nu_{\lambda}$ is an advection of $\{\mu_i\}_{i=1}^{p}$ and blends their geometric features; it may be thought of as a ``displacement interpolation" \citep{mccann1997convexity, villani2003topics}.  In contrast, the linear mixture $\sum_{i=1}^{p} \lambda_i \mu_i$ simply sets $\lambda_{i}$ to be the rate we draw samples from $\mu_i$.  Figure \ref{fig:bc_vs_lin} demonstrates the important difference between linear mixture models and Wasserstein barycenters on Gaussian distributions: the linear mixture is a standard Gaussian mixture model \citep{murphy2012machine}, while the Wasserstein barycenter is itself Gaussian. The important takeaway is that Wasserstein barycenters offer a more geometrically meaningful interpolation between measures compared to mixtures, and as indicated above have been successfully leveraged in a number of applications.

Now, let $\bary(\{\mu_i\}_{i=1}^p) \triangleq \left \{ \nu_{\lambda}: \lambda \in \Delta^p \right \}$ denote the set of all of possible barycenters of $\{\mu_i\}_{i=1}^p$. The \emph{synthesis} problem (\ref{eq:barycenter}) seeks to combine the reference measures $\{\mu_{i}\}_{i=1}^{p}$ in a notion of weighted average, with relative contributions determined by $\lambda$.  This problem is well-studied \citep{mccann1997convexity, agueh2011barycenters, kroshnin2019complexity} and known to have a unique solution under mild assumptions on the $\{\mu_{i}\}_{i=1}^{p}$.  Moreover, there exist fast algorithms for computing or approximating $\nu_{\lambda}$ when samples from the $\{\mu_{i}\}_{i=1}^{p}$ are available \citep{rabin2011wasserstein,cuturi2014fast,bonneel2015sliced, claici2018stochastic, yang2021fast}, which have spurred the application of Wasserstein barycenters to machine learning problems.

In this paper, our focus is on the less well-studied \emph{analysis} problem: given $\mu_0$ and reference measures $\{\mu_{i}\}_{i=1}^{p}$, solve
\begin{equation} \label{eq:analysis}
    \argmin_{\lambda\,\in\Delta^{p}} W_2^2\left ( \mu_0, \nu_{\lambda} \right ).
\end{equation}
Instead of \emph{combining} the measures $\{\mu_{i}\}_{i=1}^{p}$ according to the weights $\lambda\in\Delta^{p}$ as in (\ref{eq:barycenter}), the analysis problem (\ref{eq:analysis}) \emph{decomposes} a given measure $\mu_{0}$ into its optimal representation in the set $\bary(\{\mu_{i}\}_{i=1}^{p})$ as parameterized by the learned barycentric coordinates $\lambda\in\Delta^{p}$.  If the measure $\mu_0\in\bary(\{\mu_i\}_{i=1}^p)$, then $\min_{\lambda\,\in\Delta^{p}} W_2^2 ( \mu_0, \nu_{\lambda})=0$ and solving (\ref{eq:analysis}) is equivalent to learning the barycentric coordinates $\lambda\in\Delta^{p}$ such that $\nu_{\lambda}=\mu_{0}$. We refer to this general model of parameterizing measures $\mu_0 \in \bary(\{\mu_i\}_{i=1}^p)$ through the associated coordinates $\lambda\in\Delta^{p}$ as the \emph{barycentric coding model} (BCM).

\begin{figure}[t!]
    \centering
    \includegraphics[width=\linewidth]{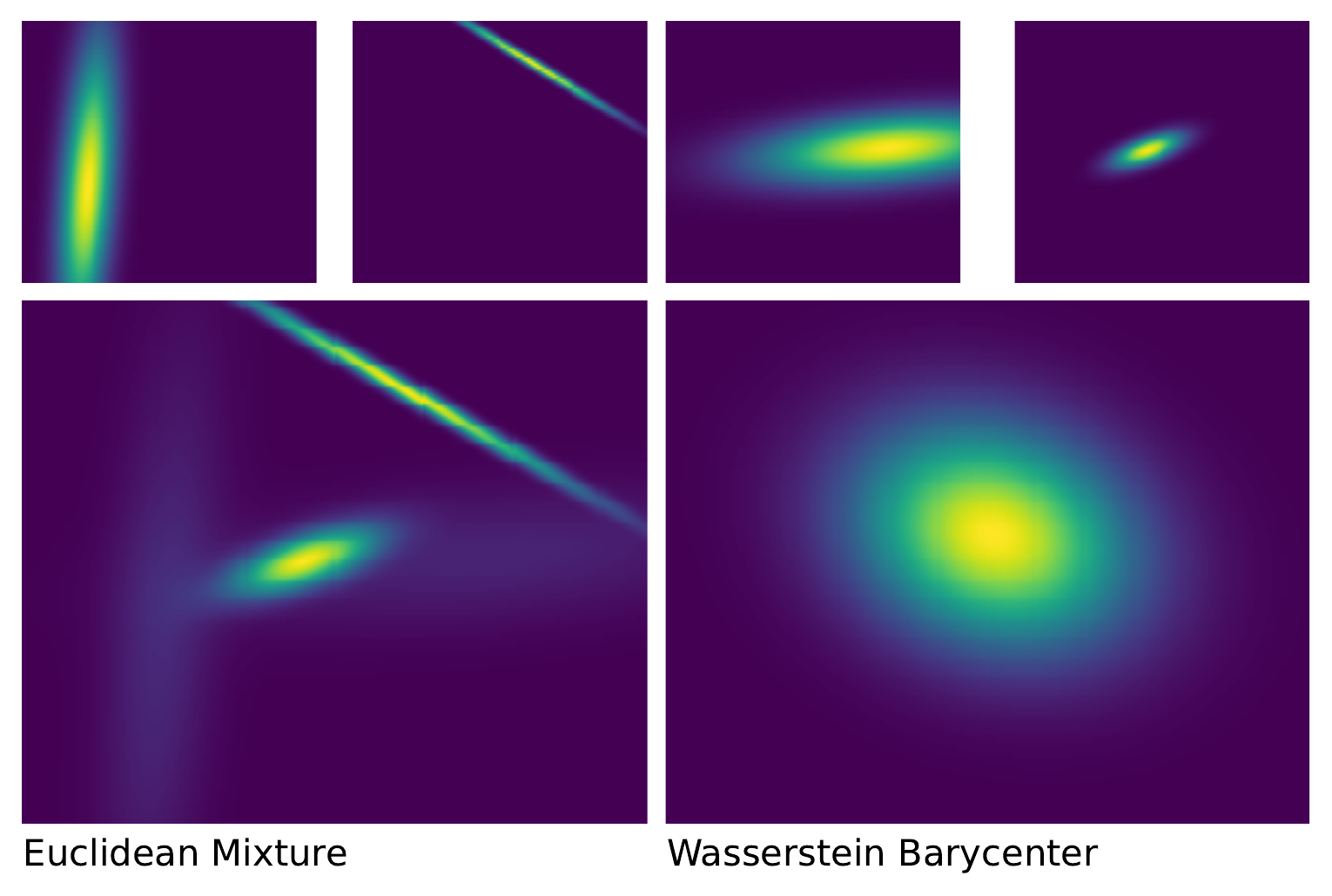}
    \caption{\emph{Top four}: reference Gaussians. \emph{Bottom left}: linear mixture with $\lambda=(\frac{1}{4},\frac{1}{4},\frac{1}{4},\frac{1}{4} )$. \emph{Bottom right}: Wasserstein barycenter with same $\lambda$. (Intensities adjusted to show detail.)}
    \label{fig:bc_vs_lin}
\end{figure}

\vspace{10pt}
\textbf{Main Contributions:} This paper addresses the analysis problem (\ref{eq:analysis}) as a measure estimation problem under the BCM. For this general problem, we make the following geometrical, statistical, and computational contributions:

\begin{enumerate}[(1)]
\setlength \itemsep{-3pt}
    \item When all the measures are directly observed, Theorem \ref{thm:main} provides a \emph{geometric} characterization of the solution to \eqref{eq:analysis} as corresponding to a solution to a quadratic program.  Our analysis leverages the Riemannian geometry of $\mathcal{W}_{2}$ space and establishes that the quadratic program is determined by the angles between the optimal displacement maps between the measures.  When specialized to the case of Gaussian measures, Corollary \ref{cor:gaussian} provides a closed-form characterization of the solution to the analysis problem.
    
    \item When the measures are indirectly observed via i.i.d. samples, we provide an algorithm for estimating the barycentric coordinates $\lambda$ of a given measure with respect to the reference measures.  Theorem \ref{thm:convergence} gives precise rates of convergence of the angles between the optimal displacement map by a computationally tractable estimator that exploits entropic regularization, thereby establishing the statistical consistency of our approach in Corollary \ref{cor:consistency}.
    
    \item We showcase the utility of the BCM and the proposed algorithm on several data analysis problems: covariance estimation for Gaussian measures; image inpainting and denoising; and document classification from limited labeled data. We show that the BCM provides a simple yet effective approach that outperforms competing methods in the regimes considered.\footnote{Code to reproduce results is available at \url{https://github.com/MattWerenski/BCM}} 
\end{enumerate}

\subsection{Related Work}

The synthesis problem (\ref{eq:barycenter}) is well-studied.  \citep{agueh2011barycenters} propose $W_{2}$ barycenters for a finite set of reference measures $\{\mu_{i}\}_{i=1}^{p}$ and prove their existence and uniqueness in a very general setting.  \citep{le2017existence} extend this to barycenters constructed from infinitely many reference measures. Statistical rates of estimation in the setting of i.i.d. samples from the measures $\{\mu_{i}\}_{i=1}^{p}$ are established in \cite{kroshnin2019complexity}.  Building on fundamental advances in estimating transport maps between measures using entropic regularization \cite{cuturi2013sinkhorn}, fast estimation methods for solving the synthesis problem have been developed for entroptically regularized barycenters \cite{cuturi2014fast}.  Approaches to computing a barycenter based on alternating direction of multipliers on the dual problem \cite{yang2021fast} and parallelization schemes \cite{claici2018stochastic} have also been considered.

The literature on the analysis problem (\ref{eq:analysis}) is much sparser.  To the best of our knowledge, our geometrical characterization of solutions to the analysis problem (Theorem \ref{thm:main}) and sample complexity results for estimating the optimal coordinates $\lambda$ (Theorem \ref{thm:convergence}, Corollary \ref{cor:consistency}) are the first of their kind.  Note that our approach extends \citep{bonneel2016coordinates, schmitz2018wasserstein}---which consider measure estimation under the BCM but only for measures with finite support---in three fundamental ways.  First, we enable the application of the BCM to absolutely continuous measures.  Second, we prove precise theoretical characterizations of the solutions to (\ref{eq:analysis}).  Third, we avoid the expensive procedure of differentiating through an iterative algorithm as required in \citep{bonneel2016coordinates} and instead provide the direct Algorithm \ref{alg1} that enjoys statistical consistency (Corollary \ref{cor:consistency}).  At the level of technical analysis, our Theorem \ref{thm:main} relies on an alternative characterization of $W_{2}$ barycenters that leverages known conditions under which Karcher means are guaranteed to be $W_{2}$ barycenters  \citep{panaretos2020invitation}.  

For the special case when all the measures are zero-mean Gaussians, the \emph{synthesis} problem is again well-studied, with a fixed-point algorithm for computing the barycenter given in \citep{alvarez2016fixed} that is shown to have a linear convergence rate in \citep{chewi2020gradient, altschuler2021averaging}.  The corresponding \emph{analysis} problem for the specific case of Gaussian measures has only been recently studied in \cite{musolas2021geodesically} but limited to the case when $p=2$. When specialized to this case, our results extend the model considered in \cite{musolas2021geodesically} to the $p \geq 3$ case and characterize solutions to barycenter parameterized covariance estimation.

In Section \ref{sec:emp_res}, we use the proposed BCM framework to model and address several problems in image processing and natural language processing. In this context our work is related to a long lineage of methods in signal processing and machine learning that use coefficients in a basis or a dictionary for data modeling and processing \citep{mallat1999wavelet, donoho2006compressed, tovsic2011dictionary}.  Our approach of representing a general measure in terms of its barycentric coordinates with respect to a fixed set of reference measures is a novel approach in this tradition, and our results in Section \ref{sec:emp_res} suggest its utility for data modeled as probability measures.

\section{Convex Optimization for the Analysis Problem}\label{sec:Theory}

To solve the analysis problem (\ref{eq:analysis}), we propose to solve a convex optimization problem based on an alternative characterization of $W_{2}$-barycenters as minimizers of a certain functional. We leverage this alternative characterization to show in Theorem \ref{thm:main} that, under mild assumptions on $\{\mu_{i}\}_{i=1}^{p}$, one can use the optimal value of a convex program to both check if $\mu_{0}\in\bary(\{\mu_{i}\})_{i=1}^{p}$ and if so, recover its barycentric coordinate. 
Before proceeding, we must introduce some background on the Riemannian geometry of the $\mathcal{W}_{2}$ space.  The geodesic curve from $\nu_0$ to $\nu_1$ in $\mathcal{W}_{2}$ is given by $[\nu_t]_{t=0}^1$ where $\nu_t = (tT + (1-t)\id)\#\nu_0$ where $T$ is the optimal transport map from $\nu_0$ to $\nu_1$ in (\ref{eq:wass_dist}) \cite{santambrogio2015optimal}.  These facts together with Brenier's Theorem (see Supplementary Material Section \ref{SM:PreciseStatements}) motivate the definition of the tangent space at $\nu$: \begin{align*}
    &T_\nu\Pac \\
\triangleq&\overline{\{\beta (\nabla \varphi - \id): \beta > 0, \varphi \in C_c^\infty(\mathbb{R}^d), \varphi \text{ convex}\}},
\end{align*} where the closure is in $L^{2}(\nu)$ \citep{ambrosio2005gradient}.  For $u,v \in T_\nu\Pac$, the inner product is defined by $\langle u, v \rangle_\nu \triangleq \int_{\mathbb{R}^d} \langle u(x), v(x) \rangle d\nu(x),$
which also gives the standard norm of $u \in T_\nu\Pac$, $||u||_\nu \triangleq \sqrt{\langle u, u\rangle_{\nu}}$.

\subsection{Alternative Characterization of Barycenters} 

Given the  reference measures $\{\mu_i\}_{i=1}^p$ and barycentric coordinates $\lambda \in \Delta^p$, the \emph{variance functional} is $G_{\lambda} : \Pac \rightarrow \mathbb{R}$ defined by $$\displaystyle G_{\lambda}(\nu) \triangleq \sum_{i=1}^p \frac{\lambda_i}{2} W_2^2(\nu, \mu_i),$$ so that the barycenter for $\{\mu_i\}_{i=1}^p$ and $\lambda \in \Delta^p$ is
$$\nu_{\lambda} =\displaystyle\argmin_{\nu \in \Pac} G_{\lambda}(\nu).$$

In many unconstrained optimization problems, the most direct way of checking if a point is optimal is to use first-order optimality conditions: compute the gradient at the point and check if it is zero. Without further assumptions this is a necessary, but not sufficient, condition for optimality. Applying this to a general variance functional, one arrives at the definition of a \emph{Karcher mean} \citep{karcher1977riemannian}.
\begin{definition}
    Let $(X,\rho)$ be a metric space furnished with a tangent space at every $x \in X$. For a collection of points $\{x_i\}_{i=1}^p \subset X$ and coordinates $\lambda \in \Delta^p$, a point $y\in X$ is said to be a \emph{Karcher mean} of the set for $\lambda$ if $G_{\lambda} = \sum_{i=1}^p \lambda_i \rho^2(\cdot, x_i)$ is differentiable at $y$ and $||\nabla G_{\lambda}(y)||_y = 0$ where $\nabla G_{\lambda}(y)$ is understood as an element of the tangent space at $y$.
\end{definition}

\begin{figure}
    \centering
    \includegraphics[width=0.35\textwidth]{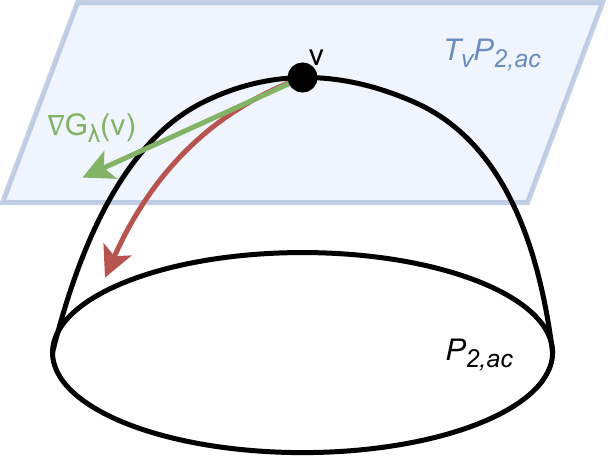}
    \caption{Visualization of the tangent space $T_{\nu}\Pac$ and $\nabla G_{\lambda}$. The red arc is the path along which $G_{\lambda}$ is maximized, starting at $\nu$ and the green arrow points in that direction in the tangent space at $\nu$.}
    \label{fig:tangent_space}
\end{figure}

Any barycenter $y$ is a global minimizer of $G_{\lambda}$. Using first order optimality conditions, this implies $||\nabla G_{\lambda}(y)||_y = 0$, and therefore $y$ must be a Karcher mean. Special care must be made to ensure that being a Karcher mean is sufficient to be a barycenter. When it is sufficient, it immediately gives an alternative test for $y$ to be a barycenter for $\lambda$: simply check if $||\nabla G_{\lambda}(y)||_y = 0$.

An example of a Karcher mean which is not a barycenter is given in the Supplementary Material Section \ref{sec:karcher_not_bc}. To ensure that being a Karcher mean is sufficient to be a barycenter we require the following assumptions:

\textbf{A1:}\,\,The measures $\{\mu_i\}_{i=0}^p$ are absolutely continuous and supported on either all of $\mathbb{R}^d$ or a bounded open convex subset. Call this shared support set $\Omega$.

\textbf{A2:}\,\,The measures $\{\mu_i\}_{i=0}^p$ have respective densities $\{g_i\}_{i=0}^p$ which are bounded above and $g_1,...,g_p$ are strictly positive on $\Omega$.

\textbf{A3:}\,\,If $\Omega = \mathbb{R}^d$ then $\{g_i\}_{i=0}^p$ are locally H\"older continuous. Otherwise $\{g_i\}_{i=0}^p$ are bounded away from zero on $\Omega$.

Under these assumptions if $\mu_0$ is a Karcher mean with coordinates $\lambda$, then $\mu_0 = \nu_{\lambda}$ \citep{panaretos2020invitation}; see the Supplementary Material Section \ref{SM:PreciseStatements} for a precise statement.

This characterization of barycenters can be much easier to work with and the assumptions required are mild in practice.  To accompany the notion of a Karcher mean, the \emph{Fr\'{e}chet derivative} \citep{frechet1948elements} of the variance functional $G_{\lambda}$ is given by
\begin{equation} \label{eq:grad}
    \nabla G_{\lambda}(\nu) = -\sum_{i=1}^p \lambda_i \left ( T_i - \id \right ),
\end{equation} where $T_i$ is the optimal transport map from $\nu$ to $\mu_i$ as in (\ref{eq:wass_dist}) \citep{ambrosio2005gradient, panaretos2020invitation}.

We now have all the tools to begin building up to Theorem \ref{thm:main}. The first step is to derive a formula for $||\nabla G_{\lambda}(\mu_0)||_{\mu_0}^2$ in terms of $\lambda$ and show its convexity. 
\\
\\
\begin{proposition}\label{prop:Prop1}
    Let $\{\mu_i\}_{i=0}^p \subset \Pac$. Then 
    $||\nabla G_{\lambda}(\mu_0)||_{\mu_0}^2 = \lambda^TA\lambda$ where $A \in \mathbb{R}^{p \times p}$ is given by 
    \begin{equation} \label{eq:A_ij}
        A_{ij} = \int_{\mathbb{R}^d} \langle T_i(x) - \id(x), T_j(x) - \id(x) \rangle d\mu_0(x)
    \end{equation}
    and $T_i$ is the optimal transport map from $\mu_0$ to $\mu_i$ as in (\ref{eq:wass_dist}). Furthermore, this is a convex function in $\lambda$.
\end{proposition}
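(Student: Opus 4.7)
The plan is to apply the Fréchet derivative formula (\ref{eq:grad}) directly and expand the inner product, then observe that the resulting matrix is a Gram matrix to conclude convexity.

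First I would substitute the expression $\nabla G_\lambda(\mu_0) = -\sum_{i=1}^p \lambda_i(T_i - \id)$ into the definition of the tangent-space norm at $\mu_0$, using the formula $\|u\|_{\mu_0}^2 = \int_{\mathbb{R}^d} \langle u(x), u(x)\rangle\, d\mu_0(x)$. The minus sign squares away, and bilinearity of the inner product under the integral pulls the double sum outside, yielding
\[
\|\nabla G_\lambda(\mu_0)\|_{\mu_0}^2 = \sum_{i=1}^p\sum_{j=1}^p \lambda_i \lambda_j \int_{\mathbb{R}^d} \langle T_i(x) - x, T_j(x) - x\rangle\, d\mu_0(x).
\]
The inner integral is exactly the entry $A_{ij}$ defined in (\ref{eq:A_ij}), so the right-hand side is $\lambda^T A \lambda$, giving the first claim. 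One small technical point to mention is that the $T_i$ are well-defined (and in $L^2(\mu_0)$) because $\mu_0\in\Pac$, so Brenier's theorem applies and the integrand is integrable; this justifies the use of the Fréchet derivative formula and the finiteness of $A_{ij}$.

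For convexity, I would observe that $A$ is the Gram matrix of the vectors $v_i := T_i - \id \in T_{\mu_0}\Pac \subset L^2(\mu_0;\mathbb{R}^d)$ with respect to the inner product $\langle \cdot, \cdot\rangle_{\mu_0}$. In particular, for any $\lambda \in \mathbb{R}^p$,
\[
\lambda^T A \lambda = \Big\|\sum_{i=1}^p \lambda_i v_i\Big\|_{\mu_0}^2 \geq 0,
\]
so $A$ is positive semidefinite. A quadratic form $\lambda \mapsto \lambda^T A \lambda$ with $A\succeq 0$ is convex (its Hessian is $2A \succeq 0$), completing the second claim.

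I do not anticipate a serious obstacle: the proof is essentially a bookkeeping exercise combining (\ref{eq:grad}) with the bilinearity of $\langle\cdot,\cdot\rangle_{\mu_0}$. The only subtlety worth flagging is ensuring that $T_i - \id$ genuinely lies in $T_{\mu_0}\Pac$ (so that the tangent-space norm is the right object to compute), which follows from absolute continuity of $\mu_0$ and the definition of the tangent space via $L^2(\mu_0)$-closure of gradients of smooth convex functions.
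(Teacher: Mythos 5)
Your proof is correct and follows essentially the same route as the paper's: substitute the Fréchet derivative formula \eqref{eq:grad}, expand the squared tangent-space norm bilinearly to obtain $\lambda^T A \lambda$, and conclude convexity by recognizing $A$ as the Gram matrix of $\{T_i - \id\}_{i=1}^p$ in $T_{\mu_0}\Pac$. Your added remarks on the well-definedness of the $T_i$ via Brenier's theorem and the integrability of the entries are a harmless (and welcome) bit of extra care.
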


The proof is given in the Supplementary Material Section \ref{SM:Prop1Proof}.  The expression $T_i(x) - \id(x)$ is the displacement of the point $x$ when being transported from $\mu_0$ to $\mu_i$. Under this interpretation, $\langle T_i(x) - \id, T_j(x) - \id \rangle$ can be thought of as the angle between the displacement associated to the optimal transport map between $\mu_0$ to $\mu_i$ with that of $\mu_0$ to $\mu_j$. Integrating this quantity with respect to $\mu_0$ therefore quantifies the average angle between displacements and if the optimal displacement maps move in similar directions.

We now use this functional form as part of a convex program and show that it can be used to check if a $\mu_0$ is a Karcher mean. Our first main result ties Propositions \ref{prop:Prop1} to the conditions for equivalence between barycenters and Karcher means.
\begin{theorem} \label{thm:main}
Let $\{\mu_{i}\}_{i=0}^{p}$ satisfy \textbf{A1}-\textbf{A3}. Then $\mu_0\in \bary(\{\mu_{i}\}_{i=1}^{p})$ if and only if 
\[\min_{\lambda\in\Delta^{p}} \lambda^TA\lambda = 0\]
where $A \in \mathbb{R}^{p \times p}$ is given by (\ref{eq:A_ij}). Furthermore, if the minimum value is 0 and $\lambda_*$ is an optimal argument, then $\mu_0 = \nu_{\lambda_*}$.
\end{theorem}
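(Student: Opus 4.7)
The plan is to assemble the pieces already set up: Proposition~\ref{prop:Prop1} expresses $\|\nabla G_\lambda(\mu_0)\|_{\mu_0}^2$ as the quadratic form $\lambda^T A \lambda$, and the Panaretos--Zemel result guarantees that under \textbf{A1}--\textbf{A3} a Karcher mean of $\{\mu_i\}_{i=1}^p$ with coordinates $\lambda$ coincides with the barycenter $\nu_\lambda$. The theorem is essentially a two-line translation of this equivalence into the language of the quadratic program, after noting that $A$ is positive semidefinite.

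First I would record that $A \succeq 0$: for any $\lambda \in \mathbb{R}^p$, Proposition~\ref{prop:Prop1} gives $\lambda^T A \lambda = \|\nabla G_\lambda(\mu_0)\|_{\mu_0}^2 \geq 0$. Consequently $\min_{\lambda \in \Delta^p} \lambda^T A \lambda \geq 0$, and equality holds at some $\lambda_* \in \Delta^p$ if and only if $\nabla G_{\lambda_*}(\mu_0) = 0$ in $T_{\mu_0}\Pac$, i.e.\ $\mu_0$ is a Karcher mean of $\{\mu_i\}_{i=1}^p$ with coordinates $\lambda_*$.

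For the forward implication, suppose $\mu_0 \in \bary(\{\mu_i\}_{i=1}^p)$, so $\mu_0 = \nu_{\lambda_*}$ for some $\lambda_* \in \Delta^p$. Since $\mu_0$ is then a global minimizer of the variance functional $G_{\lambda_*}$ on $\Pac$, the first-order optimality condition on $\mathcal{W}_2$ (using the Fréchet derivative formula \eqref{eq:grad}) yields $\|\nabla G_{\lambda_*}(\mu_0)\|_{\mu_0} = 0$, hence $\lambda_*^T A \lambda_* = 0$, giving $\min_{\lambda\in\Delta^p} \lambda^T A \lambda = 0$. For the reverse implication, suppose the minimum equals $0$ and is attained at $\lambda_*$. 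Then $\|\nabla G_{\lambda_*}(\mu_0)\|_{\mu_0}^2 = 0$, so $\mu_0$ is a Karcher mean with coordinates $\lambda_*$. Invoking the Panaretos--Zemel sufficiency result cited just before Proposition~\ref{prop:Prop1} (which applies precisely because \textbf{A1}--\textbf{A3} hold), we conclude $\mu_0 = \nu_{\lambda_*} \in \bary(\{\mu_i\}_{i=1}^p)$, which also establishes the final sentence of the theorem.

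The main technical point — and in my view the only place to be careful — is the implication ``Karcher mean $\Rightarrow$ barycenter,'' which is not true for generic Riemannian barycenter problems and uses the specific geometry of $\mathcal{W}_2$ together with \textbf{A1}--\textbf{A3}; this is exactly the content imported from \citep{panaretos2020invitation}. Everything else reduces to the PSD-ness of $A$, Proposition~\ref{prop:Prop1}, and a standard first-order optimality argument on the Riemannian manifold $\mathcal{W}_2$, and requires no further calculation.
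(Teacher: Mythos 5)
Your proposal is correct and follows essentially the same route as the paper's proof: both reduce the statement to ``$\min_{\lambda\in\Delta^p}\lambda^TA\lambda=0$ iff $\mu_0$ is a Karcher mean'' via Proposition~\ref{prop:Prop1}, use first-order optimality for the direction ``barycenter $\Rightarrow$ Karcher mean,'' and invoke the Panaretos--Zemel sufficiency result under \textbf{A1}--\textbf{A3} for the converse. Your explicit remark that $A\succeq 0$ (so the minimum is nonnegative and vanishes exactly at a Karcher mean) is a slightly more careful framing of a step the paper leaves implicit, but the argument is the same.
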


The proof is given in the Supplementary Material Section \ref{SM:Cor1Proof}. Theorem \ref{thm:main} gives a geometric characterization of the solution to (\ref{eq:analysis}) which relies on the evaluation of inner-products in the tangent space $T_{\mu_0}\Pac$ (noting that $A_{ij} = \langle T_i - \id, T_j - \id \rangle_{\mu_0}$) and solving a constrained quadratic program.  Furthermore, this result suggests a method for finding an approximator of the $\mu_0$ in the set of barycenters: simply minimize the objective and use $\lambda_*$, even if the minimum value is not zero. 

Note that if $A$ has rank less than $p-1$ there may be multiple minimizers which correspond to redundancies in the set $\bary(\{\mu_i\}_{i=1}^{p})$; see Supplementary Material Section \ref{sec:rank_def}.

\subsection{Theorem \ref{thm:main} For Gaussians}\label{subsec:SpecialCases}

In the Gaussian case there are formulas for both the optimal transport maps and the integration involved in $A_{ij}$. Let $\mathbb{S}_{++}^d$ denote the set of symmetric positive definite $d \times d$ matrices. Let $\mu_i = \mathcal{N}(0, S_i)$ with $S_i \in \mathbb{S}_{++}^d$ for $i=0,\dots,p$. Then the optimal transport map from $\mu_0$ to $\mu_i$ is given by $T_i(x) = C_ix$ where
\begin{equation} \label{eq:map_mat}
    C_i = S_0^{-1/2}\left ( S_0^{1/2} S_i S_0^{1/2} \right )^{1/2} S_0^{-1/2}
\end{equation}
\cite{agueh2011barycenters}. Furthermore, \textbf{A1}-\textbf{A3} are satisfied for non-degenerate Gaussians. Combining Theorem \ref{thm:main} with (\ref{eq:map_mat} gives the following.
\begin{corollary}
     \label{cor:gaussian}
    For $i=1,\dots,p$, let $\mu_i = \mathcal{N}(0,S_i)$ with $S_i\in\mathbb{S}_{++}^d$. Then $\mu_0$ is a barycenter if and only if $\mu_0 = \mathcal{N}(0,S_0)$ for some $S_0\in\mathbb{S}_{++}^d$, and the convex program of Theorem \ref{thm:main} has minimum value 0, where the matrix $A$ is given by $A_{ij} = \Tr\left ( (C_i - I)(C_j - I)S_0 \right ).$
    Furthermore, if the minimum value is zero and $\lambda_*$ is a optimal argument, then $\mu_0 = \nu_{\lambda_*}$
\end{corollary}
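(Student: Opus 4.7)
My plan is to verify that the Gaussian setting meets the hypotheses of Theorem \ref{thm:main}, identify the form that any barycenter of zero-mean Gaussians must take, and then evaluate the integrals defining $A_{ij}$ in closed form.

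First I would check that assumptions \textbf{A1}--\textbf{A3} hold for any collection of zero-mean non-degenerate Gaussians on $\mathbb{R}^d$. Each $\mathcal{N}(0,S_i)$ is absolutely continuous with density supported on all of $\Omega = \mathbb{R}^d$ and bounded above by $(2\pi)^{-d/2}\det(S_i)^{-1/2}$, strictly positive everywhere, and smooth (hence locally H\"older continuous), so \textbf{A1}--\textbf{A3} are satisfied as soon as $\mu_0$ is also such a Gaussian. This allows me to invoke Theorem \ref{thm:main}.

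Next I would justify that \emph{being a barycenter forces $\mu_0$ to have the claimed form}. Here I would cite the classical result of \citet{agueh2011barycenters} (also noted in Section \ref{subsec:SpecialCases}): the Wasserstein-2 barycenter of centered non-degenerate Gaussians $\{\mathcal{N}(0,S_i)\}_{i=1}^p$ is itself centered Gaussian $\mathcal{N}(0,S_0)$, where $S_0\in\mathbb{S}_{++}^d$ is the unique positive-definite fixed point of the associated matrix equation. Thus any $\mu_0 \in \bary(\{\mu_i\}_{i=1}^p)$ is of the form $\mathcal{N}(0,S_0)$, and conversely, once $\mu_0=\mathcal{N}(0,S_0)$ the hypotheses of Theorem \ref{thm:main} are satisfied so we may apply it verbatim.

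Third, I would compute the matrix $A$ explicitly. With $\mu_0=\mathcal{N}(0,S_0)$ and $\mu_i=\mathcal{N}(0,S_i)$, Brenier's theorem combined with the closed-form Gaussian transport map gives $T_i(x)=C_ix$ with $C_i$ symmetric and defined by (\ref{eq:map_mat}). Thus $T_i(x)-\id(x)=(C_i-I)x$ and, using symmetry of $C_i,C_j$ and the identity $\E_{\mu_0}[xx^\top]=S_0$,
\begin{align*}
A_{ij} &= \int_{\mathbb{R}^d} \langle (C_i-I)x,\,(C_j-I)x\rangle\, d\mu_0(x) \\
       &= \E_{\mu_0}\bigl[x^\top (C_i-I)(C_j-I) x\bigr] \\
       &= \Tr\bigl((C_i-I)(C_j-I)\,S_0\bigr).
\end{align*}
This matches the stated formula.

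Finally, I would assemble the pieces. For the ``only if'' direction: if $\mu_0 \in \bary(\{\mu_i\}_{i=1}^p)$, the Agueh--Carlier result forces $\mu_0=\mathcal{N}(0,S_0)$ with $S_0\in\mathbb{S}_{++}^d$, and Theorem \ref{thm:main} then gives $\min_{\lambda\in\Delta^p}\lambda^\top A\lambda=0$ with $A$ as computed above. For the ``if'' direction, $\mu_0=\mathcal{N}(0,S_0)$ and $\min_{\lambda\in\Delta^p}\lambda^\top A\lambda=0$ together with \textbf{A1}--\textbf{A3} let us apply Theorem \ref{thm:main} in reverse to conclude $\mu_0\in\bary(\{\mu_i\}_{i=1}^p)$ and $\mu_0=\nu_{\lambda_*}$ for any optimal $\lambda_*$. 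None of the steps is genuinely hard; the only substantive external input is the Agueh--Carlier characterization of Gaussian barycenters, and the only minor calculation is the trace identity above, for which symmetry of the $C_i$ (guaranteed by Brenier's theorem applied to Gaussians) is what collapses the formula to the stated form.
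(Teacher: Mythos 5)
Your proposal is correct and follows essentially the same route as the paper's own proof: verify \textbf{A1}--\textbf{A3} for non-degenerate Gaussians, invoke the Agueh--Carlier result that the barycenter of centered Gaussians is a centered Gaussian, and reduce $A_{ij}$ to $\Tr\left((C_i-I)(C_j-I)S_0\right)$ via the symmetry of the $C_i$ and the identity $\E_{X\sim\mathcal{N}(0,S)}[X^\top BX]=\Tr(BS)$. No gaps.
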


The proof is deferred to the Supplementary Material Section \ref{SM:Cor1Proof}.  We note that in the Gaussian case, one can use $||\nabla G_{\lambda}(\mu_0)||_{\mu_0}$ to upper bound $W_2(\mu_0, \nu_{\lambda})$ \citep{chewi2020gradient, altschuler2021averaging}. This further justifies the choice of optimizing $||\nabla G_{\lambda}(\mu_0)||_{\mu_0}^2$, since it can be seen as a convex, sharp upper bound on the distance to the set of barycenters.

\subsection{Projections of Compatible Measures}

If one imposes extra structure on the set $\{\mu_i\}_{i=0}^p$, then it is possible to substantially strengthen Theorem \ref{thm:main}. 

\begin{definition}
    A family of measures $\{\mu_{i}\}_{i=0}^{p}$ is said to be \emph{compatible} if for any $i,j,k \in \{0,1,...,p\}$ the optimal transport map $T_i^k$ from $\mu_i$ to $\mu_k$ can can be expressed as 
    $$T_i^k = T_j^k \circ T_i^j$$
    where $T_i^j,T_j^k$ are the optimal transport maps from $\mu_i$ to $\mu_j$ and $\mu_j$ to $\mu_k$ respectively.
\end{definition}
Note that whenever these maps exist, the map $ T_j^k \circ T_i^j$ is a valid map from $\mu_i$ to $\mu_k$, but it need not be optimal.  Under the condition of compatibility, we have the following result.
\begin{theorem} \label{thm:projection}
    Suppose that $\{\mu_i\}_{i=0}^{p}$ are compatible and let $\lambda_*$ be the minimizer in Theorem \ref{thm:main}. Then
    $$\lambda_* = \argmin_{\lambda \in \Delta^p} W_2^2(\mu_0, \nu_\lambda).$$
\end{theorem}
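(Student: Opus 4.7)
The plan is to reduce the theorem to the identity $W_2^2(\mu_0, \nu_\lambda) = \lambda^T A \lambda$ for every $\lambda \in \Delta^p$ under compatibility. Once this identity is established, the two objectives coincide as functions on $\Delta^p$, so every minimizer of the quadratic program in Theorem \ref{thm:main} is automatically a minimizer of $\lambda \mapsto W_2^2(\mu_0, \nu_\lambda)$.

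The central lemma I would establish is: under compatibility, the map
\[
T_\lambda := \sum_{i=1}^p \lambda_i T_0^i
\]
(where $T_0^i$ is the optimal transport map from $\mu_0$ to $\mu_i$) is the optimal transport map from $\mu_0$ to the barycenter $\nu_\lambda$. Optimality of $T_\lambda$ as a map from $\mu_0$ to $T_\lambda \# \mu_0$ is immediate from Brenier's theorem: each $T_0^i = \nabla \varphi_i$ for a convex $\varphi_i$, so $T_\lambda = \nabla(\sum_i \lambda_i \varphi_i)$ is itself the gradient of a convex function. The nontrivial half is showing $T_\lambda \# \mu_0 = \nu_\lambda$, for which I appeal to the Karcher-mean characterization used in the proof of Theorem \ref{thm:main}: under \textbf{A1}--\textbf{A3}, it suffices to verify $\sum_i \lambda_i (S_i - \id) = 0$ in $L^2(T_\lambda \# \mu_0)$, where $S_i$ is the optimal map from $T_\lambda \# \mu_0$ to $\mu_i$. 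The compatibility hypothesis forces $S_i = T_0^i \circ T_\lambda^{-1}$ (the inverse of $T_\lambda$ exists $\mu_0$-a.e.\ because $T_\lambda$ is the gradient of a strictly convex function on $\Omega$), and therefore $\sum_i \lambda_i (S_i - \id) = \bigl(\sum_i \lambda_i T_0^i - T_\lambda\bigr) \circ T_\lambda^{-1} = 0$.

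Given this lemma, the identity is a direct computation using $\sum_i \lambda_i = 1$ to pull $\id$ inside the sum:
\begin{align*}
W_2^2(\mu_0, \nu_\lambda)
&= \int \|T_\lambda(x) - x\|^2 \, d\mu_0(x) \\
&= \int \left\|\sum_{i=1}^p \lambda_i (T_0^i(x) - \id(x))\right\|^2 d\mu_0(x) \\
&= \sum_{i,j=1}^p \lambda_i \lambda_j A_{ij} = \lambda^T A \lambda.
\end{align*}
Since this coincides with the objective of Theorem \ref{thm:main}, the minimizer $\lambda_*$ of that quadratic program is also a minimizer of $W_2^2(\mu_0, \nu_\lambda)$, as claimed.

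The main obstacle is the lemma $T_\lambda \# \mu_0 = \nu_\lambda$, and in particular the identification $S_i = T_0^i \circ T_\lambda^{-1}$. Showing this is the \emph{optimal} map (not merely a valid one) requires that $T_0^i \circ T_\lambda^{-1}$ be the gradient of a convex function, which is not automatic since $T_\lambda^{-1}$ itself generally fails to be a Brenier map. The required fact is that compatibility of the original family propagates to the enriched family $\{\mu_0,\ldots,\mu_p, T_\lambda \# \mu_0\}$; this is a standard consequence in the Wasserstein-compatibility literature (cf.\ \citep{panaretos2020invitation}), and it is the step where all the work of the hypothesis is absorbed.
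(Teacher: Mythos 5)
Your proof is correct and follows essentially the same route as the paper: both reduce the claim to the identity $W_2^2(\mu_0,\nu_\lambda)=\lambda^T A\lambda$ on all of $\Delta^p$, obtained from the representation $\nu_\lambda = \bigl[\sum_{i=1}^p \lambda_i T_0^i\bigr]\#\mu_0$ with $\sum_{i=1}^p\lambda_i T_0^i$ optimal, followed by the same expansion via Proposition \ref{prop:Prop1}. The only difference is that the paper obtains this representation directly by citing \citep{panaretos2020invitation} (Theorem 3.1.9), whereas you sketch its derivation via the Karcher-mean criterion and defer the key step (compatibility propagating to the enriched family) to that same reference.
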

The proof is deferred to the Supplementary Material Section \ref{sec:projection}. In other words, when the measures are compatible one can solve (\ref{eq:analysis}) and find the \emph{exact projection} of $\mu_0$ onto $\bary(\{\mu_i\}_{i=1}^p).$

There are several known conditions under which the family of measures is compatible. These include any set of continuous measures on $\mathbb{R}$, any family of Gaussians with simultaneously diagonalizable covariance matrices, families of radial transformations of a base measure, or families with a "common dependence structure''; see \cite{panaretos2020invitation} Section 2.3 for further details. 

\section{Finite Sample Analysis and Rates of Convergence} \label{sec:SampleSetting}

The results of the previous section are all reliant upon having exact knowledge of both the underlying measure $\mu_0$ as well as the optimal transport maps $T_i$. In practice, outside of specific parametric families (e.g., Gaussians), it is rare that one will have either of these available. 

Much more common is the setting where the $\mu_i$ are accessed through i.i.d. sampling. In this setting we need to estimate the maps $T_i$, as well as $\mu_0$ and use these estimates to compute an $\hat{A}$ whose entries are $\hat{A}_{ij} = \langle \hat{T}_i - \id, \hat{T}_j - \id \rangle_{\hat{\mu}_0}$,
where $\hat{T}_i$ and $\hat{\mu_0}$ are our estimates of the transport maps and $\mu_0$ respectively. The quality of the estimate $\hat{A}$ will depend on the approximations used above, and ideally these would be accompanied by performance guarantees.  

\subsection{The Entropic Map Estimate}

Let $X_1,...,X_n \sim \mu_0$ and $Y_1,...,Y_n \sim \mu_1$ be i.i.d. samples. In \cite{pooladian2021entropic} the authors propose using the \textit{entropic map} to estimate the transport maps. For $\epsilon > 0$, this map is defined by 
\begin{equation} \label{eq:entropic_map}
    \hat{T}_\epsilon(x) \triangleq \frac{\frac{1}{n}\displaystyle\sum_{i=1}^n Y_i \exp\left (\frac{1}{\epsilon} (g_{\epsilon}(Y_i) - \frac{1}{2}||x - Y_i||_2^2) \right )}{\frac{1}{n}\displaystyle\sum_{i=1}^n \exp\left (\frac{1}{\epsilon} (g_{\epsilon}(Y_i) - \frac{1}{2}||x - Y_i||_2^2) \right )}
\end{equation}
where $g_\epsilon$ is defined to be the optimal variable in the dual entropic optimal transport  problem on the samples. See the Supplementary Material \ref{SM:Entropic_Map} for details on  $g_\epsilon$ and important properties of $\hat{T}_\epsilon$. Practically, the calculation of $g_\epsilon$ has been well studied and there exists fast algorithms for its computation \cite{peyre2020computational}. The estimate has also been shown to converge (with rates) to the optimal transport map in $L_2(\mu_0)$ under the appropriate technical conditions. 

To accompany this we also have the following result, stated informally below. In this result $a \lesssim b$ means that there exists a constant $C > 0$ such that $a \leq Cb$. We allow this constant $C$ to depend on the size of the support of the measures, upper and lower bounds on their densities, and the regularity properties of the optimal transport maps $T_i,T_j$, but it is importantly independent of the sample size $n$.
\begin{theorem}(\textbf{Informal}) \label{thm:convergence}
Let $i,j \in \{1,...,p\}$ and suppose that $\mu_i,\mu_j,\mu_0$ are supported on bounded domains and that the maps $T_i$ and $T_j$ are sufficiently regular. Let $X_1,...,X_{2n} \sim \mu_0, Y_1,...,Y_n \sim \mu_i, Z_1,...,Z_n \sim \mu_j$. For an appropriately chosen $\epsilon$, let $\hat{T}_i$ and $\hat{T}_j$ be the entropic maps computed using $\{X_{i}\}_{i=1}^{n}, \{Y_{i}\}_{i=1}^{n}, \{Z_{i}\}_{i=1}^{n}$. Then we have 
    \begin{equation}
    \begin{split}
        & \mathbb{E}\left [ \left | A_{ij} - \frac{1}{n} \sum_{k=n+1}^{2n} \langle \hat{T}_i(X_k) - X_k, \hat{T}_j(X_k) - X_k \rangle \right | \right ] \\
        &\hspace{1.5cm} \lesssim \frac{1}{\sqrt{n}} + n^{-\frac{\alpha + 1}{4(d' + \alpha + 1)}}\sqrt{\log n}
    \end{split}
    \end{equation}
    where  $d' = 2\lceil d/2 \rceil$, and $\alpha \leq 3$ depends on the regularity of optimal transport maps.
\end{theorem}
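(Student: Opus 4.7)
The plan is to decompose the error using the sample-splitting structure built into the theorem statement. Define the oracle quantity
\[
\tilde{A}_{ij} = \frac{1}{n}\sum_{k=n+1}^{2n}\langle T_i(X_k)-X_k,\, T_j(X_k)-X_k\rangle,
\]
which uses the \emph{true} maps on the held-out sample $X_{n+1},\dots,X_{2n}$. By the triangle inequality, $\mathbb{E}|A_{ij}-\hat{A}_{ij}| \le \mathbb{E}|A_{ij}-\tilde{A}_{ij}| + \mathbb{E}|\tilde{A}_{ij}-\hat{A}_{ij}|$, so the task reduces to bounding a ``Monte Carlo'' term and a ``map estimation'' term separately. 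The sample-splitting is essential: since $\hat{T}_i,\hat{T}_j$ depend only on $X_1,\dots,X_n, Y_1,\dots,Y_n, Z_1,\dots,Z_n$, they are independent of the evaluation points $X_{n+1},\dots,X_{2n}$, which keeps the conditional-expectation arguments clean.

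For the Monte Carlo term, $\tilde{A}_{ij}$ is an unbiased average of $n$ i.i.d.\ copies of the bounded random variable $\langle T_i(X)-X, T_j(X)-X\rangle$ (bounded because the supports are compact by assumption and the maps are continuous on a compact set, hence bounded). A standard variance bound (or equivalently Hoeffding) gives $\mathbb{E}|A_{ij}-\tilde{A}_{ij}| \le C/\sqrt{n}$, producing the first term in the rate. For the map estimation term, I would use the bilinearity identity $\langle a,b\rangle - \langle c,d\rangle = \langle a-c, b\rangle + \langle c, b-d\rangle$ with $a=\hat{T}_i(X_k)-X_k$, $b=\hat{T}_j(X_k)-X_k$, $c=T_i(X_k)-X_k$, $d=T_j(X_k)-X_k$, then apply Cauchy--Schwarz to the resulting empirical inner products to obtain
\[
|\tilde{A}_{ij}-\hat{A}_{ij}| \le \bigl(\tfrac{1}{n}\textstyle\sum_k\|\hat{T}_i(X_k)-T_i(X_k)\|^2\bigr)^{1/2}\bigl(\tfrac{1}{n}\textstyle\sum_k\|\hat{T}_j(X_k)-X_k\|^2\bigr)^{1/2} + (i\leftrightarrow j).
\]

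The factors $(\tfrac{1}{n}\sum_k\|\hat{T}_j(X_k)-X_k\|^2)^{1/2}$ are bounded by a constant since the supports, and hence $\hat{T}_j$, are bounded uniformly. For the remaining factors, I would condition on the training data, invoke that the $X_k$ with $k>n$ are i.i.d.\ from $\mu_0$ and independent of $\hat{T}_i$, so that $\mathbb{E}_{\mathrm{eval}}[\tfrac{1}{n}\sum_k\|\hat{T}_i(X_k)-T_i(X_k)\|^2] = \|\hat{T}_i-T_i\|_{L^2(\mu_0)}^2$. Taking outer expectation and applying Jensen in the form $\mathbb{E}\|\hat{T}_i-T_i\|_{L^2(\mu_0)} \le \sqrt{\mathbb{E}\|\hat{T}_i-T_i\|_{L^2(\mu_0)}^2}$, the rate is inherited from the entropic map convergence result of Pooladian--Niles-Weed: for the chosen $\epsilon$ and regularity $\alpha$ of the Brenier potentials, $\mathbb{E}\|\hat{T}_\epsilon-T\|_{L^2(\mu_0)}^2 \lesssim n^{-(\alpha+1)/(2(d'+\alpha+1))}\log n$. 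The two square roots (from Cauchy--Schwarz on the samples and from Jensen) combine to explain the characteristic $1/4$ in the exponent $n^{-(\alpha+1)/(4(d'+\alpha+1))}\sqrt{\log n}$.

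The main technical obstacle is making the conditioning argument fully rigorous, since $\hat{T}_i$ is a complicated (barycentric-projection style) random function of the training data and appears nonlinearly through $\langle\hat{T}_i(X_k)-X_k, \hat{T}_j(X_k)-X_k\rangle$. The bilinear split above is what lets us cleanly separate the two sources of randomness; without it one would be tempted to bound $|\langle\hat{T}_i,\hat{T}_j\rangle - \langle T_i,T_j\rangle|$ directly, which entangles the two map errors and their cross term. A secondary subtlety is ensuring the hypotheses of the Pooladian--Niles-Weed rate (compactly supported measures with bounded densities bounded away from zero, and sufficient smoothness of the Brenier potentials) are transferred from the assumptions on $\mu_0,\mu_i,\mu_j$; that and the selection of the entropic regularization parameter $\epsilon$ as a function of $n$ are the only places where the informal ``sufficiently regular'' conditions become explicit.
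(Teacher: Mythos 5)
Your proposal is correct and follows essentially the same route as the paper's proof: the same oracle decomposition into a Monte Carlo term (bounded via the boundedness of $\langle T_i(X)-X, T_j(X)-X\rangle$ on the compact support and a variance argument) and a map-estimation term handled by bilinearity, Cauchy--Schwarz, the sample-splitting independence of the evaluation points, Jensen, and the Pooladian--Niles-Weed $L^2(\mu_0)$ rate for the entropic map. The only cosmetic difference is that the paper first peels off the cross terms $\langle \hat{T}_i - T_i, X\rangle$ and then applies the bilinear identity to $\langle T_i, T_j\rangle - \langle \hat{T}_i, \hat{T}_j\rangle$, whereas you apply it directly to the displacement vectors; both orderings yield the same bound and the same exponent.
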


A precise statement and proof of this theorem is given in the Supplementary Material Section \ref{SM:Thm2Proof}. Theorem \ref{thm:convergence} tells us that in expectation, by using the entropic map we can accurately approximate the entries of the matrix $A_{ij}$, and do so in a numerically feasible way. We chose the entropic map as it is computationally tractable and has competitive convergence rates when compared to the intractable but optimal estimator of \cite{hutter2021minimax}.  We expect similar analysis holds for other recently proposed estimators \cite{deb2021rates, gunsilius_2021, manole2021plugin, muzellec2021near} under different assumptions.

Theorem \ref{thm:convergence} can be combined with a matrix perturbation analysis to demonstrate the consistency of the estimated barycentric coordinates $\hatlambda$ from Algorithm \ref{alg:estimate_lambda} for the true ones $\lambda_{*}$ when $\mu_{0}\in \bary(\{\mu_{i}\}_{i=1}^{p})$, that is when $\lambda_{*}^{T}A\lambda_{*}=0$. This is stated more precisely as follows. 

\begin{corollary}\label{cor:consistency} Let $\hatlambda$ be the random estimate obtained from Algorithm \ref{alg:estimate_lambda}.  Suppose that $A$ has an eigenvalue of 0 with multiplicity 1 and that $\lambda_{*}\in\Delta^{p}$ realizes $\lambda_{*}^{T}A\lambda_{*}=0$.  Then under the assumptions of Theorem \ref{thm:convergence}, \[\E[\|\hatlambda -\lambda_{*}\|_{2}^{2}]\lesssim \frac{1}{\sqrt{n}} + n^{-\frac{\alpha + 1}{4(d' + \alpha + 1)}}\sqrt{\log n}.\]

\end{corollary}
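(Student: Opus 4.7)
The plan is to reduce the corollary to a matrix perturbation argument, bounding $\|\hat\lambda-\lambda_*\|_2^2$ by $\|\hat A-A\|_{op}$ (up to a dimension-dependent constant), and then invoking Theorem \ref{thm:convergence} entrywise to convert this into the stated rate. The key structural ingredient is that, because $A_{ij}=\langle T_i-\id,T_j-\id\rangle_{\mu_0}$ is a Gram matrix, $A$ is positive semidefinite, so $\lambda_*^TA\lambda_*=0$ forces $A\lambda_*=0$. The simple-eigenvalue hypothesis then pins down $\ker A=\mathrm{span}(\lambda_*)$ and produces a spectral gap $\sigma>0$: every vector $v\perp\lambda_*$ satisfies $v^TAv\ge\sigma\|v\|_2^2$. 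This gap is what will eventually be multiplied by $\|\hat A-A\|_{op}$ in the final bound.

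The main algebraic step is the following perturbation estimate. Write $\hat\lambda=c\lambda_*+w$ with $w\perp\lambda_*$ in $\mathbb{R}^p$. Optimality gives $\hat\lambda^T\hat A\hat\lambda\le \lambda_*^T\hat A\lambda_*$, and adding and subtracting $A$ together with $\lambda_*^TA\lambda_*=0$ yields
\[
\hat\lambda^TA\hat\lambda \;\le\; \lambda_*^T(\hat A-A)\lambda_* - \hat\lambda^T(\hat A-A)\hat\lambda \;\le\; 2\|\hat A-A\|_{op},
\]
where the final inequality uses $\|\lambda_*\|_2,\|\hat\lambda\|_2\le 1$ on the simplex. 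On the other hand, $\hat\lambda^TA\hat\lambda = w^TAw\ge\sigma\|w\|_2^2$, so $\|w\|_2^2\le (2/\sigma)\|\hat A-A\|_{op}$. The scalar $c$ is then controlled by the affine constraint $\mathbf 1^T\hat\lambda=\mathbf 1^T\lambda_*=1$, which gives $c-1=-\mathbf 1^Tw$ and hence $|c-1|\le\sqrt p\|w\|_2$. Combining the orthogonal decomposition $\|\hat\lambda-\lambda_*\|_2^2 = (c-1)^2\|\lambda_*\|_2^2+\|w\|_2^2$ with $\|\lambda_*\|_2\le 1$ gives
\[
\|\hat\lambda-\lambda_*\|_2^2 \;\le\; (p+1)\|w\|_2^2 \;\lesssim\; \|\hat A-A\|_{op}.
\]

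To finish, I take expectations and bound $\E\|\hat A-A\|_{op}\le \E\|\hat A-A\|_F\le \sum_{i,j}\E|\hat A_{ij}-A_{ij}|$, then apply Theorem \ref{thm:convergence} to each of the $p^2$ entries; since $p$ is fixed, the sum inherits the rate $n^{-1/2}+n^{-(\alpha+1)/(4(d'+\alpha+1))}\sqrt{\log n}$ stated in the corollary. I expect the main obstacle to be the perturbation step rather than the stochastic part: because $A$ is degenerate, one cannot invoke strong convexity of $\lambda\mapsto\lambda^TA\lambda$ directly, and the argument must split cleanly into a kernel component (controlled by the simplex affine constraint) and an orthogonal component (controlled by the spectral gap). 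A minor auxiliary issue is that the plug-in $\hat A$ need not be symmetric or PSD, but this is handled by symmetrizing and by noting that the bound on $\hat\lambda^T A\hat\lambda$ above only used $A\succeq 0$, not $\hat A\succeq 0$.
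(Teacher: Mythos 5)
Your proposal is correct and follows essentially the same route as the paper's proof: the optimality comparison $\hat\lambda^T\hat A\hat\lambda\le\lambda_*^T\hat A\lambda_*$ combined with $\lambda_*^TA\lambda_*=0$ to bound $\hat\lambda^TA\hat\lambda$ by the perturbation $\hat A-A$, an orthogonal decomposition of $\hat\lambda$ along $\lambda_*$ with the spectral gap controlling the orthogonal part and the simplex constraint controlling the coefficient, and finally Theorem \ref{thm:convergence} applied entrywise. The only cosmetic difference is that you route the perturbation through $\|\hat A-A\|_{op}\le\sum_{i,j}|\hat A_{ij}-A_{ij}|$ whereas the paper bounds the quadratic forms entrywise directly using $\hat\lambda,\lambda_*\in\Delta^p$; both yield the same $p^2 B_n$ bound.
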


\begin{algorithm}[ht!]
\caption{\label{alg1}Estimate $\lambda$} \label{alg:estimate_lambda}
\begin{algorithmic}
\STATE {\bfseries Input:} i.i.d. samples $\{X_1,...,X_{2n}\} \sim \mu_0, \{\{Y_1^i,...,Y_n^i\} \sim \mu_i: i=1,...,p\},$ regularization parameter $\epsilon > 0$.
\FOR{$i = 1,...,p$} 
    \STATE Set $M^i \in \mathbb{R}^{n\times n}$ with $M^i_{jk} = \frac{1}{2}||X_j - Y_k^i||_2^2$.
    \STATE Solve for $g^i$ as the optimal $g$ in
    \begin{equation*} \begin{split}
        &\max_{f,g \in \mathbb{R}^n} \frac{1}{n}\sum_{j=1}^n f_j+ \frac{1}{n}\sum_{k=1}^n g_k \\ 
        &\hspace{1.5cm}- \frac{\epsilon}{n^2}\sum_{j,k}^n \exp \left((f_j + g_k - M_{jk}^i)/\epsilon \right).
    \end{split}
    \end{equation*}
    \STATE Define $\hat{T}_i$ through \ref{eq:entropic_map} with $g_\epsilon = g_i$ and $\{Y_1^i,...,Y_n^i\}$. 
\ENDFOR
\STATE Set $\hat{A} \in \mathbb{R}^{p \times p}$ to be the matrix with entries
\begin{equation*}
    \hat{A}_{ij} = \frac{1}{n}\sum_{k=n+1}^{2n} \langle \hat{T}_i(X_k) - X_k, \hat{T}_j(X_k) - X_k \rangle.
\end{equation*}
\STATE \textbf{Return} $\hat{\lambda}=\displaystyle\argmin_{\lambda \in \Delta^p} \lambda^T\hat{A}\lambda$.
\end{algorithmic}
\end{algorithm}

The proof is deferred to the Supplementary Material Section \ref{SM:Cor2Proof}.  Note that the implicit constant in the inequality of Corollary \ref{cor:consistency} depends on $p$ in addition to the dependencies of listed above.  Corollary \ref{cor:consistency} ensures that in the large sample limit, the entropic regularization parameter may be chosen to guarantee precise estimation of the true barycentric coordinates.  Note that the rate of convergence in Corollary \ref{cor:consistency} depends crucially on the smoothness of the transport maps between the reference measures ($\alpha$) and the dimensionality of the space in which they are supported ($d'$).

In the next section we illustrate the utility of the BCM  and propose novel approaches based on Theorem \ref{thm:main} and Algorithm \ref{alg:estimate_lambda} for several applications. 

\section{Applications \label{sec:emp_res}}

\subsection{Barycenter Parameterized Covariance Estimation}

Extending the set-up in \cite{musolas2021geodesically}, we first consider the case where $\mu_i = \mathcal{N}(0,S_i), S_i \in \mathbb{S}^d_{++}$  for $i=0,\dots,p$.  We will use $S_i$ both as a matrix and to refer to $\mu_i$; similarly, let $S_{\lambda}$ denote $\nu_{\lambda}$. Let $\{x_i\}_{i=1}^n$ be i.i.d. samples with empirical covariance $\hat{S}$. In this setting, we can use the formulas laid out in Corollary \ref{cor:gaussian}.

For comparison, we also consider maximum likelihood estimation (MLE) for the parameter $\lambda$ in the BCM. Let $P_{\lambda}$ denote the probability density function of $S_{\lambda}$. The MLE is $\argmax_{\lambda \in \Delta^p} \prod_{i=1}^n P_{\lambda}(x_i),$ which is equivalent to minimizing the KL-divergence $D_{KL}(\hat{S},S_{\lambda})$. However, this problem may be non-convex and difficult to optimize. We solve it numerically using auto-differentiation \citep{bartholomew2000automatic} and use the coordinates recovered by this method, which may not be the true MLE.

\textbf{Experimental Setup:}  We consider the problem of estimating the covariance matrix $S_0$ from i.i.d. samples $\{x_i\}_{i=1}^{n} \sim \mathcal{N}(0,S_0)$ when $S_0$ is known to be a barycenter of $S_1,...,S_p$. To do so we use the following procedure:
\vspace{-5pt}
\begin{enumerate}
\setlength \itemsep{-4pt}
    \item \textbf{Choose $\lambda$ and $\{S_i\}_{i=1}^p$:} Select $\lambda \in \Delta^p$ and the reference measures $\{S_{i}\}_{i=1}^{p}$ either by hand or at random.
    \item \textbf{Generate $S_{\lambda}$:} Using Algorithm 1 of \citep{chewi2020gradient}, find the true barycenter, $S_{\lambda}$.
    \item \textbf{Sample:} Sample $\{x_i\}_{i=1}^n$ i.i.d. from $\mathcal{N}(0,S_{\lambda})$. Compute the empirical covariance
    $\hat{S} = \frac{1}{n} \sum_{i=1}^n x_ix_i^T.$
    \item \textbf{Estimate $\hat{\lambda}$:} Perform the optimization in Corollary \ref{cor:gaussian}, or use MLE with $S_0 = \hat{S}$ to recover an estimate $\hat{\lambda}$.
    \item \textbf{Compute $S_{\hat{\lambda}}$:} Again using Algorithm 1 of \citep{chewi2020gradient} find $S_{\hat{\lambda}}$ and use that as the estimate of $S_{\lambda}$.
\end{enumerate}
The results are in Figure \ref{fig:gaussian}. These show that using our approach (denoted \emph{gradient norm}) we can estimate the underlying covariance matrix much more accurately than the empirical covariance. Our optimization is competitive with the auto-differentiation approach to MLE, while being significantly more stable and much faster.  Indeed, computing all the estimates $\hat{\lambda}$ to generate Figure \ref{fig:gaussian} takes our method less than 1 second compared to over 3 hours and 40 minutes for MLE. For reference it takes on average roughly 0.01 seconds to perform empirical covaraiance estimation\footnote{All reported times are obtained using 20 Intel Xeon CPU E5-2660 V4 cores at 2.00 GHz.}. Further details are in the Supplementary Material Section \ref{SM:Covariance}.


\begin{figure}[ht!]
    \centering
    \includegraphics[width=0.93\linewidth]{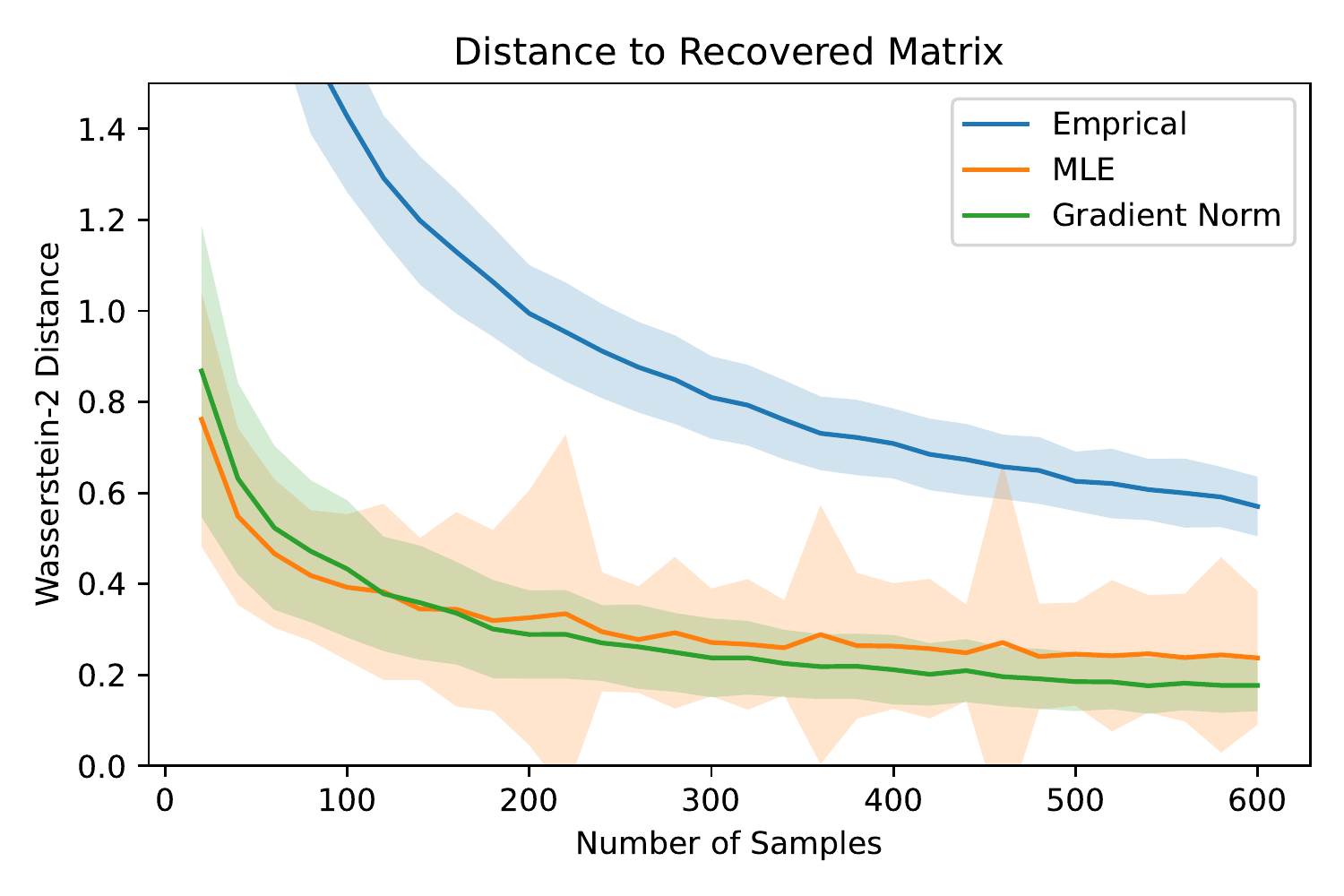}
    \caption{$W_{2}$ between true covariance and estimates.  We set $p = 6, d = 10$, $\lambda \sim \text{Unif}(\Delta^p)$, and $\mu_i \sim \text{Wish}(I_d) + 0.5 I_d$ with results averaged over 250 trials, and 1 standard deviation shaded.}
    \label{fig:gaussian}
\end{figure}

We now consider applications where it is necessary to directly estimate the optimal transport map from samples (unlike the previous case where we estimate it via sample-covariance estimators) and between measures with non-uniform weights on the observed support. To that end, we first modify Algorithm \ref{alg:estimate_lambda} for practical purposes.

\subsection{Adapting Algorithm \ref{alg:estimate_lambda} for Applications}
In practice, Algorithm \ref{alg:estimate_lambda} has a few mild requirements: that we use the same number of samples for each reference measure; that the weight on each sample is the same; and access to an extra set of points for evaluating the inner products. Algorithm \ref{alg:estimate_lambda_pc} relaxes these requirements and is in terms of matrix-vector operations, making it suitable for practical implementation. We note that if one uses Algorithm \ref{alg:estimate_lambda_pc} with the same constraints as in Algorithm \ref{alg:estimate_lambda}, then both produce the exact same $\hat{T}_i$ on the samples $X_1,...,X_n$ \cite{pooladian2021entropic}. The main difference between the two approaches is that the dual potentials in Algorithm \ref{alg:estimate_lambda} make an out-of-sample extension possible which may be of its own interest. 

\begin{algorithm} 
\caption{Estimate $\lambda$ on Point Clouds} \label{alg:estimate_lambda_pc}
\begin{algorithmic}
\STATE {\bfseries Input:} PMFs $p \in \mathbb{R}^{n_0}$, $q^i \in \mathbb{R}^{n_i}$, support matrices $X \in \mathbb{R}^{n_0 \times d}$,$Y^i \in \mathbb{R}^{n_i \times d}$, regularization parameter $\epsilon > 0$. 
\FOR{$i = 1,...,p$} 
    \STATE Set $M^i \in \mathbb{R}^{n_0 \times n_i}$ with $M^i_{jk} = ||X_j - Y_k^i||_2^2$.
    \STATE Solve for the entropic assignment matrix $\pi^i$ as the optimal matrix in
    \begin{equation*}
        \min_{\substack{\pi \in \mathbb{R}^{n_0 \times n_i}_+ \\
        \pi\bm{1} = p \\
        \pi^T\bm{1} = q^i}} \sum_{j=1}^{n_0}\sum_{k=1}^{n_i} M^i_{jk}\pi_{jk} + \epsilon \pi_{jk}\log \pi_{jk}.
    \end{equation*}
    \STATE Compute the approximate transport matrix $\hat{T}_i \in \mathbb{R}^{n_0 \times d}$ as
    \begin{equation*}
        \hat{T}_i = \text{diag}(1/p)\pi^i Y^i. 
    \end{equation*}
\ENDFOR
\STATE Set $\hat{A} \in \mathbb{R}^{p \times p}$ to be the matrix with entries
\begin{equation*}
    \hat{A}_{ij} = \Tr\left(\text{diag}(p)(\hat{T}^i-X)(\hat{T}^j-X)^T \right).
\end{equation*}
\STATE \textbf{Return} $\hat{\lambda}=\displaystyle\argmin_{\lambda \in \Delta^p} \lambda^T\hat{A}\lambda$.
\end{algorithmic}
\end{algorithm}

\subsection{Image Inpainting and Denoising} 

We consider the problem of recovering an image in the presence of corruption, interpreting the image as a probability measure.  We consider two specific models of corruption: \textbf{a.} additive \emph{noise} and \textbf{b.} \emph{occlusion} of a portion of the image.  Our experimental procedure for both of these problems is outlined below taking as experimental data the MNIST dataset of $28\times28$ pixel images of hand-written digits \citep{lecun1998mnist}. Additional details are in Supplementary Material Section \ref{SM:MNIST}.  Note that after normalization, these can be treated as measures supported on a $28\times28$ grid.

\textbf{Experimental Setup:} 

\begin{figure}[t]
    \centering
    \includegraphics[width=\linewidth]{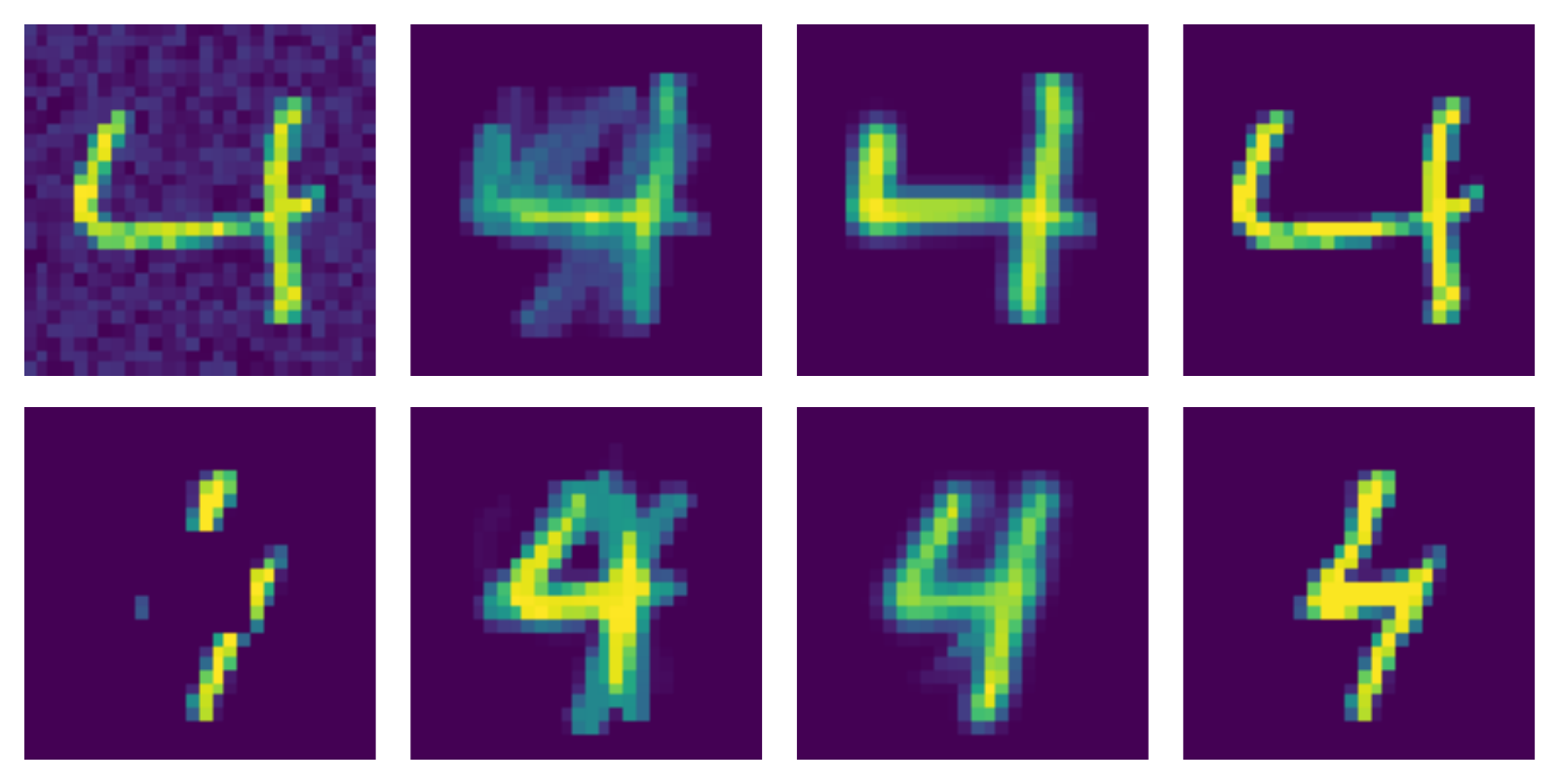}
    \caption{\emph{Left to right}: corrupted image; recovery by linear projection; recovery using $\hat{\lambda}$ from Algorithm \ref{alg:estimate_lambda_pc}; original image. \emph{Top}: white noise added to the image as in \textbf{a.} \emph{Bottom}: occlusion of the image as in \textbf{b.}  We see the linear reconstruction fails, while Algorithm \ref{alg:estimate_lambda_pc} recovers well, albeit with some blurring that can be interpreted as a consequence of entropic regularization.}
    \label{fig:MNIST_IO}
\end{figure}

\begin{enumerate}
\setlength \itemsep {-3pt}
    \item \textbf{Select $\mu_{0}$:}  Select a digit $\mu_0$ \textbf{a.} generate a white noise image $\zeta$. Set $\Tilde{\mu}_0$ to be $\Tilde{\mu_0} = (1 - \alpha)\mu_0 + \alpha \zeta$. \textbf{b.} Set $\Tilde{\mu}_0$ to be $\mu_0$ with the central $8 \times 8$ square removed, and renormalized.
    \item \textbf{Select $\{\mu_{i}\}_{i=1}^{p}$:} Select a set of images of the same digit as $\mu_0$ to serve as the reference measures. \textbf{a.} Let $\Tilde{\mu}_i = (1 - \alpha)\mu_0 + \alpha \mathbb{E}[\zeta]$. \textbf{b.} Let $\Tilde{\mu}_i$ be $\mu_i$ with the central $8 \times 8$ square removed, and renormalized
    \item \textbf{Estimate $A$, $\lambda$:} Use Algorithm \ref{alg:estimate_lambda_pc} to compute the approximate Gram matrix $\hat{A}$ using $\Tilde{\mu}_0, \Tilde{\mu_1}, ... \Tilde{\mu_p}$ and then compute estimated coordinates $\hat{\lambda}$.
    \item \textbf{Compute $\hat{\nu_{\lambda}}$:} Output $\hat{\mu}_0 = \nu_{\hat{\lambda}}$, where the barycenter is reconstructed from $\mu_1,...,\mu_p$.
\end{enumerate}

An illustration of the procedure is shown in Figure \ref{fig:MNIST_IO}.  We also show for comparison digits reconstructed using a linear method, in which the corrupted image is recovered by computing the Euclidean projection onto the convex hull of the reference measures (see the Supplement \ref{sec:linear_rec} for further details). To recover the barycenter we use the method of \cite{benamou2015iterative}. As a competitive baseline we compare to a well-known existing method for histogram regression \cite{bonneel2016coordinates} in this setting. Our results are summarized in Tables \ref{tab:mnist} and \ref{tab:mnist_comp}. We note that our method is over an order of magnitude faster than \cite{bonneel2016coordinates} on this dataset and achieves competitive results.  This is particularly remarkable because \cite{bonneel2016coordinates} is specifically adapted to measures with structured support such as grids and meshes, while our Algorithm \ref{alg:estimate_lambda_pc} is much more general. 
\begin{table*}[]
\centering
\begin{tabular}{c|c|c|c|}
\cline{2-4}
    & Bon., $\epsilon = 10$ & Alg. 2, $\epsilon = 10$ & Alg. 2, $\epsilon = 0$ \\ 
    \hline
    \multicolumn{1}{|c|}{Occlusion}  & 2.5101 (2228s)  & 2.5488 (3.371s) & 2.5287 (1.062s) \\ 
    \hline
    \multicolumn{1}{|c|}{Noise ($\alpha=0.5$)} & 2.4058 (2391s) & 2.6797 (86.32s) & 2.3787 (50.67s) \\ 
    \hline
\end{tabular}
\caption{Average quality of recovery of $\mu_0$ measured in $W_2^2(\mu_0, \hat{\mu}_0)$ when reconstructing 500 random 4's using a barycenter constructed from 10 random 4's, as well as run times of each method.  Algorithm \ref{alg:estimate_lambda_pc} affords comparably accurate reconstructions in orders-of-magnitude faster time than the state-of-the-art \cite{bonneel2016coordinates}\label{tab:mnist}.}
\end{table*}

\begin{table*}[]
    \centering
    \begin{tabular}{c|c|c|c|}
    \cline{2-4}
        & (Alg. 2, $\epsilon=10$) - (Bon., $\epsilon = 10$) & (Alg. 2, $\epsilon=0$) - (Bon., $\epsilon = 10$) & (Alg. 2, $\epsilon=10$) - (Alg. 2, $\epsilon=0$) \\
        \hline 
        \multicolumn{1}{|c|}{Occlusion}  &    $0.03872 \pm 0.3103$    &    $0.01868 \pm 0.2934$   &     $0.02004 \pm 0.2145$    \\
        \hline 
        \multicolumn{1}{|c|}{Additive}   &    $0.2739 \pm 0.3564$     &   $-0.02706 \pm 0.2558$   &     $0.3009 \pm 0.2486$  \\
        \hline
    \end{tabular}
    \caption{Average and standard deviation of the pairwise-difference in reconstruction quality measured in $W_2^2(\mu_0, \hat{\mu}_0)$ in the same setting as Table \ref{tab:mnist}. The average pairwise-difference is smaller than the standard deviation in all cases involving \cite{bonneel2016coordinates}, suggesting there is no statistically significant difference.  Our approach should thus be preferred to \cite{bonneel2016coordinates}, on the grounds that it is considerably more efficient while achieving essentially the same accuracy.}
    \label{tab:mnist_comp}
\end{table*}

\subsection{Document Classification}

Finally, we consider the task of identifying the topic of a document using its word embedding representation as the empirical distribution. We assume that there are $t$ topics and that we have $p$ reference documents about each topic. Each document can be represented as a high-dimensional empirical measure using a combination of a bag-of-words representation and a node2vec \cite{mikolov2013distributed} word-embedding, and is paired with topic label. We use the publicly available dataset provided by \cite{huang2016supervised}. The task is to use a small number of labelled documents from each topic to predict the topic of a test document.

We consider four predictors: \textbf{(1) 1-Nearest Neighbor (1NN)} which classifies using the topic of the $W_2^2$-nearest reference document; \textbf{(2) Minimum Average Distance} which selects the topic with reference documents on average $W_2^2$-closest to the test document; \textbf{(3) Minimum Barycenter Loss} which runs Algorithm \ref{alg:estimate_lambda_pc} using the references from each topic separately and selects the topic with the smallest loss in the quadratic form; \textbf{(4) Maximum Coordinate} which runs Algorithm \ref{alg:estimate_lambda_pc} using all the reference documents and chooses the topic which receives the most mass from $\hatlambda$. 

\begin{figure}[t!]
    \centering
    \includegraphics[width=\linewidth]{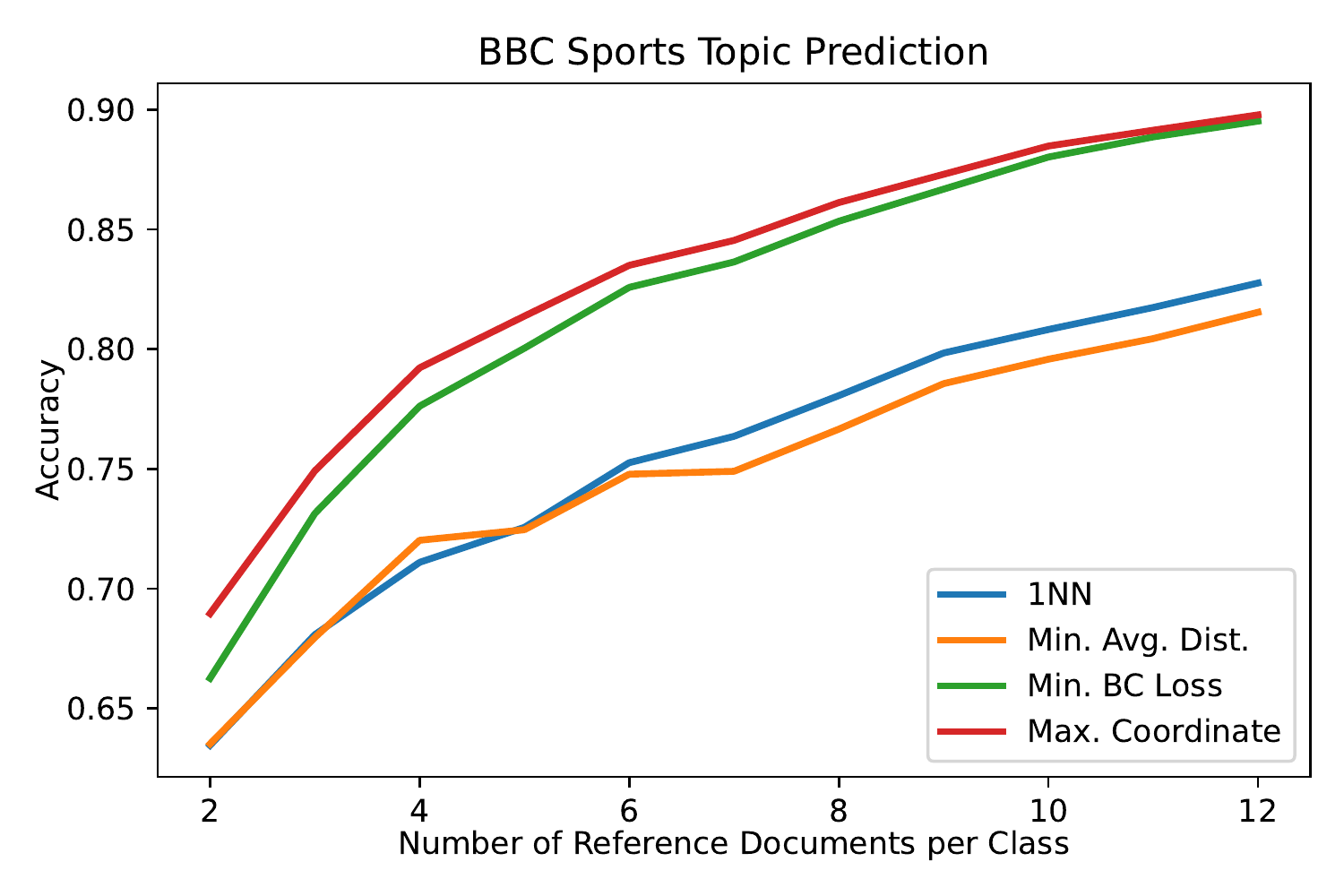}
    \includegraphics[width=\linewidth]{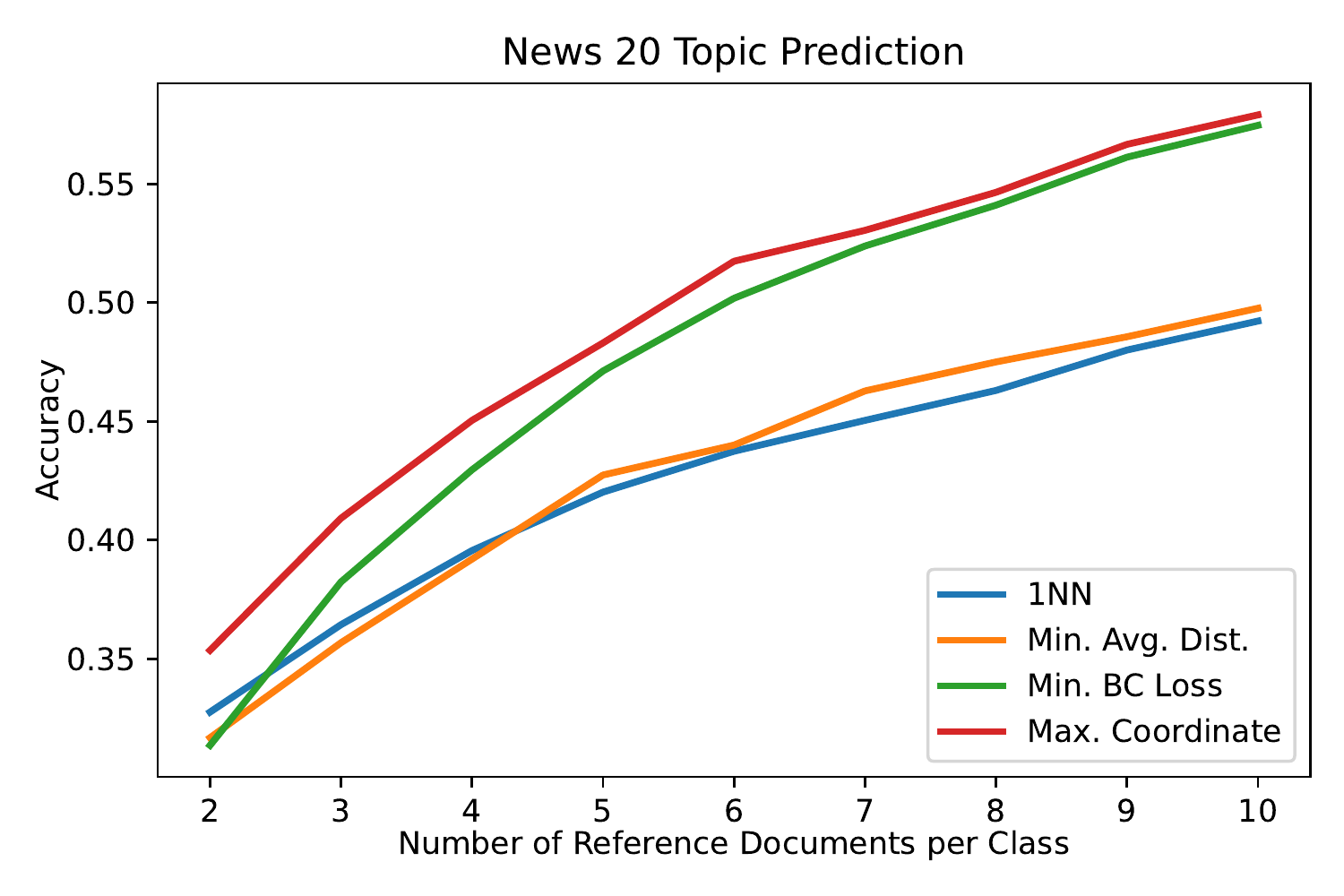}
    \caption{Document topic prediction accuracy as a function of number of reference documents in each class.  \emph{Top:}  BBC Sport dataset (5 classes). \emph{Bottom:} News20 dataset (20 classes).}
    \label{fig:NLP}
\end{figure}

Our findings on the BBCSport (5 topics) and News20 (20 topics) datasets are reported in Figure \ref{fig:NLP}.  To generate these figures we randomly sample $k$ reference documents per topic and a test set of 100 random documents. We apply the four methods listed above and compute the accuracy on the test set. This procedure is repeated 50 times for each choice of $k$ and the average accuracy is plotted. See Supplementary Material Section \ref{sec:NLP_figs} for details.

We see that even with a very small amount of training data, our BCM approaches perform well.  Importantly, the BCM is able to represent unseen documents using these small training sets, giving it a clear advantage over simply classifying based on which document class it is $W_2^2$-nearest (1NN) or $W_2^2$-nearest in an aggregate sense (Min. Avg. Dist.).  This suggests that even when $p$ is small, the BCM has high modeling capacity and that the learned barycentric coordinates encode important information.

\section{Conclusion}

We have proposed a new method for computing coordinates under the BCM, together with guarantees on its solution via a convex program (Theorem \ref{thm:main}) with closed-form coordinates in the Gaussian case (Corollary \ref{cor:gaussian}).  We further developed an algorithm for estimating the coordinates under the BCM when all measures are accessible only via i.i.d. samples (Algorithm \ref{alg1}) which enjoys a natural smoothness and dimension-dependent rate of convergence to the true parameters (Theorem \ref{thm:convergence}, Corollary \ref{cor:consistency}).  The BCM paradigm affords significant gains in runtime and robustness for barycenter parameterized Gaussian measure estimation, and provides an effective approach to image reconstruction and document classification when interpreting these data as measures.  The results in this paper suggest the efficiency and effectiveness of the BCM as a broadly viable modelling tool.

\textbf{Future Work and Open Questions:}  Section \ref{sec:SampleSetting} provides a consistency analysis for $A$ and $\lambda$ by estimating the transport maps between each measure.  But, all that is needed is an \emph{estimate on the inner product} between the displacement maps, which in principle could admit more sample efficient approaches that avoid explicitly estimating the maps.  

Section \ref{sec:emp_res} suggests that unseen data can be well-represented in the BCM for randomly chosen reference measures.  An interesting theoretical problem is to understand the representational capacity of the BCM using particular reference measures (e.g., random ones, optimally chosen ones, those in certain parametric families).  This naturally touches on the question of the smoothness of the synthesis map $\lambda \mapsto \nu_{\lambda}.$ 

Computationally, all of our real data experiments use randomly chosen reference measures.  It is of interest to develop efficient procedures as in \cite{schmitz2018wasserstein} for \emph{learning} reference measures that induce low reconstruction error.  Beyond that, one could regularize the analysis problem (\ref{eq:analysis}) to encourage sparse or otherwise structured coordinates as is well-studied in Euclidean settings \cite{aharon2006k}.

\section*{Acknowledgements}

MW is supported by NSF CCF-1553075. RJ is supported by NSF DRL 1931978.  SA acknowledges support by NSF CCF-1553075, NSF DRL 1931978, NSF EEC 1937057, and AFOSR FA9550-18-1-0465.  JMM acknowledges support from the Dreyfus Foundation and the NSF through grants DMS-1912737, DMS-1924513.  All authors acknowledge support through the Tufts TRIPODS Institute, supported by the NSF under grant CCF-1934553.  We acknowledge the reviewers for their helpful comments which improved the paper considerably.  

\bibliography{ref_camera_ready.bib}

\begin{thebibliography}{50}
\providecommand{\natexlab}[1]{#1}
\providecommand{\url}[1]{\texttt{#1}}
\expandafter\ifx\csname urlstyle\endcsname\relax
  \providecommand{\doi}[1]{doi: #1}\else
  \providecommand{\doi}{doi: \begingroup \urlstyle{rm}\Url}\fi

\bibitem[Agueh \& Carlier(2011)Agueh and Carlier]{agueh2011barycenters}
Agueh, M. and Carlier, G.
\newblock Barycenters in the {W}asserstein space.
\newblock \emph{SIAM Journal on Mathematical Analysis}, 43\penalty0
  (2):\penalty0 904--924, 2011.

\bibitem[Aharon et~al.(2006)Aharon, Elad, and Bruckstein]{aharon2006k}
Aharon, M., Elad, M., and Bruckstein, A.
\newblock K-{SVD}: An algorithm for designing overcomplete dictionaries for
  sparse representation.
\newblock \emph{IEEE Transactions on Signal Processing}, 54\penalty0
  (11):\penalty0 4311--4322, 2006.

\bibitem[Altschuler et~al.(2021)Altschuler, Chewi, Gerber, and
  Stromme]{altschuler2021averaging}
Altschuler, J.~M., Chewi, S., Gerber, P., and Stromme, A.~J.
\newblock Averaging on the {B}ures-{W}asserstein manifold: dimension-free
  convergence of gradient descent.
\newblock In \emph{Advances in Neural Information Processing Systems},
  volume~34, 2021.

\bibitem[{\'A}lvarez-Esteban et~al.(2016){\'A}lvarez-Esteban, Del~Barrio,
  Cuesta-Albertos, and Matr{\'a}n]{alvarez2016fixed}
{\'A}lvarez-Esteban, P.~C., Del~Barrio, E., Cuesta-Albertos, J., and
  Matr{\'a}n, C.
\newblock A fixed-point approach to barycenters in {W}asserstein space.
\newblock \emph{Journal of Mathematical Analysis and Applications},
  441\penalty0 (2):\penalty0 744--762, 2016.

\bibitem[Ambrosio et~al.(2005)Ambrosio, Gigli, and
  Savare]{ambrosio2005gradient}
Ambrosio, L., Gigli, N., and Savare, G.
\newblock \emph{Gradient Flows: In Metric Spaces and in the Space of
  Probability Measures}.
\newblock Lectures in Mathematics. ETH Z{\"u}rich. Birkh{\"a}user Basel, 2005.

\bibitem[Bartholomew-Biggs et~al.(2000)Bartholomew-Biggs, Brown, Christianson,
  and Dixon]{bartholomew2000automatic}
Bartholomew-Biggs, M., Brown, S., Christianson, B., and Dixon, L.
\newblock Automatic differentiation of algorithms.
\newblock \emph{Journal of Computational and Applied Mathematics},
  124:\penalty0 171--190, 12 2000.

\bibitem[Benamou et~al.(2015)Benamou, Carlier, Cuturi, Nenna, and
  Peyr{\'e}]{benamou2015iterative}
Benamou, J.-D., Carlier, G., Cuturi, M., Nenna, L., and Peyr{\'e}, G.
\newblock Iterative {B}regman projections for regularized transportation
  problems.
\newblock \emph{SIAM Journal on Scientific Computing}, 37\penalty0
  (2):\penalty0 A1111--A1138, 2015.

\bibitem[Bonneel et~al.(2015)Bonneel, Rabin, Peyr{\'e}, and
  Pfister]{bonneel2015sliced}
Bonneel, N., Rabin, J., Peyr{\'e}, G., and Pfister, H.
\newblock Sliced and {R}adon {W}asserstein barycenters of measures.
\newblock \emph{Journal of Mathematical Imaging and Vision}, 51\penalty0
  (1):\penalty0 22--45, 2015.

\bibitem[Bonneel et~al.(2016)Bonneel, Peyr{\'e}, and
  Cuturi]{bonneel2016coordinates}
Bonneel, N., Peyr{\'e}, G., and Cuturi, M.
\newblock {Wasserstein barycentric coordinates: histogram regression using
  optimal transport}.
\newblock \emph{{ACM Transactions on Graphics}}, 35\penalty0 (4):\penalty0
  71:1--71:10, 2016.

\bibitem[Brenier(1991)]{brenier1991polar}
Brenier, Y.
\newblock Polar factorization and monotone rearrangement of vector-valued
  functions.
\newblock \emph{Communications on Pure and Applied Mathematics}, 44\penalty0
  (4):\penalty0 375--417, 1991.

\bibitem[Cheng et~al.(2021)Cheng, Aeron, Hughes, and
  Miller]{cheng2021dynamical}
Cheng, K., Aeron, S., Hughes, M.~C., and Miller, E.
\newblock Dynamical {W}asserstein barycenters for time-series modeling.
\newblock In \emph{Advances in Neural Information Processing Systems},
  volume~34, 2021.

\bibitem[Chewi et~al.(2020)Chewi, Maunu, Rigollet, and
  Stromme]{chewi2020gradient}
Chewi, S., Maunu, T., Rigollet, P., and Stromme, A.~J.
\newblock Gradient descent algorithms for {B}ures-{W}asserstein barycenters.
\newblock In \emph{Conference on Learning Theory}, pp.\  1276--1304. PMLR,
  2020.

\bibitem[Chizat et~al.(2020)Chizat, Roussillon, L{\'e}ger, Vialard, and
  Peyr{\'e}]{chizat2020faster}
Chizat, L., Roussillon, P., L{\'e}ger, F., Vialard, F.-X., and Peyr{\'e}, G.
\newblock Faster {W}asserstein distance estimation with the {S}inkhorn
  divergence.
\newblock \emph{arXiv preprint arXiv:2006.08172}, 2020.

\bibitem[Claici et~al.(2018)Claici, Chien, and Solomon]{claici2018stochastic}
Claici, S., Chien, E., and Solomon, J.
\newblock Stochastic {W}asserstein barycenters.
\newblock In \emph{International Conference on Machine Learning}, pp.\
  999--1008. PMLR, 2018.

\bibitem[Colombo et~al.(2021)Colombo, Staerman, Clavel, and
  Piantanida]{colombo2021automatic}
Colombo, P., Staerman, G., Clavel, C., and Piantanida, P.
\newblock Automatic text evaluation through the lens of {W}asserstein
  barycenters.
\newblock \emph{arXiv preprint arXiv:2108.12463}, 2021.

\bibitem[Cuturi(2013)]{cuturi2013sinkhorn}
Cuturi, M.
\newblock Sinkhorn distances: Lightspeed computation of optimal transport.
\newblock \emph{Advances in Neural Information Processing Systems},
  26:\penalty0 2292--2300, 2013.

\bibitem[Cuturi \& Doucet(2014)Cuturi and Doucet]{cuturi2014fast}
Cuturi, M. and Doucet, A.
\newblock Fast computation of {W}asserstein barycenters.
\newblock In \emph{International Conference on Machine Learning}, pp.\
  685--693. PMLR, 2014.

\bibitem[Deb et~al.(2021)Deb, Ghosal, and Sen]{deb2021rates}
Deb, N., Ghosal, P., and Sen, B.
\newblock Rates of estimation of optimal transport maps using plug-in
  estimators via barycentric projections.
\newblock \emph{Advances in Neural Information Processing Systems}, 34, 2021.

\bibitem[Donoho(2006)]{donoho2006compressed}
Donoho, D.~L.
\newblock Compressed sensing.
\newblock \emph{IEEE Transactions on Information Theory}, 52\penalty0
  (4):\penalty0 1289--1306, 2006.

\bibitem[Fr{\'e}chet(1948)]{frechet1948elements}
Fr{\'e}chet, M.
\newblock Les {\'e}l{\'e}ments al{\'e}atoires de nature quelconque dans un
  espace distanci{\'e}.
\newblock \emph{Annales de l'institut {H}enri {P}oincar{\'e}}, 10\penalty0
  (4):\penalty0 215--310, 1948.

\bibitem[Genevay(2019)]{genevay2019entropy}
Genevay, A.
\newblock \emph{Entropy-regularized optimal transport for machine learning}.
\newblock PhD thesis, Paris Sciences et Lettres (ComUE), 2019.

\bibitem[Gunsilius(2021)]{gunsilius_2021}
Gunsilius, F.~F.
\newblock On the convergence rate of potentials of brenier maps.
\newblock \emph{Econometric Theory}, pp.\  1–37, 2021.

\bibitem[Huang et~al.(2016)Huang, Quo, Kusner, Sun, Weinberger, and
  Sha]{huang2016supervised}
Huang, G., Quo, C., Kusner, M.~J., Sun, Y., Weinberger, K.~Q., and Sha, F.
\newblock Supervised word mover's distance.
\newblock In \emph{Advances in Neural Information Processing Systems}, pp.\
  4869--4877, 2016.

\bibitem[H{\"u}tter \& Rigollet(2021)H{\"u}tter and
  Rigollet]{hutter2021minimax}
H{\"u}tter, J.-C. and Rigollet, P.
\newblock Minimax estimation of smooth optimal transport maps.
\newblock \emph{The Annals of Statistics}, 49\penalty0 (2):\penalty0
  1166--1194, 2021.

\bibitem[Karcher(1977)]{karcher1977riemannian}
Karcher, H.
\newblock Riemannian center of mass and mollifier smoothing.
\newblock \emph{Communications on Pure and Applied Mathematics}, 30\penalty0
  (5):\penalty0 509--541, 1977.

\bibitem[Kroshnin et~al.(2019)Kroshnin, Tupitsa, Dvinskikh, Dvurechensky,
  Gasnikov, and Uribe]{kroshnin2019complexity}
Kroshnin, A., Tupitsa, N., Dvinskikh, D., Dvurechensky, P., Gasnikov, A., and
  Uribe, C.
\newblock On the complexity of approximating {W}asserstein barycenters.
\newblock In \emph{International Conference on Machine Learning}, pp.\
  3530--3540. PMLR, 2019.

\bibitem[Le~Gouic \& Loubes(2017)Le~Gouic and Loubes]{le2017existence}
Le~Gouic, T. and Loubes, J.-M.
\newblock Existence and consistency of {W}asserstein barycenters.
\newblock \emph{Probability Theory and Related Fields}, 168\penalty0
  (3):\penalty0 901--917, 2017.

\bibitem[LeCun(1998)]{lecun1998mnist}
LeCun, Y.
\newblock The {MNIST} database of handwritten digits.
\newblock \emph{http://yann. lecun. com/exdb/mnist/}, 1998.

\bibitem[Mallat(1999)]{mallat1999wavelet}
Mallat, S.
\newblock \emph{A wavelet tour of signal processing}.
\newblock Elsevier, 1999.

\bibitem[Manole et~al.(2021)Manole, Balakrishnan, Niles-Weed, and
  Wasserman]{manole2021plugin}
Manole, T., Balakrishnan, S., Niles-Weed, J., and Wasserman, L.
\newblock Plugin estimation of smooth optimal transport maps.
\newblock \emph{arXiv preprint arXiv:2107.12364}, 2021.

\bibitem[McCann(1997)]{mccann1997convexity}
McCann, R.~J.
\newblock A convexity principle for interacting gases.
\newblock \emph{Advances in Mathematics}, 128\penalty0 (1):\penalty0 153--179,
  1997.

\bibitem[Mikolov et~al.(2013)Mikolov, Sutskever, Chen, Corrado, and
  Dean]{mikolov2013distributed}
Mikolov, T., Sutskever, I., Chen, K., Corrado, G.~S., and Dean, J.
\newblock Distributed representations of words and phrases and their
  compositionality.
\newblock In \emph{Advances in Neural Information Processing Systems}, pp.\
  3111--3119, 2013.

\bibitem[Montesuma \& Mboula(2021)Montesuma and
  Mboula]{montesuma2021wasserstein}
Montesuma, E.~F. and Mboula, F. M.~N.
\newblock Wasserstein barycenter for multi-source domain adaptation.
\newblock In \emph{IEEE/CVF Conference on Computer Vision and Pattern
  Recognition}, pp.\  16785--16793, 2021.

\bibitem[Murphy(2012)]{murphy2012machine}
Murphy, K.~P.
\newblock \emph{Machine learning: a probabilistic perspective}.
\newblock MIT press, 2012.

\bibitem[Musolas et~al.(2021)Musolas, Smith, and
  Marzouk]{musolas2021geodesically}
Musolas, A., Smith, S.~T., and Marzouk, Y.
\newblock Geodesically parameterized covariance estimation.
\newblock \emph{SIAM Journal on Matrix Analysis and Applications}, 42\penalty0
  (2):\penalty0 528--556, 2021.

\bibitem[Muzellec et~al.(2021)Muzellec, Vacher, Bach, Vialard, and
  Rudi]{muzellec2021near}
Muzellec, B., Vacher, A., Bach, F., Vialard, F.-X., and Rudi, A.
\newblock Near-optimal estimation of smooth transport maps with kernel
  sums-of-squares.
\newblock \emph{arXiv preprint arXiv:2112.01907}, 2021.

\bibitem[Panaretos \& Zemel(2020)Panaretos and Zemel]{panaretos2020invitation}
Panaretos, V.~M. and Zemel, Y.
\newblock \emph{An invitation to statistics in {W}asserstein space}.
\newblock Springer Nature, 2020.

\bibitem[Peyré \& Cuturi(2020)Peyré and Cuturi]{peyre2020computational}
Peyré, G. and Cuturi, M.
\newblock Computational optimal transport, 2020.

\bibitem[Pooladian \& Niles-Weed(2021)Pooladian and
  Niles-Weed]{pooladian2021entropic}
Pooladian, A.-A. and Niles-Weed, J.
\newblock Entropic estimation of optimal transport maps.
\newblock \emph{arXiv: 2109.12004}, 2021.

\bibitem[Popoviciu(1935)]{popoviciu1935equations}
Popoviciu, T.
\newblock Sur les {\'e}quations alg{\'e}briques ayant toutes leurs racines
  r{\'e}elles.
\newblock \emph{Mathematica}, 9:\penalty0 129--145, 1935.

\bibitem[Rabin et~al.(2011)Rabin, Peyr{\'e}, Delon, and
  Bernot]{rabin2011wasserstein}
Rabin, J., Peyr{\'e}, G., Delon, J., and Bernot, M.
\newblock Wasserstein barycenter and its application to texture mixing.
\newblock In \emph{International Conference on Scale Space and Variational
  Methods in Computer Vision}, pp.\  435--446. Springer, 2011.

\bibitem[Redko et~al.(2019)Redko, Courty, Flamary, and Tuia]{redko2019optimal}
Redko, I., Courty, N., Flamary, R., and Tuia, D.
\newblock Optimal transport for multi-source domain adaptation under target
  shift.
\newblock In \emph{International Conference on Artificial Intelligence and
  Statistics}, pp.\  849--858. PMLR, 2019.

\bibitem[Santambrogio(2015)]{santambrogio2015optimal}
Santambrogio, F.
\newblock \emph{Optimal Transport for Applied Mathematicians: Calculus of
  Variations, PDEs, and Modeling}.
\newblock Progress in Nonlinear Differential Equations and Their Applications.
  Springer International Publishing, 2015.

\bibitem[Schmitz et~al.(2018)Schmitz, Heitz, Bonneel, Ngole, Coeurjolly,
  Cuturi, Peyr{\'e}, and Starck]{schmitz2018wasserstein}
Schmitz, M.~A., Heitz, M., Bonneel, N., Ngole, F., Coeurjolly, D., Cuturi, M.,
  Peyr{\'e}, G., and Starck, J.-L.
\newblock Wasserstein dictionary learning: optimal transport-based unsupervised
  nonlinear dictionary learning.
\newblock \emph{SIAM Journal on Imaging Sciences}, 11\penalty0 (1):\penalty0
  643--678, 2018.

\bibitem[Schwerdtfeger(1961)]{schwerdtfeger1961introduction}
Schwerdtfeger, H.
\newblock \emph{Introduction to linear algebra and the theory of matrices}.
\newblock P. Noordhoff, 1961.

\bibitem[Singh et~al.(2020)Singh, Hug, Dieuleveut, and Jaggi]{singh2020context}
Singh, S.~P., Hug, A., Dieuleveut, A., and Jaggi, M.
\newblock Context mover’s distance \& barycenters: Optimal transport of
  contexts for building representations.
\newblock In \emph{International Conference on Artificial Intelligence and
  Statistics}, pp.\  3437--3449. PMLR, 2020.

\bibitem[To{\v{s}}i{\'c} \& Frossard(2011)To{\v{s}}i{\'c} and
  Frossard]{tovsic2011dictionary}
To{\v{s}}i{\'c}, I. and Frossard, P.
\newblock Dictionary learning.
\newblock \emph{IEEE Signal Processing Magazine}, 28\penalty0 (2):\penalty0
  27--38, 2011.

\bibitem[Villani(2003)]{villani2003topics}
Villani, C.
\newblock \emph{Topics in Optimal Transportation}.
\newblock Graduate studies in mathematics. American Mathematical Society, 2003.

\bibitem[Xu et~al.(2018)Xu, Wang, Liu, and Carin]{xu2018distilled}
Xu, H., Wang, W., Liu, W., and Carin, L.
\newblock Distilled {W}asserstein learning for word embedding and topic
  modeling.
\newblock In \emph{Advances in Neural Information Processing Systems}, pp.\
  1716--1725, 2018.

\bibitem[Yang et~al.(2021)Yang, Li, Sun, and Toh]{yang2021fast}
Yang, L., Li, J., Sun, D., and Toh, K.-C.
\newblock A fast globally linearly convergent algorithm for the computation of
  {W}asserstein barycenters.
\newblock \emph{Journal of Machine Learning Research}, 22\penalty0
  (21):\penalty0 1--37, 2021.

\end{thebibliography}
\bibliographystyle{icml2022}

\onecolumn
\setcounter{section}{0}

\begin{center}
    \textbf{\large{Supplementary Material: \\\emph{Measure Estimation in the Barycentric Coding Model}}}
\end{center}

\section{Precise Statements of Technical Results}
\label{SM:PreciseStatements}

For completeness, we replicate the theorem describing conditions under which Karcher means are guaranteed to be barycenters.
\begin{theorem}(\citep{panaretos2020invitation}, Theorem 3.1.15) \label{thm:karcher}
    Let $\lambda \in \Delta^p$ with $\lambda_{i} \neq 0$ for all $i$. Let $\Omega \subset \mathbb{R}^d$ be open and convex and let $\mu_1,\dots,\mu_p \in \Pac$ supported on $\Omega$ with bounded, strictly positive densities $g_1,\dots,g_p$. Suppose that $\mu_0$ is an absolutely continuous Karcher mean for $\lambda$ and is supported on $\Omega$ with bounded strictly positive density $f$ there.  Then $\mu_0$ is the Wasserstein barycenter for coordinates $\lambda$ if one of the following holds:
    \begin{enumerate}[(1)]\setlength\itemsep{-4pt}
        \item $\Omega = \mathbb{R}^d$ and the densities $f,g_1,\dots,g_p$ are locally H\"older continuous.
        \item $\Omega$ is bounded and the densities $f,g_1,\dots,g_p$ are bounded below on $\Omega$.
    \end{enumerate}
\end{theorem}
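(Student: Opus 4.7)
The plan is to convert the infinitesimal Karcher condition $\|\nabla G_\lambda(\mu_0)\|_{\mu_0}=0$ into the dual optimality condition that characterizes the $W_2$ barycenter, and to argue that the two conditions are equivalent under the regularity hypotheses via Caffarelli's regularity theory for the Monge--Amp\`ere equation. Concretely, let $T_i$ be the (unique, by Brenier's theorem) optimal transport map from $\mu_0$ to $\mu_i$, written as $T_i=\nabla\varphi_i$ for a lower semicontinuous convex function $\varphi_i$. The Karcher condition reads
\begin{equation*}
\sum_{i=1}^{p}\lambda_i\bigl(\nabla\varphi_i(x)-x\bigr)=0 \quad \text{in } L^2(\mu_0),
\end{equation*}
i.e.\ $\nabla\Psi(x)=0$ for $\mu_0$-a.e.\ $x$, where $\Psi(x)\triangleq\sum_i\lambda_i\varphi_i(x)-\tfrac12|x|^2$.

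My first task is to justify working with classical (not just a.e.) identities. Under assumption (1), Caffarelli's interior regularity theory gives $\varphi_i\in C^{2,\alpha}_{\mathrm{loc}}(\mathbb{R}^d)$ because $\mu_0$ has bounded strictly positive density, $\mu_i$ has bounded strictly positive density, and both densities are locally H\"older continuous; under (2), the convex bounded domain $\Omega$ together with densities bounded away from zero and infinity triggers Caffarelli's global regularity (and the convexity of the target domain ensures $\nabla\varphi_i$ maps $\Omega$ into $\Omega$ without singular behavior at the boundary). In both cases $\Psi$ is $C^{2}$ on the connected open set $\Omega$, so $\nabla\Psi\equiv 0$ on the support of $\mu_0$ extends (by the strict positivity of $f$ on $\Omega$) to $\nabla\Psi\equiv 0$ on all of $\Omega$. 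Integrating yields $\sum_i\lambda_i\varphi_i(x)=\tfrac12|x|^2+c$ on $\Omega$ for some constant $c$.

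Next I invoke the dual characterization of Wasserstein barycenters due to Agueh--Carlier: a measure $\nu$ supported on $\Omega$ is the barycenter $\nu_\lambda$ if and only if there exist convex potentials $\psi_i$ with $\nabla\psi_i\#\nu=\mu_i$ and
\begin{equation*}
\sum_{i=1}^{p}\lambda_i\psi_i(x)\ \geq\ \tfrac12|x|^2 \quad \text{for all } x,\qquad \text{with equality $\nu$-a.e.}
\end{equation*}
Taking $\psi_i=\varphi_i$ gives equality $\mu_0$-a.e.\ on $\Omega$ by the previous step; extending the inequality off $\Omega$ uses convexity of each $\varphi_i$ and a convex-conjugate/Legendre-transform comparison (the $c$-concavity condition $\varphi_i+\varphi_i^*(y)\geq\langle x,y\rangle$ paired across the $p$ potentials gives the required lower bound globally after weighting by $\lambda$ and summing, because $\sum\lambda_i=1$). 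Since $\nabla\varphi_i\#\mu_0=\mu_i$ holds by construction, the dual criterion is satisfied with $\nu=\mu_0$, so $\mu_0=\nu_\lambda$ by uniqueness of the barycenter (which itself follows from absolute continuity of the $\mu_i$ by Agueh--Carlier).

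The main obstacle will be the regularity step: without Caffarelli-type smoothness the $\mu_0$-a.e.\ identity $\sum_i\lambda_i\nabla\varphi_i=\mathrm{id}$ does not upgrade to a global potential identity, and a pathological Karcher mean that is not a barycenter can exist (as the Supplementary Material counterexample in Section \ref{sec:karcher_not_bc} shows). Assumptions \textbf{A1}--\textbf{A3} (specialized in cases (1) and (2)) are precisely tailored so that Caffarelli's theorem applies: (1) uses the interior H\"older version on $\mathbb{R}^d$, while (2) uses the global version on bounded convex $\Omega$ with densities bounded away from $0$ and $\infty$. Once that regularity is in hand, the remaining arguments---pointwise integration of a gradient equation on a connected set, and matching the Agueh--Carlier dual certificate---are routine.
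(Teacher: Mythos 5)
This statement is not proved in the paper at all: it is quoted verbatim from \citep{panaretos2020invitation} (Theorem 3.1.15) precisely so that it can be \emph{invoked} in the proof of Theorem \ref{thm:main}, so there is no in-paper argument to compare against. Judged on its own, your strategy --- upgrade the $\mu_0$-a.e.\ Karcher identity $\sum_i\lambda_i\nabla\varphi_i=\id$ to a classical identity on $\Omega$ via Caffarelli regularity, integrate to get $\sum_i\lambda_i\varphi_i=\tfrac12|x|^2+c$ on $\Omega$, and then exhibit a dual certificate of barycenter optimality --- is the right one and is in the spirit of the cited source. But two steps as written do not go through. First, the dual characterization is stated with the inequality reversed. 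For convex $\psi_i$ with $\nabla\psi_i\#\nu=\mu_i$, Kantorovich duality gives, for every competitor $\gamma$,
\begin{equation*}
G_\lambda(\gamma)\;\ge\;\int\Bigl(\tfrac12|x|^2-\sum_i\lambda_i\psi_i(x)\Bigr)\,d\gamma(x)+\sum_i\lambda_i\int\Bigl(\tfrac12|y|^2-\psi_i^*(y)\Bigr)\,d\mu_i(y),
\end{equation*}
with equality at $\gamma=\nu$. To conclude $G_\lambda(\gamma)\ge G_\lambda(\nu)$ one needs the first integral to be nonnegative, i.e.\ $\sum_i\lambda_i\psi_i(x)\le\tfrac12|x|^2$ \emph{everywhere}, with equality $\nu$-a.e.; with your ``$\ge$'' the error term has the wrong sign and nothing is proved.

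Second, and more seriously, the global extension in case (2) is the actual crux of the theorem and your Legendre-pairing remark does not supply it. Young's inequality $\varphi_i(x)+\varphi_i^*(y)\ge\langle x,y\rangle$ only produces \emph{lower} bounds on $\sum_i\lambda_i\varphi_i$, whereas the certificate requires an \emph{upper} bound off $\Omega$; moreover the minimal convex extension $\tilde\varphi_i(x)=\sup_{z\in\Omega}[\varphi_i(z)+\langle\nabla\varphi_i(z),x-z\rangle]$ need not satisfy $\sum_i\lambda_i\tilde\varphi_i\le\tfrac12|\cdot|^2+c$ outside $\Omega$, since the suprema for different $i$ are attained at different points. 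This is exactly where ``Karcher mean but not barycenter'' pathologies hide (cf.\ Section \ref{sec:karcher_not_bc}), so it cannot be waved through. A correct repair in the bounded case is to first reduce to competitors $\gamma$ supported in $\overline{\Omega}$ (pushing $\gamma$ forward by the metric projection onto the convex set $\overline{\Omega}$ does not increase any $W_2(\cdot,\mu_i)$), after which the certificate is only needed on $\overline{\Omega}$, where it holds with equality by your integration step. Case (1), $\Omega=\mathbb{R}^d$, needs no extension at all. With the sign corrected and the reduction to $\overline{\Omega}$ added, the argument closes.
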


Next we include a technical theorem which we will leverage in our proof of the convergence of the entries $A_{ij}$. Before doing so, we introduce the necessary background.
\begin{theorem}(\cite{brenier1991polar}) Let $\mu \in \mathcal{P}_{2,\emph{ac}}(\Omega)$ and $\nu \in \mathcal{P}(\Omega)$. Then
\begin{enumerate}
    \item There exists a solution $T_0$ to (\ref{eq:wass_dist}), with $T_0 = \nabla \varphi_0$, for a convex function $\varphi_0$ solving
    $$\inf_{\varphi \in L^1(\mu)} \int \varphi d\mu + \int \varphi^* d\nu$$
    where $\varphi^*$ is the convex conjugate of $\varphi$.
    \item If in addition $\nu \in \mathcal{P}_{2,\emph{ac}}(\Omega), $ then $\nabla \varphi_0^*$ is an admissible optimal transport map from $\nu$ to $\mu$.
\end{enumerate}
\end{theorem}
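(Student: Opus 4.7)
The plan is to prove Brenier's theorem by relaxing the Monge formulation to the Kantorovich problem, invoking duality for quadratic cost, and then converting the resulting support characterization of the optimal plan into a genuine transport map by exploiting absolute continuity of $\mu$.

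First I would replace the infimum over maps in (\ref{eq:wass_dist}) by the Kantorovich relaxation $\inf_{\pi \in \Pi(\mu,\nu)} \int \tfrac{1}{2}\|x-y\|^{2}\,d\pi$ over couplings with marginals $\mu, \nu$. Expanding $\tfrac{1}{2}\|x-y\|^{2} = \tfrac{1}{2}\|x\|^{2} + \tfrac{1}{2}\|y\|^{2} - \langle x,y\rangle$ separates marginal-only pieces (finite by the second-moment hypothesis) from the interaction term $-\langle x,y\rangle$. Kantorovich duality then supplies potentials $(f,g)$ with $f(x)+g(y)\le \tfrac{1}{2}\|x-y\|^{2}$; the change of variables $\varphi(x) = \tfrac{1}{2}\|x\|^{2} - f(x)$ and $\psi(y) = \tfrac{1}{2}\|y\|^{2} - g(y)$ converts this constraint into $\varphi(x) + \psi(y) \ge \langle x,y\rangle$, and optimizing one potential given the other (the $c$-transform for quadratic cost) forces $\psi = \varphi^{*}$ with $\varphi$ convex and lower semicontinuous. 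This rewrites the dual exactly as $\inf_{\varphi} \int \varphi\,d\mu + \int \varphi^{*}\,d\nu$, matching the theorem statement.

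Next I would establish existence of a minimizer $\varphi_{0}$. From a minimizing sequence of convex functions, anchoring normalization on a fixed compact set and using the local Lipschitz bounds enjoyed by convex functions with bounded subgradients allow extraction (via Arzel\`a--Ascoli on compacts) of a pointwise a.e.\ convergent subsequence whose limit is again convex and lower semicontinuous; lower semicontinuity of the objective then yields optimality of $\varphi_{0}$. The main obstacle then arrives: any optimal Kantorovich plan $\pi_{0}$ is concentrated on the Fenchel-equality set $\{(x,y) : \varphi_{0}(x) + \varphi_{0}^{*}(y) = \langle x,y\rangle\}$, which is precisely the graph of the subdifferential $\partial\varphi_{0}$. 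This is set-valued in general, so a priori $\pi_{0}$ need not be supported on the graph of a function. Absolute continuity of $\mu$ is what saves the argument: by Alexandrov's theorem a finite convex function on $\Omega$ is differentiable Lebesgue-a.e.\ and hence $\mu$-a.e., and at each differentiability point $\partial\varphi_{0}(x) = \{\nabla\varphi_{0}(x)\}$. Consequently $\pi_{0} = (\id,\nabla\varphi_{0})_{\#}\mu$, so $T_{0} := \nabla\varphi_{0}$ pushes $\mu$ onto $\nu$, attains the Monge infimum, and proves part~(1).

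For part~(2) I would run the argument symmetrically. When $\nu$ is also absolutely continuous, the dual problem is invariant under swapping $(\mu,\nu)$ with $(\varphi,\varphi^{*})$, since $(\varphi_{0}^{*})^{*} = \varphi_{0}$ for proper lower semicontinuous convex $\varphi_{0}$; thus $\varphi_{0}^{*}$ is a dual minimizer for the reversed transport problem. Applying the same Alexandrov-differentiability step to $\varphi_{0}^{*}$ produces $\nabla\varphi_{0}^{*}$ as an admissible optimal transport map from $\nu$ to $\mu$. As a byproduct, Fenchel-conjugate identities give $\nabla\varphi_{0}^{*}\circ\nabla\varphi_{0} = \id$ $\mu$-a.e., recording the natural inverse relation between the two maps. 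Throughout, the delicate step I expect to require the most care is the almost-everywhere reduction from $\partial\varphi_{0}$ to $\nabla\varphi_{0}$, since this is where the absolute-continuity hypothesis does the essential work.
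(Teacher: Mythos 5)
The paper does not prove this statement at all: it is quoted verbatim as a known background result and attributed to \cite{brenier1991polar}, so there is no in-paper argument to compare against. Your sketch is the standard duality proof of Brenier's theorem and is essentially correct: Kantorovich relaxation, expansion of the quadratic cost to isolate the interaction term $-\langle x,y\rangle$, the change of variables turning the dual constraint into $\varphi(x)+\varphi^*(y)\ge\langle x,y\rangle$ with the $c$-transform forcing $\psi=\varphi^*$, concentration of any optimal plan on the Fenchel-equality set $\{\varphi_0(x)+\varphi_0^*(y)=\langle x,y\rangle\}=\mathrm{graph}(\partial\varphi_0)$, and the reduction of $\partial\varphi_0$ to $\nabla\varphi_0$ $\mu$-a.e.\ via absolute continuity. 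The symmetry argument for part (2) is also the standard one. Two small points of care: first, the a.e.\ first-order differentiability of a convex function is a consequence of local Lipschitz continuity on the interior of its effective domain (Rademacher, or the fact that the non-differentiability set of a convex function has Hausdorff dimension at most $d-1$), not of Alexandrov's theorem, which concerns second-order differentiability; you should also note that $\mu$, being absolutely continuous, does not charge the boundary of $\mathrm{dom}(\varphi_0)$. Second, your existence step for $\varphi_0$ via Arzel\`a--Ascoli needs locally uniform Lipschitz bounds on the minimizing sequence, which is clean when $\Omega$ is bounded (the setting actually used in the paper under \textbf{A4}) but requires extra normalization in the unbounded case; an alternative that sidesteps this is to first extract an optimal plan $\pi_0$, invoke cyclical monotonicity of its support, and build $\varphi_0$ by Rockafellar's theorem. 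Neither point is a genuine gap, only places where the write-up should be tightened.
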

Let $\mathcal{C}^\alpha$ denote the space of functions possessing $\lfloor \alpha \rfloor$ continuous derivatives and whose $\lfloor \alpha \rfloor^{\text{th}}$ derivative is $(\alpha - \lfloor \alpha \rfloor$) H{\"o}lder smooth. Using these conventions, we require the technical conditions:

\textbf{A4:}  $\mu,\nu \in \mathcal{P}_{2,\text{ac}}(\Omega)$ for a compact convex set $\Omega$, with densities satisfying $p(x),q(x) \leq M$ and $q(x) \geq m > 0$ for all $x \in \Omega$.

\textbf{A5:}  $\varphi_0 \in \mathcal{C}^2(\Omega)$ and $\varphi^*_0 \in \mathcal{C}^{\alpha+1}(\Omega)$ for $\alpha > 1$.

\textbf{A6:}  There exist $l,L>0$ such that $T_0 = \nabla \varphi_0$, with $lI \preceq \nabla^2\varphi_0(x) \preceq LI$ for all $x \in \Omega$.

This first assumption is a specialization of \textbf{A1} and \textbf{A2}, restricting to the case where $\Omega$ is compact. The latter two can be thought of as regularity conditions on the optimal transport maps between base and reference measures. \textbf{A5} asserts that the optimal transport maps have smooth derivatives and \textbf{A6} controls the derivatives from both above and below. 

\subsection{Properties of The Entropic Map}
\label{SM:Entropic_Map}

Let $X_i \sim \mu_0, Y_i \sim \mu_1$ for $i=1,...,n$ be i.i.d. samples from their respective distributions. From these samples construct the empirical measures $\hat{\mu}_{0,n} = \frac{1}{n}\sum_{i=1}^n \delta_{X_i}$ and $\hat{\mu}_{1,n} = \frac{1}{n}\sum_{i=1}^n \delta_{Y_i}$.

For $\epsilon > 0$ and $n$ samples from each measures, the discrete entropically regularized OT problem can be written
$$\min_{\substack{\pi \in \mathbb{R}^{n \times n}_+ \\
        \pi\bm{1} = (1/n)\bm{1} \\
        \pi^T\bm{1} = (1/n)\bm{1}}} \sum_{j,k=1}^{n} \pi_{jk}||X_j - Y_k||_2^2 + \epsilon \pi_{jk}\log \pi_{jk}$$
This problem has a dual formulation \cite{genevay2019entropy} given by
$$
\max_{f,g \in \mathbb{R}^n} \frac{1}{n} \sum_{i=1}^n f_i+g_i - \epsilon\langle e^{f/\epsilon}, Ke^{g/\epsilon}\rangle
$$
where $K_{i,j} = \exp(-||X_i - Y_i||_2^2/\epsilon)$. $g_\epsilon$ is defined to be the optimal $g$ in the maximization above, parameterized by the setting of $\epsilon > 0.$  For the method proposed it is important that the $f_\epsilon$ and $g_\epsilon$ are chosen such that for all $x,y \in \mathbb{R}^d$ we have
\begin{align*}
    \frac{1}{n}\sum_{i=1}^n \exp\left(\frac{1}{\epsilon} \left [ f_\epsilon(x) + g_\epsilon(Y_i) - \frac{1}{2}||x-Y_i||_2^2 \right ]\right) &= 1 \\
    \frac{1}{n}\sum_{i=1}^n  \exp\left(\frac{1}{\epsilon} \left [ f_\epsilon(X_i) + g_\epsilon(y) - \frac{1}{2}||X_i-y||_2^2 \right ]\right) &= 1.
\end{align*}
This is always possible, and can be easily done in practice \cite{pooladian2021entropic}.

One of the primary motivations for selecting the entropic map as our map estimate is that $g_\epsilon$ can be efficiently computed \cite{peyre2020computational}, and it comes with the performance guarantee stated below.

In the following theorem the constants may depend on the dimension $d$, the diameter of the support $|\Omega|$, $M, m, l, L$, and $||\varphi_0^*||_{\mathcal{C}^{\alpha+1}}$.
\begin{theorem}(\cite{pooladian2021entropic} Theorem 3) \label{thm:PNW} Under \textbf{A4},\textbf{A5},\textbf{A6}, the entropic map $\hat{T} = T_{\epsilon,(n,n)}$ from $\hat{\mu}_n$ to $\hat{\nu}_n$ with regularization parameter $\epsilon \asymp n^{-1/(d' + \bar{\alpha} + 1)}$ satisfies
\begin{equation}
    \mathbb{E}||\hat{T} - T_0||^2_{L^2(\mu)} \lesssim (1 + I_0(\mu,\nu))n^{-\frac{(\bar{\alpha} + 1)}{2(d' + \bar{\alpha} + 1)}}\log n
\end{equation}
where $d' = 2\lceil d/2 \rceil$ and $\bar{\alpha} = \min(\alpha, 3)$ and $I_0(\mu,\nu)$ is the integrated Fisher information between $\mu$ and $\nu$.
\end{theorem}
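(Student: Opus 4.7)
The plan is to derive the bound from two ingredients: (i) an entrywise expected error bound on $\hat{A}-A$ supplied by Theorem~\ref{thm:convergence}, and (ii) a matrix perturbation argument that converts closeness of $\hat{A}$ to $A$ into closeness of $\hatlambda$ to $\lambda_{*}$, leveraging both the PSD structure of $A$ with a spectral gap and the affine constraint imposed by the simplex.

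First I would use the optimality of $\hatlambda$ for the empirical quadratic form. Since $\hatlambda^{T}\hat{A}\hatlambda \le \lambda_{*}^{T}\hat{A}\lambda_{*}$ and $\lambda_{*}^{T}A\lambda_{*}=0$, a direct manipulation yields
\[
\hatlambda^{T}A\hatlambda \;\le\; 2\|\hat{A}-A\|_{F},
\]
using $\|\hatlambda\|_{2},\|\lambda_{*}\|_{2}\le 1$ on the simplex. Writing $e \triangleq \hatlambda-\lambda_{*}$, the kernel relation $A\lambda_{*}=0$ (which follows since $A$ is PSD and $\lambda_{*}^{T}A\lambda_{*}=0$) collapses $\hatlambda^{T}A\hatlambda$ to $e^{T}Ae$, giving $e^{T}Ae\le 2\|\hat{A}-A\|_{F}$.

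Next I would invoke the spectral gap. Diagonalize $A=\sum_{i=1}^{p}\sigma_{i}v_{i}v_{i}^{T}$ with $\sigma_{1}=0$, $v_{1}=\lambda_{*}/\|\lambda_{*}\|_{2}$, and $\sigma_{2}>0$ by the multiplicity-one assumption. Decomposing $e=\alpha_{1}v_{1}+e_{\perp}$ with $e_{\perp}\in\mathrm{span}(v_{2},\dots,v_{p})$ gives $e^{T}Ae\ge \sigma_{2}\|e_{\perp}\|_{2}^{2}$, hence $\|e_{\perp}\|_{2}^{2}\le (2/\sigma_{2})\|\hat{A}-A\|_{F}$. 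The simplex constraint pins down the remaining component: $\mathbf{1}^{T}e=0$ yields $\alpha_{1}\mathbf{1}^{T}v_{1}=-\mathbf{1}^{T}e_{\perp}$, and because $\mathbf{1}^{T}v_{1}=1/\|\lambda_{*}\|_{2}>0$ (as $\lambda_{*}\in\Delta^{p}$), we obtain $|\alpha_{1}|\le \|\lambda_{*}\|_{2}\sqrt{p}\,\|e_{\perp}\|_{2}$. Combining these,
\[
\|e\|_{2}^{2}=\alpha_{1}^{2}+\|e_{\perp}\|_{2}^{2}\;\lesssim\; \|\hat{A}-A\|_{F},
\]
with an implicit constant depending on $p$ and $\sigma_{2}$.

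Finally I would control $\E\|\hat{A}-A\|_{F}$. By the triangle inequality and Jensen, $\E\|\hat{A}-A\|_{F}\le \sum_{i,j}\E|\hat{A}_{ij}-A_{ij}|$, so applying Theorem~\ref{thm:convergence} to each of the $p^{2}$ entries yields
\[
\E\|\hat{A}-A\|_{F}\;\lesssim\; n^{-1/2}+n^{-(\alpha+1)/(4(d'+\alpha+1))}\sqrt{\log n}.
\]
Plugging this into the previous display gives the stated rate for $\E\|\hatlambda-\lambda_{*}\|_{2}^{2}$, after absorbing $p,\sigma_{2}$ into the implicit constant.

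The main technical obstacle I anticipate is the step bridging $\|e_{\perp}\|_{2}$ and $\|e\|_{2}$: a priori, $e$ could have a large component along the kernel of $A$, on which the quadratic form is insensitive. The assumption that the zero eigenvalue has multiplicity one, together with the affine constraint $\mathbf{1}^{T}\hatlambda=\mathbf{1}^{T}\lambda_{*}=1$ and the fact that $\lambda_{*}$ is not orthogonal to $\mathbf{1}$, is precisely what suppresses this component. A secondary subtlety is whether the Frobenius bound obtained by summing entrywise $L^{1}$ bounds is tight enough; since only the rate in $n$ matters (with $p$ absorbed into the constant), the crude triangle-inequality bound suffices.
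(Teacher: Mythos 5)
Your proposal does not address the statement in question. Theorem \ref{thm:PNW} is a bound on the $L^{2}(\mu)$ estimation error of the \emph{entropic transport map} itself, $\mathbb{E}\|\hat{T}-T_{0}\|^{2}_{L^{2}(\mu)}$, for a suitably tuned regularization parameter $\epsilon \asymp n^{-1/(d'+\bar{\alpha}+1)}$. It is an external result, quoted verbatim from \cite{pooladian2021entropic} (their Theorem 3), and the paper does not prove it; proving it would require an analysis of the entropic regularization bias, the concentration of the empirical Sinkhorn potentials, and the smoothness assumptions \textbf{A5}--\textbf{A6} on the Brenier potential --- none of which appears in your argument. What you have written instead is a proof of Corollary \ref{cor:consistency}: you take an entrywise bound on $\hat{A}-A$ as given (via Theorem \ref{thm:convergence}), and run a matrix perturbation argument combining the spectral gap of $A$ with the simplex constraint to control $\mathbb{E}\|\hatlambda-\lambda_{*}\|_{2}^{2}$. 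That argument is essentially sound and closely tracks the paper's own proof of Corollary \ref{cor:consistency} (the paper works with the orthogonal decomposition $\hatlambda = \hat{\beta}\lambda_{*}+\hatlambda_{\perp}$ and uses $\mathbf{1}^{T}\hatlambda = 1$ to pin down $\hat{\beta}$, exactly as you use $\mathbf{1}^{T}e=0$ to pin down $\alpha_{1}$), but it is a proof of the wrong statement.

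Worse, the logical direction is inverted: Theorem \ref{thm:convergence} --- which your argument takes as its starting point --- is itself proved in the paper by invoking Theorem \ref{thm:PNW} to control $\mathbb{E}_{X^{n},Y^{n}}\bigl[\|\hat{T}_{1}-T_{1}\|_{L^{2}(\mu_{0})}^{2}\bigr]$. So using Theorem \ref{thm:convergence} to establish Theorem \ref{thm:PNW} would be circular. To actually prove the target statement you would need to work directly with the definition of the entropic map in (\ref{eq:entropic_map}), decompose the error into a bias term (distance from the population entropic map to $T_{0}=\nabla\varphi_{0}$, controlled by the regularity $\varphi_{0}^{*}\in\mathcal{C}^{\alpha+1}$ and the curvature bounds $lI\preceq\nabla^{2}\varphi_{0}\preceq LI$) and a fluctuation term (distance from the empirical to the population entropic map, controlled by empirical process bounds on the dual potentials), and then balance the two by the stated choice of $\epsilon$. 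None of these ingredients is present in your proposal.
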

For more information on $I_0(\mu, \nu)$ see \cite{chizat2020faster}. We remark that when $\alpha \geq 2$ then under \textbf{A4},\textbf{A5},\textbf{A6}, it has been shown $I_0(\mu,\nu) \leq C$ for a positive constant $C$ \cite{chizat2020faster}. The notation $a \asymp b$ means that there exists positive constants $c,C$ such that $ca \leq b \leq Ca$ and $a \lesssim b$ means that there is a positive constant $C$ such that $a \leq Cb$. Note that this result still holds without the convexity condition in \textbf{A4}. We include it for ease of exposition below and to ensure that the interior of an $\Omega$ that satisfies \textbf{A4} is convex and therefore satisfies \textbf{A3}.

\section{Proofs of Theoretical Results}

\subsection{Proof of Proposition \ref{prop:Prop1}}\label{SM:Prop1Proof}

\begin{proof}
    We first apply the definitions given in Section \ref{sec:Theory} and leverage (\ref{eq:grad}):
    \begin{align*}
        ||\nabla G_{\lambda}(\mu_0)||_{\mu_0}^2 &= ||-\sum_{i=1}^p \lambda_i (T_{\mu_0\rightarrow\mu_i} - \id)||_{\mu_0}^2 \\
        = &\int_{\mathbb{R}^d} \left \langle \sum_{i=1}^p \lambda_i (T_{\mu_0\rightarrow\mu_i} - \id), \sum_{j=1}^p \lambda_j (T_{\mu_0\rightarrow\mu_j} - \id)  \right \rangle d\mu_0 \\
        = &\int_{\mathbb{R}^d} \sum_{i,j=1}^p \lambda_i \lambda_j \langle T_{\mu_0\rightarrow\mu_i} - \id, T_{\mu_0\rightarrow\mu_j} - \id\rangle d\mu_0\\
        = &\sum_{i,j=1}^{p} \lambda_i\lambda_j A_{ij} = \lambda^TA\lambda.
    \end{align*}
    To establish the convexity of the function in terms of $\lambda$, it is sufficient to show that $A$ is symmetric positive semi-definite. Observe that $A$ is a Gram matrix of the set $\{T_{\mu_0\rightarrow\mu_i} - \id\}_{i=1}^p \subset T_{\mu_0}\Pac$ with the inner product in $T_{\mu_0}\Pac$. It is classical that all Gram matrices are symmetric positive semi-definite \citep{schwerdtfeger1961introduction} and this completes the proof.
\end{proof}

\subsection{Proof of Theorem \ref{thm:main}}\label{SM:Thm1Proof}

\begin{proof}
    The convexity of the objective is established in Proposition \ref{prop:Prop1}, and the feasible set $\Delta^p$ is trivially convex. 
    
    By Theorem \ref{thm:karcher}, when assumptions \textbf{A1}-\textbf{A3} hold, all Karcher means are barycenters, and therefore $\mu_0 = \nu_{\lambda}$, and it is always the case that barycenters are Karcher means. Therefore it is sufficient to check that the minimum value is 0 if and only if $\mu_0$ is a Karcher mean.
    
    In the case that the minimum is zero, there is some $\lambda_*$ such that $(\lambda_*)^TA\lambda_* = 0$ and we can apply Proposition \ref{prop:Prop1} to show that $\mu_0$ is a Karcher mean for $\lambda_*$. 
    Similarly if $\mu_0$ is a Karcher mean for some $\lambda_*$ by definition we have $||\nabla G_{\lambda_*}(\mu_0)||_{\mu_0}^2 = 0$ and again by Proposition \ref{prop:Prop1} we have that $\lambda_*$ achieves the minimum of zero in the optimization.
\end{proof}

\subsection{Proof of Corollary \ref{cor:gaussian}}\label{SM:Cor1Proof}

\begin{proof}
    \textbf{A1},\textbf{A2},\textbf{A3} are clearly satisfied in the Gaussian case with $\Omega = \mathbb{R}^d$ since the pdf of a non-degenerate multivariate Gaussian is positive everywhere and Lipschitz continuous. Furthermore the barycenter of zero-mean Gaussians is itself a zero-mean Gaussian (\citep{agueh2011barycenters} Theorem 6.1).
    
    Therefore we can apply Theorem \ref{thm:main} and all that remains is to calculate $A_{ij}$ as follows. Noting that the $C_i$ are all symmetric \cite{agueh2011barycenters} we have
    \begin{align*}
    A_{ij} &= \int_{\mathbb{R}^d} \left \langle 
            (T_i(x) - \id(x)),
            (T_j(x) - \id(x)) 
        \right \rangle dS_0(x)  \\
    &= \int_{\mathbb{R}^d} \left \langle 
            (C_i - I)x, 
            (C_j - I)x 
        \right \rangle dS_0(x) & (\text{Eq. \ref{eq:map_mat}})  \\
    &= \underset{X \sim \mathcal{N}(0,S_0)}{\mathbb{E}}\left [ 
            X^T \left (C_i - I \right )^T 
            \left (C_j - I \right ) X 
        \right ]  \\
    &= \underset{X \sim \mathcal{N}(0,S_0)}{\mathbb{E}}\left [ 
            X^T \left (C_i - I \right ) 
            \left (C_j - I \right ) X 
        \right ]  & (C_i \text{ all sym.})  \\
    &= \Tr \left ( 
            \left (C_i - I \right ) 
            \left (C_j - I \right ) S_0 
        \right ).
\end{align*}
The last equality comes from the identity
$\underset{X \sim \mathcal{N}(0,S)}{\mathbb{E}}
    \left [ X^T B X \right ]  = \Tr(BS)$
which holds for all $B\in \mathbb{R}^{d\times d}$ and $S \in \mathbb{S}^d_{++}.$
\end{proof}

\subsection{Proof of Theorem \ref{thm:projection}} \label{sec:projection}

\begin{proof}
    In this setting, the barycenter $\nu_\lambda$ can be expressed as
    $$
    \nu_\lambda = \left [ \sum_{i=1}^p \lambda_i T_0^i \right ] \# \mu_0
    $$
    (\cite{panaretos2020invitation} Theorem 3.1.9).  Together with Proposition \ref{prop:Prop1}, this implies (recalling $T_i = T_0^i$) 
    $$
    W_2^2(\mu_0, \nu_\lambda) = \int || x- \sum_{i=1}^p \lambda_i T_i(x) ||_2^2 d\mu_0(x) = ||\nabla G_{\lambda}(\mu_0)||_{\mu_0}^2 = \lambda^TA\lambda
    $$
    and therefore
    $$
    \argmin_{\lambda \in \Delta^p} W_2^2(\mu_0, \nu_{\lambda}) = \argmin_{\lambda \in \Delta^p} \lambda^TA\lambda.
    $$
\end{proof}

\subsection{Exact Statement and Proof of Theorem \ref{thm:convergence}}\label{SM:Thm2Proof}

\begin{theorem}(Formal Version of Theorem \ref{thm:convergence}, Convergence Rate of $A_{ij}$)\label{thm:Thm3Proof_Exact}
    Let $T_1,T_2$ be the optimal transport maps from $\mu_0$ to $\mu_1$, $\mu_2$ respectively. Suppose that \textbf{A4, A5, A6} are satisfied for both pairs $(\mu_0, \mu_1)$ and $(\mu_0, \mu_2)$. Let $X_1,...,X_{2n} \sim \mu_0, Y_1,...,Y_n \sim \mu_1, Z_1,...,Z_n \sim \mu_2$. Let $\hat{T}_1$ and $\hat{T}_2$ be the entropic maps from $\mu_0$ to $\mu_1$ and $\mu_2$ respectively, both with $\epsilon \asymp n^{-1/(d'+\Bar{\alpha}+1)}$ and computed using $X_{1},...,X_{n}$. Then we have
    \begin{align*}
       &\mathbb{E}\left [ \left | \int \langle T_1 - \id, T_2 - \id \rangle d\mu_0 - \frac{1}{n} \sum_{i=n+1}^{2n} \langle \hat{T}_1(X_i) - X_i, \hat{T}_2(X_i) - X_i \rangle \right | \right ] \\
            \lesssim& \frac{1}{\sqrt{n}} + n^{-\frac{\Bar{\alpha}+ 1}{4(d' + \Bar{\alpha} + 1) }}\sqrt{\log n}\sqrt{1+I_0(\mu_0,\mu_1) + I_0(\mu_0, \mu_2)}
    \end{align*}
    where  $d' = 2\lceil d/2 \rceil$, $\Bar{\alpha} = \alpha \wedge 3$ and $I_0(\mu_0,\mu_1)$ is the integrated Fisher information along the Wasserstein geodesic between $\mu$ and $\mu_1$.
\end{theorem}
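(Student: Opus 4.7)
The plan is to split the target error into a Monte–Carlo contribution (which produces the $1/\sqrt{n}$ term) and a map–approximation contribution (which produces the $n^{-(\bar\alpha+1)/(4(d'+\bar\alpha+1))}\sqrt{\log n}$ term). Writing $F(x)=\langle T_1(x)-x,T_2(x)-x\rangle$ and $\widehat F(x)=\langle \widehat T_1(x)-x,\widehat T_2(x)-x\rangle$, the triangle inequality gives
\[
\Bigl|\!\int\!F\,d\mu_0-\tfrac{1}{n}\!\!\sum_{i=n+1}^{2n}\!\widehat F(X_i)\Bigr|
\le \Bigl|\!\int\!F\,d\mu_0-\tfrac{1}{n}\!\!\sum_{i=n+1}^{2n}\!F(X_i)\Bigr|
+\tfrac{1}{n}\!\!\sum_{i=n+1}^{2n}\!|F(X_i)-\widehat F(X_i)|.
\]
I will bound the expectation of each summand separately.

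For the Monte–Carlo summand I use the bounded–support assumption \textbf{A4}: the quantities $\|T_i(x)-x\|$ are uniformly bounded on $\Omega$ (Brenier maps between measures supported on $\Omega$ take values in $\Omega$), hence $F$ is bounded and has bounded variance. Since $X_{n+1},\dots,X_{2n}$ are i.i.d.\ from $\mu_0$, Jensen together with the variance calculation for an i.i.d.\ average yields $\mathbb E|\int F\,d\mu_0-\tfrac1n\sum F(X_i)|\le \sqrt{\mathrm{Var}(F)/n}\lesssim 1/\sqrt n$.

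For the map–approximation summand, the key step is the bilinear identity $\langle a,b\rangle-\langle c,d\rangle=\langle a-c,b\rangle+\langle c,b-d\rangle$ applied pointwise with $a=T_1(x)-x$, $b=T_2(x)-x$, $c=\widehat T_1(x)-x$, $d=\widehat T_2(x)-x$. Because $X_{n+1},\dots,X_{2n}$ are independent of the samples used to build $\widehat T_1,\widehat T_2$, I condition on the training samples so that, for the test index $i$,
\[
\mathbb E\bigl[|F(X_i)-\widehat F(X_i)|\,\big|\,\text{train}\bigr]
\le \|T_1-\widehat T_1\|_{L^2(\mu_0)}\|T_2-\mathrm{id}\|_{L^2(\mu_0)}
+\|\widehat T_1-\mathrm{id}\|_{L^2(\mu_0)}\|T_2-\widehat T_2\|_{L^2(\mu_0)},
\]
by Cauchy–Schwarz on each bilinear piece. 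The factors $\|T_j-\mathrm{id}\|_{L^2(\mu_0)}=W_2(\mu_0,\mu_j)$ are $O(1)$ by bounded support, and $\|\widehat T_1-\mathrm{id}\|_{L^2(\mu_0)}\le\|T_1-\mathrm{id}\|_{L^2(\mu_0)}+\|T_1-\widehat T_1\|_{L^2(\mu_0)}$. Taking outer expectation and applying Jensen to pass the square root through, Theorem~\ref{thm:PNW} (with $\epsilon\asymp n^{-1/(d'+\bar\alpha+1)}$) gives $\mathbb E\|T_j-\widehat T_j\|_{L^2(\mu_0)}\lesssim \sqrt{1+I_0(\mu_0,\mu_j)}\,n^{-(\bar\alpha+1)/(4(d'+\bar\alpha+1))}\sqrt{\log n}$, which is exactly the second term in the claim. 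The residual term $\mathbb E[\|T_1-\widehat T_1\|_{L^2(\mu_0)}\|T_2-\widehat T_2\|_{L^2(\mu_0)}]$ bounded via Cauchy–Schwarz in expectation is of higher order and is therefore absorbed.

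The main obstacles are two bookkeeping issues. First, I must verify that the $L^2(\mu_0)$ bound of the Pooladian–Niles–Weed estimator still applies when the test sample $X_{n+1},\dots,X_{2n}$ is genuinely independent of the training sample, which is exactly the setting covered by Theorem~\ref{thm:PNW}, so conditioning makes the argument clean. Second, $\widehat T_1$ and $\widehat T_2$ share the samples $X_1,\dots,X_n$ and hence are dependent; however, the argument above only requires marginal $L^2(\mu_0)$ bounds on $\widehat T_j-T_j$, so no joint control is needed. The only nontrivial technical check is that the implicit constants in Theorem~\ref{thm:PNW} (depending on $|\Omega|$, the density bounds $m,M$, and the regularity constants $l,L,\|\varphi_0^*\|_{\mathcal C^{\alpha+1}}$) apply uniformly to both pairs $(\mu_0,\mu_1)$ and $(\mu_0,\mu_2)$, which is granted by hypothesis \textbf{A4–A6}.
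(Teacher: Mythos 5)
Your proof is correct and follows essentially the same route as the paper: the same split into a Monte--Carlo term (bounded via the variance of the bounded integrand) and a map--approximation term controlled through the $L^2(\mu_0)$ error of the entropic estimator via Theorem~\ref{thm:PNW} and Jensen. The only difference is cosmetic---you apply the bilinear identity directly to $\langle T_1-\id,T_2-\id\rangle-\langle \hat T_1-\id,\hat T_2-\id\rangle$, yielding two terms, whereas the paper first peels off the $\langle \hat T_j-T_j,\,x\rangle$ pieces and then expands $\langle T_1,T_2\rangle-\langle\hat T_1,\hat T_2\rangle$, but both reduce to the same two $L^2$ error quantities with the same constants up to a factor.
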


Before we begin the proof, we remark that by writing $\mu_i,\mu_j$ instead of $\mu_1,\mu_2$, the random variable in the expectation is just $|A_{ij} - \hat{A}_{ij}|$ and therefore this result allows us to control the entry-wise deviations of $\hat{A}_{ij}$ from their true values.

\begin{proof}
    Throughout, unless otherwise noted, the expectation is with respect to all of $X_1,...,X_{2n}, Y_1,...,Y_n$ and $Z_1,...,Z_n$. We also use the notations $X^n$ to denote the set of variables $(X_1,...,X_n)$, and similarly for $Y^n$ and $Z^n$.  
    
    First observe that we can assume without loss of generality that $0 \in \Omega$; see \cite{peyre2020computational} Remark 2.9 for invariance of $T_1,T_2$ under translation. For the translation invariance of $\hat{T}_1,\hat{T}_2$, observe that both obtaining $g_\epsilon$, and evaluating $\hat{T}_1, \hat{T}_2$ at fixed points, requires only the distances $||X_i - Y_j||_2^2 = ||(X_i - t) - (Y_j - t)||_2^2$.  We calculate:
    
    \begin{align*}
        &\mathbb{E}\left [ \left | \int \langle T_1 - \id, T_2 - \id \rangle d\mu_0 - \frac{1}{n} \sum_{i=n+1}^{2n} \langle \hat{T}_1(X_i) - X_i, \hat{T}_2(X_i) - X_i \rangle \right | \right ]  \\
        =& \mathbb{E}\left[ \left | 
            \int \langle T_1 - \id, T_2 - \id \rangle d\mu_0 - \frac{1}{n}\sum_{i=n+1}^{2n} \langle T_1(X_i) - X_i, T_2(X_i) - X_i\rangle \right. \right .\\
            & \hspace{1cm} + \left.\left.\frac{1}{n}\sum_{i=n+1}^{2n} \langle T_1(X_i) - X_i, T_2(X_i) - X_i\rangle 
            - \frac{1}{n} \sum_{i=n+1}^{2n} \langle \hat{T}_1(X_i) - X_i, \hat{T}_2(X_i) - X_i \rangle \right | \right ] \\
        \leq& \mathbb{E}\left[ \left | 
            \int \langle T_1 - \id, T_2 - \id \rangle d\mu_0 - \frac{1}{n}\sum_{i=n+1}^{2n} \langle T_1(X_i) - X_i, T_2(X_i) - X_i\rangle \right| \right ]\\
            & \hspace{1cm} + \mathbb{E}\left [\left|\frac{1}{n}\sum_{i=n+1}^{2n} \langle T_1(X_i) - X_i, T_2(X_i) - X_i\rangle 
            - \frac{1}{n} \sum_{i=n+1}^{2n} \langle \hat{T}_1(X_i) - X_i, \hat{T}_2(X_i) - X_i \rangle \right | \right ] \\
    \end{align*}
    We consider the first and second terms separately. For brevity, let $h$ be the function defined by
    $$h(x) = \langle T_1(x) - x, T_2(x) - x\rangle.$$
    Note that $h$ is bounded on $\Omega$ since 
    $$|h(x)| = |\langle T_1(x) - x, T_2(x) - x\rangle| \leq ||T_1(x) - x||_2||T_2(x) - x||_2 \leq (2|\Omega|)(2|\Omega|) = 4|\Omega|^2,$$
    which implies by Popoviciu's inequality \cite{popoviciu1935equations} that
    $\V[h(X)] \leq 16|\Omega|^4$.
    From this it follows
    \begin{align*}
        &\mathbb{E}\left[ \left | 
            \int \langle T_1 - \id, T_2 - \id \rangle d\mu_0 - \frac{1}{n}\sum_{i=n+1}^{2n} \langle T_1(X_i) - X_i, T_2(X_i) - X_i\rangle \right| \right ] \\
            =& \mathbb{E} \left[\left|\mathbb{E}_{X \sim \mu_0}[h(X)] - \frac{1}{n}\sum_{i=n+1}^{2n} h(X_i)\right|\right] \\
            \leq& \sqrt{\frac{\V[h(X)]}{n}} \leq \sqrt{\frac{16|\Omega|^4}{n}}  \lesssim \frac{1}{\sqrt{n}}
    \end{align*}
    where we have used that for all i.i.d. random variables $U,U_1,...,U_n$ with finite variance 
    $$\mathbb{E}[|\mathbb{E}[U] - \frac{1}{n}\sum_{i=1}^n U_i|] \leq \sqrt{\frac{\V[U]}{n}}.$$
    We now handle the second term:
    \begin{align*}
        &\mathbb{E}\left [\left|\frac{1}{n}\sum_{i=n+1}^{2n} \langle T_1(X_i) - X_i, T_2(X_i) - X_i\rangle 
            - \frac{1}{n} \sum_{i=n+1}^{2n} \langle \hat{T}_1(X_i) - X_i, \hat{T}_2(X_i) - X_i \rangle \right | \right ]  \\
        \leq& \frac{1}{n} \sum_{i=n+1}^{2n} \mathbb{E}\left [ \left | \langle T_1(X_i) - X_i, T_2(X_i) - X_i \rangle 
            - \langle \hat{T}_1(X_i) - X_i, \hat{T}_2(X_i) - X_i \rangle \right |\right ]  \\
        =& \mathbb{E}\left [ \left | \langle T_1(X_{n+1}) - X_{n+1}, T_2(X_{n+1}) - X_{n+1} \rangle 
            - \langle \hat{T}_1(X_{n+1}) - X_{n+1}, \hat{T}_2(X_{n+1}) - X_{n+1} \rangle \right |\right ]  \\
        =& \mathbb{E}\left [\left | \langle T_1(X_{n+1}), T_2(X_{n+1})\rangle - \langle \hat{T}_1(X_{n+1}), \hat{T}_2(X_{n+1})\rangle \right. \right. \\
        &\hspace{3cm} + \left. \left. \langle \hat{T}_1(X_{n+1}) - T_1(X_{n+1}), X_{n+1} \rangle + \langle \hat{T}_2(X_{n+1}) - T_2(X_{n+1}), X_{n+1} \rangle \right| \right ]  \\
        \leq& \mathbb{E}\left [ \left | \langle T_1(X_{n+1}), T_2(X_{n+1})\rangle - \langle \hat{T}_1(X_{n+1}), \hat{T}_2(X_{n+1})\rangle \right | \right ] \\
        & \hspace{3cm} + \mathbb{E}\left [ \left | \langle \hat{T}_1(X_{n+1}) - T_1(X_{n+1}), X_{n+1} \rangle \right | \right ] 
        + \mathbb{E}\left [ \left | \langle \hat{T}_2(X_{n+1}) - T_2(X_{n+1}), X_{n+1} \rangle \right | \right ]
    \end{align*}
    
    Here we can control the three terms separately. We first take care of the middle term and the last term follows by an identical argument.
    \begin{align*}
        &\mathbb{E} \left [ \left | \langle \hat{T}_1(X_{n+1}) - T_1(X_{n+1}), X_{n+1} \rangle \right | \right ] \\
        =& \mathbb{E}_{X^n,Y^n, X_{n+1}} \left [ \left | \langle \hat{T}_1(X_{n+1}) - T_1(X_{n+1}), X_{n+1} \rangle \right | \right ] \\
        \leq& \mathbb{E}_{X^n,Y^n, X_{n+1}} \left [ ||\hat{T}_1(X_{n+1}) - T_1(X_{n+1})||_2||X_{n+1}||_2\right ] & (\text{Cauchy-Schwarz}) \\
        \leq& |\Omega|\mathbb{E}_{X^n,Y^n, X_{n+1}} \left [ ||\hat{T}_1(X_{n+1}) - T_1(X_{n+1})||_2\right ] & (||X_{n+1}|| \leq |\Omega|) \stepcounter{equation}\tag{\theequation}\label{eq:term_copy} \\
        =& |\Omega|\mathbb{E}_{X^n,Y^n, X_{n+1}}\left [ \mathbb{E}_{X_{n+1}} ||\hat{T}_1(X_{n+1}) - T_1(X_{n+1})||_2 \right ] \\
        =& |\Omega|\mathbb{E}_{X^n,Y^n, X_{n+1}}\left [ \mathbb{E}_{X_{n+1}} \sqrt{||\hat{T}_1(X_{n+1}) - T_1(X_{n+1})||_2^2} \right ] \\ 
        \leq& |\Omega|\mathbb{E}_{X^n,Y^n, X_{n+1}}\left [ \sqrt{\mathbb{E}_{X_{n+1}} ||\hat{T}_1(X_{n+1}) - T_1(X_{n+1})||_2^2} \right ] & (\text{Jensen})\\
        =&|\Omega|\mathbb{E}_{X^n,Y^n} \left [ ||\hat{T}_1 - T_1||_{L^2(\mu_0)} \right ] \\
        \leq& |\Omega|\sqrt{\mathbb{E}_{X^n,Y^n}\left [ ||\hat{T}_1 - T_1||_{L^2(\mu_0)}^2 \right ]} \\
        \lesssim& \sqrt{(1+I_0(\mu_0,\mu_1))n^{-\frac{\Bar{\alpha}+ 1}{2(d' + \Bar{\alpha} + 1)}}\log(n)}. & (\text{Theorem \ref{thm:PNW}})
    \end{align*}
    As mentioned above the third term is exactly the same except replacing $T_1$ and $\hat{T}_1$ with $T_2$ and $\hat{T_2}$ as well as $Y^n$ with $Z^n$.

    To control the first term we start with a different trick and then end up following the same pattern as in the bound above.
    \begin{align*}
        &\mathbb{E}\left [ \left | \langle T_1(X_{n+1}), T_2(X_{n+1})\rangle - \langle \hat{T}_1(X_{n+1}), \hat{T}_2(X_{n+1})\rangle \right | \right ] \\
        =& \mathbb{E}\left [ \left | \langle T_1(X_{n+1}) - \hat{T}_1(X_{n+1}), T_2(X_{n+1})\rangle + \langle \hat{T}_1(X_{n+1}), T_2(X_{n+1}) - \hat{T}_2(X_{n+1})\rangle \right | \right ] \\
        \leq& \mathbb{E}\left[ \left | \langle T_1(X_{n+1}) - \hat{T}_1(X_{n+1}), T_2(X_{n+1})\rangle \right | \right] + \mathbb{E}\left[ \left | \langle \hat{T}_1(X_{n+1}), T_2(X_{n+1}) - \hat{T}_2(X_{n+1})\rangle \right | \right] \\
        \leq&  \mathbb{E}\left[ ||T_1(X_{n+1}) - \hat{T}_1(X_{n+1})||_2 ||T_2(X_{n+1})|| \right] + \mathbb{E}\left[ || \hat{T}_1(X_{n+1}) ||_2 ||T_2(X_{n+1}) - \hat{T}_2(X_{n+1})||_2 \right] \\
        \leq& |\Omega| \mathbb{E}\left[ ||T_1(X_{n+1}) - \hat{T}_1(X_{n+1})||_2 \right] + |\Omega|\mathbb{E}\left[ ||T_2(X_{n+1}) - \hat{T}_2(X_{n+1})||_2 \right] \\
        =& |\Omega| \mathbb{E}_{X^n,Y^n,X_{n+1}}\left[ ||T_1(X_{n+1}) - \hat{T}_1(X_{n+1})||_2 \right] + |\Omega|\mathbb{E}_{X^n,Z^n,X_{n+1}}\left[ ||T_2(X_{n+1}) - \hat{T}_2(X_{n+1})||_2 \right]. \stepcounter{equation}\tag{\theequation}\label{eq:two_terms}
    \end{align*}
    Above we have made use of the fact that both $T_2(X_1)$ and $\hat{T_1}(X_1)$ are contained in $\Omega$ which implies that $||T_2(X_1)||_2 \leq |\Omega|$.  $T_2(X_1) \in \Omega$ follows from the fact that $T_2$ is a map from $\mu_0$ to $\mu_2$. To see that $||\hat{T}_1(X_1)|| \leq |\Omega|$, note that from equation (\ref{eq:entropic_map}) $\hat{T}_1(X_1)$ is a convex combination of $Y_1,...,Y_n \in \Omega$ and each of which satisfies $||Y_i||_2 \leq |\Omega|$. 
    
    Observe that both terms in (\ref{eq:two_terms}) have already been controlled, starting from Equation (\ref{eq:term_copy}) (the latter requires replacing $T_1,\hat{T}_1$, and $Y^n$ with $T_2,\hat{T}_2$, and $Z^n$ respectively). Applying the bound derived there and combining terms terms we have the following result
    \begin{align*}
        &\mathbb{E}\left [ \left | \int \langle T_1 - \id, T_2 - \id \rangle d\mu_0 - \frac{1}{n} \sum_{i=n+1}^{2n} \langle \hat{T}_1(X_i) - X_i, \hat{T}_2(X_i) - X_i \rangle \right | \right ] \\
        & \lesssim \frac{1}{\sqrt{n}} + 2\sqrt{(1+I_0(\mu_0,\mu_1))n^{-\frac{\Bar{\alpha}+ 1}{2(d' + \Bar{\alpha} + 1)}}\log(n)} + 2\sqrt{(1+I_0(\mu_0,\mu_2))n^{-\frac{\Bar{\alpha}+ 1}{2(d' + \Bar{\alpha} + 1)}}\log(n)} \\
        &= \frac{1}{\sqrt{n}} + n^{-\frac{\Bar{\alpha}+ 1}{4(d' + \Bar{\alpha} + 1)}}\sqrt{\log n}\left (\sqrt{1+I_0(\mu_0,\mu_1)} + \sqrt{1+I_0(\mu_0,\mu_2)} \right ) \\
        &\lesssim \frac{1}{\sqrt{n}} + n^{-\frac{\Bar{\alpha}+ 1}{4(d' + \Bar{\alpha} + 1)}}\sqrt{\log n}\sqrt{1+I_0(\mu_0,\mu_1)+I_0(\mu_0,\mu_2)}.
    \end{align*}
\end{proof}

\subsection{Proof of Corollary \ref{cor:consistency}}\label{SM:Cor2Proof}

\begin{proof}
Let $B_{n}$ denote the entrywise bound in Theorem \ref{thm:convergence}.  Noting that $\hat{\lambda}^{T}\hat{A}\hatlambda\le \lambda_{*}^{T}\hat{A}\lambda_{*}$ by construction, we estimate 
\begin{align*}
    \E[\hat{\lambda}^{T}A\hatlambda] 
    &= \E\left[\hat{\lambda}^{T}(A-\hat{A})\hatlambda\right]+\E[\hat{\lambda}^{T}\hat{A}\hat{\lambda}] \\
    &\leq \E\left[|\hat{\lambda}^{T}(A-\hat{A})\hatlambda|\right] +\E[\lambda_{*}^{T}\hat{A}\lambda_{*}]\\
    &= \E\left[|\hat{\lambda}^{T}(A-\hat{A})\hatlambda|\right]+\E[\lambda_{*}^{T}(\hat{A}-A)\lambda_{*}]\,\, & (\lambda_*^TA\lambda_* = 0) \\
    &= \E\left[\bigg| \sum_{i,j=1}^p (\hat{\lambda})_{i}(\hat{\lambda})_{j} (A - \hat{A})_{ij}\bigg|\right]+
        \E\left[\bigg| \sum_{i,j=1}^p (\lambda_{*})_{i}(\lambda_{*})_{j} (A - \hat{A})_{ij}\bigg|\right] \\
    &\le \E\left[ \sum_{i,j=1}^p (\hat{\lambda})_{i}(\hat{\lambda})_{j} |A_{ij} - \hat{A}_{ij}|\right] + 
        \E\left[\sum_{i,j=1}^p (\lambda_{*})_{i}(\lambda_{*})_{j} |A_{ij} - \hat{A}_{ij}|\right] & (\hat{\lambda}, \lambda_* \in \Delta^p, \text{ triangle ineq.})\\
    &\le 2\E\left[\sum_{i,j=1}^p |A_{ij} - \hat{A}_{ij}|\right] \\
    &= 2\sum_{i,j=1}^p \E[|A_{ij} - \hat{A}_{ij}|] \\
    & \lesssim 2p^{2}B_{n} & (\text{Theorem \ref{thm:convergence}})
\end{align*}

Since $A$ is positive semidefinite and by assumption $\lambda_{*}\in\Delta^{p}$ satisfies $\lambda_{*}^{T}A\lambda_{*}=0$, it follows that $\lambda_{*}$ is an eigenvector of $A$ with eigenvalue 0.  Let $0<\alpha_{2}\le\dots\le\alpha_{p}$ be the non-zero eigenvalues of $A$ with associated orthonormal eigenvectors $v_{2},\dots,v_{p}$. Orthogonally decompose $\hatlambda=\hat{\beta}\lambda_{*}+\hatlambda_{\perp}$, where $\hat{\beta}\in \mathbb{R}$ and $\hatlambda_{\perp}$ is in the span of $\{v_{2},\dots,v_{p}\}$.  Note that $\hat{\beta}$ and $\hat{\lambda}_{\perp}$ are random.  Then,

\begin{align*}\E[\|\hatlambda-\hat{\beta}\lambda_{*}\|_{2}^2]=&\E[\|\hat{\lambda}_{\perp}\|_{2}^2]\\
=&\E\left[\sum_{i=2}^{p}|v_{i}^{T}\hat{\lambda}_{\perp}|^{2}\right]\\
\le& \frac{1}{\alpha_{2}}\E\left[\sum_{i=2}^{p}\alpha_{i}|v_{i}^{T}\hat{\lambda}_{\perp}|^{2}\right]\\
=&\frac{1}{\alpha_{2}}\E[|(\hat{\lambda}_{\perp})^{T}A\hat{\lambda}_{\perp}|]\\
=&\frac{1}{\alpha_{2}}\E[|\hat{\lambda}^{T}A\hatlambda|]\\
\lesssim & \frac{2 p^{2}}{\alpha_{2}}B_{n}.
\end{align*}

Summing both sides of the equation $\hat{\lambda} = \hat{\beta} \lambda_*  + \hat{\lambda}_{\perp}$ and recalling $\lambda_{*}, \hat{\lambda}\in\Delta^{p}$ yields 
\begin{align*}1 &= \hat{\beta} +\sum_{j=1}^{p}(\hat{\lambda}_\perp)_{j}\\
&\le\hat{\beta}+\|\hat{\lambda}_\perp\|_{1}\\
&\le \hat{\beta}+\sqrt{p}\|\hat{\lambda}_\perp\|_{2},
\end{align*}
which implies that $\mathbb{E}[(1 - \hat{\beta})^2] \leq p \mathbb{E}[\| \hat{\lambda}_\perp\|_2^2] \lesssim \frac{2p^3}{\alpha_2} B_n$.

Finally, we use the fact that $\hatlambda-\hat{\beta}\lambda_{*}=\hat{\lambda}_{\perp}$ and $(\hat{\beta}-1)\lambda_{*}$ are orthogonal to bound: 

\begin{align*}
    \E[\|\hatlambda-\lambda_{*}\|_{2}^{2}] 
    &= \E[\|\hatlambda-\hat{\beta}\lambda_{*}\|_{2}^{2}+\|(\hat{\beta}-1)\lambda_{*}\|_{2}^{2}] \\
    &= \E[\|\hatlambda-\hat{\beta}\lambda_{*}\|_{2}^{2}] + \E[\|(\hat{\beta}-1)\lambda_{*}\|_{2}^{2}]\\
    &\lesssim \frac{2 p^{2}}{\alpha_{2}}B_{n}+\E[(\hat{\beta}-1)^{2}\|\lambda_{*}\|_{2}^{2}]\\
    &\le \frac{2 p^{2}}{\alpha_{2}}B_{n} + \E[(\hat{\beta}-1)^{2}]\\
    &\lesssim \frac{2 p^{2}}{\alpha_{2}}B_{n} + \frac{2p^3}{\alpha_2} B_n
\end{align*}as desired. 

\end{proof}

\subsection{Karcher Means are Barycenters under \textbf{A4}-\textbf{A6}}

Corollary \ref{cor:consistency} (which holds under \textbf{A4}-\textbf{A6}) can be combined with Theorem \ref{thm:main} (which holds under \textbf{A1}-\textbf{A3}) as follows.  Suppose $||G_{\lambda}(\nu)||_{\nu}^2 = 0$ for some $\lambda \in \Delta^p$ and reference measures $\{\mu_i\}_{i=1}^{p}$, and assume that the collection $\{\nu\} \cup \{\mu_i\}_{i=1}^p$ satisfy assumptions \textbf{A4}-\textbf{A6}. We aim to show that this implies that $\nu$ is a barycenter for the measures $\{\mu_{i}\}_{i=1}^{p}$.

Let $\tilde{\mu}_i$ be the measure with density $$d\tilde{\mu}_i(x) = \begin{cases}
    d\mu_i(x) & x \in \Omega^\circ, \\
    0 & \text{otherwise}
\end{cases}$$
where $\Omega^{\circ}$ is the interior of $\Omega$.  Define $\tilde{\nu}$ similarly.

We first remark that 
$$1 = \mu_i[\Omega] = \mu_i[\Omega^\circ] + \mu_i[\partial\Omega] = \Tilde{\mu}_i[\Omega^\circ] + 0 = \Tilde{\mu}_i[\Omega^\circ]$$
and therefore $\Tilde{\mu}_i$ (and analogously $\nu$) are valid measures (note we have used that $\mu_{i}[\partial \Omega]=0$, which follows from $\mu_{i}$ being a.c. and $\partial \Omega$ having Lebesgue measure 0).  It follows from \textbf{A4} that the collection $\{\nu\} \cup \{\mu_i\}_{i=1}^p$ satisfies \textbf{A2} and \textbf{A3}. Furthermore, the set $\Omega^\circ$ is open and convex and therefore \textbf{A1} is also satisfied. Therefore if $\tilde{\nu}$ is a Karcher mean for $\{\tilde{\mu}_i\}$ then it is a barycenter for them as well.

We next show that $\tilde{\nu}$ is a Karcher mean of $\{\tilde{\mu}_i\}_{i=1}^p$ with coordinate $\lambda$. Consider the maps $\tilde{T}_i:\Omega^\circ \rightarrow \Omega^\circ$ which are defined by $\tilde{T}_i(x) = T_i(x)$ and are undefined outside $\Omega^\circ$
where $T_i$ is the optimal transport map from $\nu \rightarrow \mu_i$. We first require that $\tilde{T}_i$ is in fact optimal for $\tilde{\nu}$ and $\tilde{\mu}_i$ and that it is well-defined $\tilde{\nu}$-a.e. 

To show the latter, note that $T_i$ is a map from $\nu$ to $\mu_i$ and therefore $\nu[T_i^{-1}(\partial \Omega)] = \mu_i[\partial \Omega] = 0$ which implies that $T_i(x) \in \Omega^\circ$ for $\nu$-a.e. $x$. Furthermore $\tilde{\nu}[\Omega^\circ] = \nu[\Omega^\circ] = 1$ and therefore the map $\Tilde{T}_i$ is well-defined $\tilde{\nu}$-a.e.

To show that the map $\tilde{T}_i$ is optimal, one only needs to remark that for any measurable $B \subset \Omega$ that $T_i$ transports mass optimally between $B$ and $T_i(B)$ (since otherwise $T_i$ would fail to be optimal). Applying this fact to $\Omega^\circ$ and using the considerations above, shows that $\tilde{T}_i$ is optimal for transporting between $\tilde{\nu}$ and $\tilde{\mu}_i$.

Having established that the $\tilde{T}^i$ are optimal, we can now calculate $||G_\lambda(\tilde{\nu})||_{\tilde{\nu}}^2$:
\begin{align*}
    ||G_\lambda(\tilde{\nu})||_{\tilde{\nu}}^2 &= \sum_{i,j=1}^p \lambda_i\lambda_j \int_{\Omega^\circ} \left \langle \tilde{T}_i(x) - x, \tilde{T}_j(x) - x \right \rangle d\tilde{\nu}(x) \\
    &= \sum_{i,j=1}^p \lambda_i\lambda_j\int_{\Omega^\circ} \left \langle T_i(x) - x, T_j(x) - x \right \rangle d\nu(x) \\
    &= \sum_{i,j=1}^p \lambda_i\lambda_j\int_{\Omega^\circ} \left \langle T_i(x) - x, T_j(x) - x \right \rangle d\nu(x) + \lambda_i\lambda_j \int_{\partial \Omega} \left \langle T_i(x) - x, T_j(x) - x \right \rangle d\nu(x) \\
    &= \sum_{i,j=1}^p \lambda_i\lambda_j \int_{\Omega} \left \langle T_i(x) - x, T_j(x) - x \right \rangle d\nu(x) \\
    &= ||G_\lambda(\nu)||_{\nu}^2 = 0
\end{align*}
which shows that $\tilde{\nu}$ is a Karcher mean for $\{\tilde{\mu}_i\}$, and is also a barycenter of these measures. 
To conclude, we only require the fact that if two measures $\xi,\tilde{\xi} \in \Pac$ differ only on a set of measure zero, then for any $\gamma \in \Pac$ we have $W_2(\xi, \gamma) = W_2(\tilde{\xi}, \gamma)$. Applying this fact repeatedly we have 
\begin{align*}
    \min_{\xi} \frac{1}{2}\sum_{i=1}^n  \lambda_i W_2^2(\xi, \mu_i) &= \min_{\xi} \frac{1}{2}\sum_{i=1}^n  \lambda_i W_2^2(\xi, \tilde{\mu}_i) \\
    &= \frac{1}{2}\sum_{i=1}^n \lambda_i W_2^2(\tilde{\nu}, \tilde{\mu}_i) \\
    &= \frac{1}{2}\sum_{i=1}^n \lambda_i W_2^2(\nu, \tilde{\mu}_i) \\
    &= \frac{1}{2}\sum_{i=1}^n \lambda_i W_2^2(\nu, \mu_i) 
\end{align*}
which shows that $\nu$ is indeed a barycenter of $\mu_i$, as desired.

\section{Covariance Estimation Experiment Details} \label{SM:Covariance}

\subsection{Methods for Maximum Likelihood Estimation}

In order to solve the maximum likelihood estimation problem we differentiate through a truncated version of \citep{chewi2020gradient} Algorithm 1 and perform projected gradient descent. This is implemented using the auto-differentiation library PyTorch. The procedure is summarized in Algorithm \ref{alg:MLE}.
\begin{algorithm} 
\caption{MLE}\label{alg:MLE}
\begin{algorithmic}
\STATE {\bfseries Input:} $\{S_i\}_{i=0}^p$, $\eta > 0$, MaxIters > 0, FPIters > 0, SQIters > 0 
\STATE $i \leftarrow 0, \lambda  \leftarrow (1/p)\bm{1}$
\WHILE{Not Converged and $i < $ MaxIters}
    \STATE $i \leftarrow i + 1$ 
    \STATE $\nabla \mathcal{L}(\lambda) \leftarrow $ BackPropLoss($\{S_j\}_{j=0}^p$, $\lambda$, FPIters, SQIters)
    \STATE $\lambda \leftarrow $ SimplexProject($\lambda - \eta \nabla \mathcal{L}(\lambda)$)
\ENDWHILE
\STATE \textbf{Return} $\lambda$
\end{algorithmic}
\end{algorithm}
Here SimplexProject$(x)$ is defined as 
$$\text{SimplexProject}(x) = \argmin_{y \in \Delta^p} ||x - y||_2.$$
which enforces the constraints on $\lambda$ at each iteration.

To compute $\nabla \mathcal{L}(\lambda)$ we use auto-differentiation to obtain a gradient of the procedure given in Algorithm \ref{alg:cl}. In this procedure the square root of a matrix is computed using SQIters number of Newton-Schulz iterations. The forward pass of the loss computation is given in Algorithm \ref{alg:cl}, and the gradient is obtained by back propagation through it.
\begin{algorithm} 
\caption{ComputeLoss} \label{alg:cl}
\begin{algorithmic}
\STATE {\bfseries Input:} $\{S_i\}_{i=0}^p$, $\lambda$, FPIters > 0, SQIters > 0
\STATE BC $\leftarrow S_0$ 
\FOR{$j = 1,..., $ FPIters}
    \STATE BC\_root = SquareRoot(BC, SQIters) 
    \STATE  BC\_root\_inv = Invert(BC\_root) 
    \STATE BC = BC\_root\_inv $ \left ( \sum_{i=1}^p \lambda_i \text{SquareRoot( BC\_root, SQIters,} S_i \text{  BC\_root } ) \right )$ BC\_root\_inv 
\ENDFOR
\STATE \textbf{Return} Tr($\text{BC}^{-1}S_0$) + log det BC 
\end{algorithmic}
\end{algorithm}

The parameters that we chose in our experiments are given in Table \ref{tab:ag_params}.
\begin{table}[]
\centering
\begin{tabular}{|c|c|}
\hline
Parameter & Value  \\ \hline
$\eta$    & 0.0003 \\ \hline
MaxIters  & 500    \\ \hline
FPIters   & 10     \\ \hline
SQIters   & 10     \\ \hline
\end{tabular}
\caption{\label{tab:ag_params}Parameters used when performing the maximum likelihood estimation.}
\end{table}
These parameters were sufficient to ensure that both the matrix square roots and fixed point iterations converged, and $\lambda$ always converged before the final iteration was reached.

\subsection{Considerations for Instability of MLE}

Due to the requirement of differentiating through a matrix inverse in Algorithm \ref{alg:MLE} it is possible that the procedure may fail to numerically converge. In our trials, this happened in less than 0.5\% of all attempts. When it does happen we discard the result, and these numerical failures impact neither the results of Figure \ref{fig:gaussian}, nor the timings of the trials. 

\section{MNIST Experiment Details}\label{SM:MNIST}

\subsection{The MNIST Dataset} The MNIST Dataset \cite{lecun1998mnist} consists of a collection of $28 \times 28$ black and white images of hand-written digits. The brightness (or intensity) of each pixel is an integer between 0 and 255 with 0 being black and 255 being white. 

Each image is initially a matrix $I^m \in \mathbb{R}^{28\times 28}_+$, which is normalized to sum to one, $P^m_{ij} = I^m_{ij} / (\sum_{i',j'}I_{i',j'})$ and then converted into a point cloud 
$$\mu^m = \sum_{i,j=1}^{28} P^m_{ij}\delta_{(i,j)}.$$

\subsection{White Noise Model}

In order to apply additive noise we must generate white-noise images $\zeta$. We generate $\zeta$ as a random matrix $\mathbb{R}^{28\times 28}$ with each entry $\zeta_{ij} \sim \text{Unif}([0,1])$. This random matrix is then converted into a point cloud using the same process as is used to turn the image matrix into a point cloud.

\subsection{Linear Recovery} \label{sec:linear_rec}

The linear reconstruction in Figure \ref{fig:MNIST_IO} method is done by the following procedure.
\begin{enumerate}
    \item Given a corrupted image (which we will treat as a matrix) $\tilde{\mu}_0$ and a collection of references $\tilde{\mu}_1,...,\tilde{\mu}_p$ (also treated like matrices) estimate $\hat{\lambda}$ as
    $$\hat{\lambda} = \argmin_{\lambda \in \Delta^p} || \tilde{\mu}_0 - \sum_{i=1}^p \lambda_i \tilde{\mu}_i ||_F^2.$$
    This corresponds to projecting $\tilde{\mu}_0$ onto the convex hull of $\tilde{\mu}_1,...,\tilde{\mu}_p$ with respect to the Euclidean (equivalently Frobenius) norm.
    \item Recover $\hat{\mu}_0$ as
    $$\hat{\mu}_0 = \sum_{i=1}^p \hat{\lambda}_i \mu_i.$$
\end{enumerate}

\section{Document Classification Experiment Details} 

\subsection{Datasets} 

\begin{table*}[h!]
\centering
{\begin{tabular}{l | llllll}
\hline
\textbf{Dataset} & Description & B.O.W. Dimension & Average Words & $\mathcal{C}$ \\
\hline
\textbf{BBCSPORT} &BBC sports articles labeled by sport & 13243 & 117 & 5 \\
\textbf{20NEWS} &canonical news article dataset & 29671 & 72 & 20 \\
\hline
\end{tabular}}
\caption{Dataset characteristics.}
\label{tab:accents}
\end{table*}
We show the characteristics of the datasets in Table \ref{tab:accents}.  Bag of words (B.O.W.) dimension is the number of the unique words contained in each dataset. Average words is the average number of the unique words in a document and $\mathcal{C}$ is the number of classes of each of the datasets.

Both the BBCSport and News20 datasets that we used are those made publicly available from \cite{huang2016supervised}. Both datasets consist of two components:
\begin{enumerate}
    \item The datasets employ a word2vec \cite{mikolov2013distributed} embedding to represent the words. Each word is  represented as a point $w$ on the unit sphere in $\mathbb{R}^{300}$ and the semantic relationships between words are encoded in the geometry of the embeddings. Call the set of words $\mathcal{W}$, then the word2vec embedding is the matrix $W \in \mathbb{R}^{|\mathcal{W}| \times 300}$.
    \item Each document $D^m$ with label $y^m$ is represented by a set of tuples $\{(c^m_i,\bm w^m_i)\}_{i=1}^n$, where $n$ is the number unique words in $D^m$, $c^m_i$ is the number of times $\bm w^m_i$ shows in $D^m$.
\end{enumerate}
We use the following procedure to convert a document into an empirical measure.
\begin{enumerate}
    \item Given a document $D^m = \{(c^m_i,\bm w^m_i)\}_{i=1}^n$, let $\bm p^m \in \mathbb{R}^n$ be the probability vector with entries $p^m_i = c^m_i / (\sum_{k} c^m_k)$. 
    \item The document empirical measure is then
    $$\mu^m = \sum_{i} p^m_i \delta_{\bm w^m_i}.$$
    For convenience we filter out the points on which $p^i$ places no support.
\end{enumerate}

\subsection{Procedure for Making Figure \ref{fig:NLP}} \label{sec:NLP_figs}

As discussed in the main body, for $k \geq 2$ we select $k$ documents from each topic and 100 test documents, and then repeat 50 times and average. In both plots in Figure \ref{fig:NLP} we iterate over different choices of $k$. To reduce computational load we reuse the test sets and reference documents within each trial across choices of $k$. For example, all of the documents for $k=4$ are used for $k \geq 5$. This allows the Wasserstein distances and estimates $\hat{A}_{ij}$ to be cached avoiding further computation.

We note that this procedure produces plots in which each of the points individually has the same sampling pattern as if there was no re-use, it is only across samples that there is correlation. However, this correlation is shared across the prediction methods consider which preserves the fairness of the comparison of the methods.

\section{Possible Rank Deficiency of \textit{A}} \label{sec:rank_def}

In general, there may exist a set $\Lambda \subset \Delta^p$ such that $|\Lambda| > 1$ and for all $\lambda \in \Lambda$ we have $\mu_0 = \nu_{\lambda}$. Indeed, suppose there are index sets $I_1, I_2$ such that $I_1 \cap I_2 = \varnothing$ and $I_{1}\cup I_{2} = \{1,2,\dots, p\}$ but $\bary(\{\mu_{i}\}_{i\in I_{1}}) \cap \bary(\{\mu_{i}\}_{i\in I_{2}}) \neq \varnothing$.  If $\mu_0 \in \bary(\{\mu_{i}\}_{i\in I_{1}}) \cap \bary(\{\mu_{i}\}_{i\in I_{2}})$ then it can be expressed as $\nu_{\lambda^1} \in \bary(\{\mu_{i}\}_{i\in I_{1}})$ and $\nu_{\lambda^2} \in \bary(\{\mu_{i}\}_{i\in I_{2}}).$  Note that we may interpret $\lambda^{1}$ and $\lambda^{2}$ as vectors in $\Delta^{p}$ with supports on $I_{1}, I_{2}$, respectively.  Now let $\alpha \in [0,1]$ and let $\lambda = (1-\alpha)\lambda^1 + \alpha\lambda^2$ (with the indices modified as needed). Then we have
$$
\argmin_{\nu \in \Pac} \sum_{i=1}^p \frac{\lambda_i}{2} W_2^2(\nu, \mu_i) = \argmin_{\nu \in \Pac} (1-\alpha)\left ( \sum_{i \in I_1} \frac{\lambda^1_i}{2} W_2^2(\nu, \mu_i)\right ) + \alpha\left ( \sum_{i \in I_2} \frac{\lambda^2_i}{2} W_2^2(\nu, \mu_i)\right ) = \mu_0
$$
where the last equality follows from the fact that if $x$ minimizes both $f(x)$ and $g(x)$ then it must also minimize $\alpha f(x) + (1-\alpha) g(x)$ for $\alpha \in [0,1]$, and the assumption that $\mu_0$ is the barycenter. This demonstrates that $\mu_0$ actually corresponds to all $\lambda = \lambda = (1-\alpha)\lambda^1 + \alpha\lambda^2$ for all $\alpha \in [0,1]$. 

Below, we will give a geometric characterization for the rank deficiency of $A$ and how that relates to the optimality of the quadratic objective in Theorem \ref{thm:main}. We first recall the quadratic problem over the probability simplex: $\min_{\lambda\in\Delta^{p}} \lambda^TA\lambda$.  Since $A$ is a symmetric and positive semidefinite matrix, a Rayleigh quotient analysis yields that the minimum possible value of the objective without simplex constraints is zero, which would be attained at an eigenvector corresponding to a zero eigenvalue. Given this fact, finding an exact solution to the above quadratic problem amounts to determining whether the probability simplex intersects the eigenspace of $A$ associated with the zero eigenvalue. Hereafter, we will denote this space by $E_{0}$. 

\begin{proposition}\label{prop:NoSol}
If $E_{0}\cap \Delta^{p} =\emptyset$, there is no $\lambda \in \Delta^{p}$ such that $\lambda^TA\lambda=0$. 
\end{proposition}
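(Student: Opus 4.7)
The plan is to prove Proposition \ref{prop:NoSol} by contrapositive, leveraging the fact (established in Proposition \ref{prop:Prop1}) that $A$ is symmetric positive semidefinite. The essential observation is that for a PSD matrix, the quadratic form $\lambda^T A \lambda$ vanishes \emph{only} on the null space of $A$, which for a symmetric PSD matrix coincides with the zero eigenspace $E_0$. So if any $\lambda \in \Delta^p$ achieves $\lambda^T A \lambda = 0$, it must lie in $E_0 \cap \Delta^p$, contradicting the hypothesis.

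Concretely, I would argue as follows. Suppose for contradiction that there exists $\lambda \in \Delta^p$ with $\lambda^T A \lambda = 0$. By Proposition \ref{prop:Prop1}, $A$ is symmetric positive semidefinite, so we may diagonalize $A = U \Sigma U^T$ with $U$ orthogonal and $\Sigma = \mathrm{diag}(\sigma_1, \dots, \sigma_p)$ where $\sigma_i \geq 0$. Setting $y = U^T \lambda$, the quadratic form becomes $\lambda^T A \lambda = \sum_{i=1}^p \sigma_i y_i^2 = 0$. Since every summand is nonnegative, each must vanish, so $y_i = 0$ whenever $\sigma_i > 0$. Thus $y$ is supported on the indices corresponding to zero eigenvalues, which means $\lambda = U y$ lies in $E_0$. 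Combined with $\lambda \in \Delta^p$, this yields $\lambda \in E_0 \cap \Delta^p$, contradicting the assumption $E_0 \cap \Delta^p = \emptyset$.

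Alternatively, one can give a coordinate-free version by writing $A = B^T B$ via a Cholesky-type factorization (possible since $A$ is PSD), so that $\lambda^T A \lambda = \|B\lambda\|_2^2$; vanishing of the norm forces $B\lambda = 0$, hence $A\lambda = B^T B \lambda = 0$, i.e., $\lambda \in \ker(A) = E_0$. Either route gives the same conclusion in a line or two. There is no real obstacle here: the proposition is essentially a restatement of the characterization of the null space of a PSD form, packaged in the geometric language of the simplex. The only subtlety is making explicit the identification of $E_0$ (the zero eigenspace) with $\ker(A)$, which holds because $A$ is symmetric and therefore orthogonally diagonalizable.
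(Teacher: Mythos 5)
Your proof is correct and follows essentially the same route as the paper's: the paper simply asserts that any $\lambda \in \Delta^p$ with $\lambda^T A \lambda = 0$ must lie in the zero eigenspace of $A$, and you have filled in the (standard) justification of that fact via the spectral decomposition of the PSD matrix $A$. No gaps; your version is just a more explicit rendering of the same one-line argument.
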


\begin{proof}
   Any $\lambda\in\Delta^{p}$ realizing $\lambda^TA\lambda=0$ is an eigenvector corresponding to the zero eigenvalue.  The result follows. 
\end{proof}
Proposition \ref{prop:NoSol} shows in particular that exact minimization is only possible when $\textrm{rank}(A)<p$. Next, we state conditions under which there is a unique solution to the minimization program.
\begin{proposition}
If $\textrm{rank}(A) = p-1$ and $E_{0}\cap \Delta^{p} \neq \emptyset$, then there is a unique $\lambda \in \Delta^{p}$ such that $\lambda^TA\lambda=0$. 
\end{proposition}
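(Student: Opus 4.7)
The plan is to reduce uniqueness on the simplex to the uniqueness of intersecting a line with an affine hyperplane, exploiting positive semidefiniteness to equate the zero set of the quadratic form with the nullspace of $A$.

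First I would record the basic equivalence: since $A$ is symmetric positive semidefinite, for any $\lambda\in\mathbb{R}^p$ one has $\lambda^{T}A\lambda=0$ if and only if $A\lambda=0$, i.e.\ $\lambda\in E_{0}$. This can be seen by writing $A=B^{T}B$, so that $\lambda^{T}A\lambda=\|B\lambda\|_{2}^{2}$. Consequently the set of minimizers in $\Delta^{p}$ of the quadratic form is exactly $E_{0}\cap \Delta^{p}$, which by assumption is nonempty.

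Next I would use the rank hypothesis. Since $\mathrm{rank}(A)=p-1$ and $A$ is a $p\times p$ symmetric matrix, the nullspace $E_{0}$ has dimension exactly one, so $E_{0}=\mathrm{span}\{v\}$ for some nonzero $v\in\mathbb{R}^{p}$. Any candidate solution $\lambda \in E_{0}\cap\Delta^{p}$ must therefore take the form $\lambda=\alpha v$ for some scalar $\alpha\in\mathbb{R}$. The defining affine constraint of the simplex, $\sum_{i=1}^{p}\lambda_{i}=1$, becomes $\alpha \sum_{i=1}^{p}v_{i}=1$. The assumption $E_{0}\cap\Delta^{p}\neq\emptyset$ guarantees $\sum_{i=1}^{p}v_{i}\neq 0$ (otherwise no scalar multiple of $v$ could satisfy the sum constraint), so the equation uniquely determines $\alpha=1/\sum_{i=1}^{p}v_{i}$, and hence uniquely determines $\lambda$. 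Combined with the nonemptiness hypothesis, this single point is in $\Delta^{p}$, completing the argument.

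There is no substantive obstacle here: the result is essentially the statement that a one-dimensional subspace can meet the affine hyperplane $\{x:\sum x_{i}=1\}$ in at most one point. The only subtlety worth flagging is that the assumed nonemptiness of $E_{0}\cap\Delta^{p}$ is what rules out the degenerate case $\sum_{i}v_{i}=0$ in which the line would be parallel to the simplex hyperplane and either miss it entirely or lie inside a translate of it; the rest is immediate from the rank-one nullspace and the PSD equivalence above.
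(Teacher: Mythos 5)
Your proof is correct and follows essentially the same route as the paper's: identify the zero set of the quadratic form with the one-dimensional nullspace $E_0=\mathrm{span}\{v\}$ (via positive semidefiniteness) and observe that the affine constraint $\sum_i\lambda_i=1$ pins down the scalar multiple uniquely. You simply fill in details the paper leaves implicit, such as the PSD equivalence $\lambda^TA\lambda=0\iff A\lambda=0$ and the observation that nonemptiness rules out $\sum_i v_i=0$.
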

\begin{proof}
    If $\textrm{rank}(A) = p-1$, $E_{0}=\{\alpha v \ | \ \alpha\in\mathbb{R}\}$ for an eigenvector $v\neq 0$ corresponding to the zero eigenvalue. Let $\lambda\in E_{0}\cap \Delta^{p}$. Then $\lambda=\alpha v$ for some $\alpha\in\mathbb{R}$.  The constraint that $\lambda\in\Delta^{p}$ determines $\alpha$ uniquely, which gives the result. 
    \end{proof}

Note that if $E_{0}$ non-trivially intersects the probability simplex and $\textrm{rank}(A)=p-1$, the condition for uniqueness is equivalent to the statement that the origin can be written as a convex combination of the columns of $A$. We next consider the remaining case when $\textrm{rank}(A)<p-1$; for simplicity of exposition, we focus on the specific case $\textrm{rank}(A)=p-2$. 
\begin{proposition}
Let $\textrm{rank} (A)=p-2$ and $v_{0}$ and $v_{1}$ be distinct eigenvectors corresponding to the zero eigenvalue of $A$. If $v_{0}\in \Delta^{p}$
and $v_{1}\in \Delta^{p}$, then there are infinitely many solutions to $\lambda^TA\lambda=0$ with $\lambda \in \Delta^{p}$. 
\end{proposition}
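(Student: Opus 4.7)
The plan is to exploit two facts: that $A$ is positive semidefinite (established already by Proposition~\ref{prop:Prop1}, where $A$ appears as a Gram matrix in $T_{\mu_0}\Pac$), and that both $E_0$ and $\Delta^p$ are convex. Consequently, the solution set to $\lambda^T A\lambda =0$ inside $\Delta^p$ coincides with the convex set $E_0\cap\Delta^p$, so any two distinct solutions immediately produce a whole line segment of solutions.

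First I would record the standard identification: since $A\succeq 0$, we may write $A=B^TB$ for some matrix $B$, whence $\lambda^TA\lambda=\|B\lambda\|_2^2$. Therefore $\lambda^TA\lambda=0$ if and only if $B\lambda=0$, equivalently $A\lambda=0$, equivalently $\lambda\in E_0$. This reduces the proposition to showing that $E_0\cap\Delta^p$ is infinite under the stated hypotheses.

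Next I would use convexity. By assumption $v_0,v_1\in E_0\cap\Delta^p$ and $v_0\neq v_1$. For any $t\in[0,1]$, the point $\lambda_t\triangleq(1-t)v_0+tv_1$ lies in $E_0$ because $E_0$ is a linear subspace, and it lies in $\Delta^p$ because $\Delta^p$ is convex. Hence the entire segment $\{\lambda_t:t\in[0,1]\}\subset E_0\cap\Delta^p$, and since $v_0\neq v_1$ this family is uncountable. Applying the first step, every such $\lambda_t$ satisfies $\lambda_t^TA\lambda_t=0$, giving infinitely many minimizers.

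There is essentially no obstacle here beyond verifying that the rank hypothesis $\textrm{rank}(A)=p-2$ is consistent with the existence of two linearly independent elements of $E_0\cap\Delta^p$ (which is exactly why the hypothesis is stated at rank $p-2$ rather than $p-1$); in fact the argument as written never uses the rank condition beyond ensuring that $\dim E_0\ge 2$, so it also shows that whenever $E_0\cap\Delta^p$ contains two distinct points the minimizer is non-unique, mirroring the geometric intuition discussed in the paragraph preceding Proposition~\ref{prop:NoSol} about redundancies in $\bary(\{\mu_i\}_{i=1}^p)$.
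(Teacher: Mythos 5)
Your proposal is correct and follows essentially the same route as the paper: both arguments take convex combinations $(1-t)v_0 + t v_1$, observe they remain in $E_0$ (by linearity) and in $\Delta^p$ (by convexity), and conclude there are uncountably many zeros of the quadratic form. Your added remark that $\lambda^T A\lambda = 0 \iff A\lambda = 0$ for $A\succeq 0$ merely makes explicit a step the paper leaves implicit, and your observation that the rank hypothesis is used only to ensure $\dim E_0 \ge 2$ is accurate.
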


\begin{proof}
    By construction, $Av_{0}=0$ and $Av_{1}=0$. Let $\alpha\in [0,1]$ and define $v_{\alpha} = \alpha v_{0}+(1-\alpha)v_{1}$. Since $v_{1}$ and $v_{2}$ are in $\Delta^{p}$, it follows that $v_{\alpha}\in \Delta^{p}$. In addition, $Av_{\alpha}=\alpha Av_{1}+ (1-\alpha)Av_{2}=0$ so that $v_{\alpha}\in E_{0}$. Since $\alpha\in [0,1]$ was arbitrary, the result follows. \end{proof}
The main conclusion from the above Propositions is the following: if $\textrm{rank}(A)<p-1$, the only way to mandate a unique solution is to require that only one eigenvector corresponding to the zero eigenvalue intersects with probability simplex. By convex geometry, if two eigenvectors corresponding to the zero eigenvalue intersect the probability simplex, there will be infinitely many solutions to the quadratic program. 

\section{Example of a Karcher Mean that is not a Barycenter}
\label{sec:karcher_not_bc}
As mentioned in Section \ref{sec:Theory}, special care must be made to ensure that any Karcher mean is also a barycenter. In this section we give a simple example which illustrates this point.

Consider $S^1 = \{x \in \mathbb{R}^2 : ||x||_2 = 1 \}$ with distance given by $\rho(x,y) = \cos^{-1} \langle x, y \rangle$. Let $z_1 \neq z_2 \in S^1$ with $z_1 \neq -z_2$, and $\lambda = (1/2,1/2)$. Then the barycenter is given by $$x_{\lambda} = \frac{z_1 + z_2}{||z_1 + z_2||_2}.$$
However, the point $-\frac{z_1 + z_2}{||z_1 + z_2||_2}$ is not the barycenter but it is a Karcher mean. This is illustrated in Figure \ref{fig:karcher_not_bc}.  To show this rigorously one need only compute the Euclidean gradient and demonstrate that it is orthogonal to the tangent space.

\makeatletter
\setlength{\@fptop}{10pt}
\makeatother

\begin{figure}[t]
    \centering
    \includegraphics[width=0.3\linewidth]{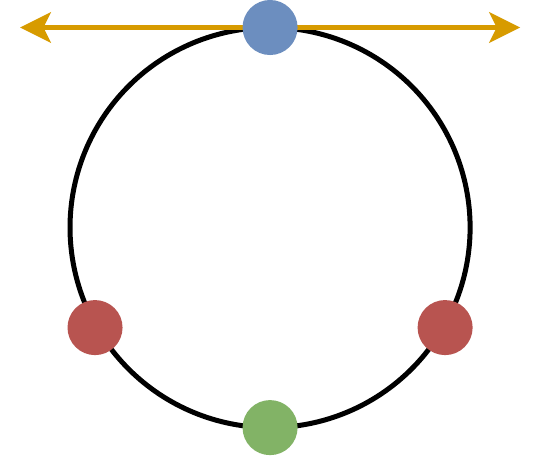}
    \caption{\emph{Red}: The points $z_{1},z_2$. \emph{Green}: The true barycenter $x_{\lambda}$. \emph{Blue}: A Karcher mean which is not the barycenter. \emph{Gold}: The tangent space at the blue point.}
    \label{fig:karcher_not_bc}
\end{figure}

\end{document}